\theoremstyle{plain}
\newtheorem{theorem}{Theorem}[section]
\newtheorem{lemma}[theorem]{Lemma}
\theoremstyle{definition}
\newtheorem{definition}[theorem]{Definition}
\newtheorem{assumption}[theorem]{Assumption}
\newtheorem{example}[theorem]{Example}
\theoremstyle{remark}
\newtheorem{remark}[theorem]{Remark}
\DeclareMathOperator{\id}{id}
\newcommand{\R}{\mathbb{R}}
\title{GeoHNNs: Geometric Hamiltonian Neural Networks}
\author{
Amine Mohamed Aboussalah\thanks{Equal Contribution.} \\
NYU Tandon School of Engineering \\
\texttt{ama10288@nyu.edu}
    \And
Abdessalam Ed-dib$^*$
\\
NYU Tandon School of Engineering 
\\
\texttt{ae2842@nyu.edu}
}
\begin{document}
\maketitle
\begin{abstract}
The fundamental laws of physics are intrinsically geometric, dictating the evolution of systems through principles of symmetry and conservation. While modern machine learning offers powerful tools for modeling complex dynamics from data, common methods often ignore this underlying geometric fabric. Physics-informed neural networks, for instance, can violate fundamental physical principles, leading to predictions that are unstable over long periods, particularly for high-dimensional and chaotic systems. Here, we introduce \textit{Geometric Hamiltonian Neural Networks (GeoHNN)}, a framework that learns dynamics by explicitly encoding the geometric priors inherent to physical laws. Our approach enforces two fundamental structures: the Riemannian geometry of inertia, by parameterizing inertia matrices in their natural mathematical space of symmetric positive-definite matrices, and the symplectic geometry of phase space, using a constrained autoencoder to ensure the preservation of phase space volume in a reduced latent space. We demonstrate through experiments on systems ranging from coupled oscillators to high-dimensional deformable objects that GeoHNN significantly outperforms existing models. It achieves superior long-term stability, accuracy, and energy conservation, confirming that embedding the geometry of physics is not just a theoretical appeal but a practical necessity for creating robust and generalizable models of the physical world.
\end{abstract}
\section{Introduction}

Many physical systems are described by differential equations whose solutions obey conservation laws arising from underlying symmetries. According to Noether’s theorem \cite{Noether1918}, every continuous symmetry of a physical system corresponds to a conserved quantity. For instance, time invariance implies conservation of energy, whereas spatial and rotational invariance correspond to conservation of linear and angular momentum, respectively. These laws can be interpreted as geometric priors, reflecting invariances in space and time that govern the dynamics of the system.

In Hamiltonian mechanics \cite{goldstein:mechanics}, the state of a physical system is described by a point in the phase space. This phase space encodes the system’s instantaneous position and momentum, and trajectories in this space represent the temporal evolution of the dynamical system. For a closed dynamical system, this evolution follows specific rules, called the Hamilton’s equations, that ensure the dynamics of the system preserves the conservation of energy. Indeed, the evolution of the system’s dynamics does not occur arbitrarily within the phase space but is constrained by an underlying geometry called “symplectic geometry.” In simple terms, a symplectic space is a space equipped with a symplectic form, which provides a way to measure oriented areas spanned by pairs of vectors \cite{arnold1989mathematical}. One key consequence of this geometry is expressed by Liouville’s theorem \cite{Liouville1838}, which asserts that the total volume occupied by a group of nearby states\footnote{The state of a Hamiltonian system is given by a pair consisting of its position and momentum.} of a Hamiltonian system in phase space remains constant over time. Put simply, as the system evolves, the “spread” or uncertainty of possible states does not shrink or expand, that is the phase space volume is conserved.

That said, capturing these geometry-driven dynamics directly from data is difficult. Models, such as multilayer perceptrons \cite{lecun2015deep}, typically lack the inductive biases required to capture the geometric constraints of the underlying system \cite{need_for_biases, inductive_biases}. Therefore, they often violate basic physical principles, leading to unrealistic predictions, poor long-term stability, and limited generalization capabilities. 

To address this, recent approaches have embedded physical laws directly into machine learning models \cite{governingequationsdiscovery, hamiltonian_neural_networks, deepleangragiaannetworks, lnns,  symplecticrecurrentneuralnetworks, simple_hnn_lnn, generalizedhamiltoninas}. In particular, HNNs \cite{hamiltonian_neural_networks} have shown success by parameterizing the Hamiltonian function as a neural network and deriving dynamics via Hamilton’s equations encoded as a cost function. These methods enforce conservation of energy leading to improved performance on tasks such as trajectory prediction in simple systems, like a mass spring or a single pendulum. However, the effectiveness of HNNs is considerably reduced in complex dynamical systems. For instance, in high-dimensional or chaotic systems, such as the three-body problem, HNNs often exhibit energy drift, degraded long-term prediction accuracy, and limited generalization.

These shortcomings arise from several factors: (i) the difficulty of learning accurate Hamiltonians in high-dimensional phase spaces; (ii) the accumulation of numerical errors due to approximate integration schemes; and (iii) the inability to fully preserve key geometric properties, such as the symplecticity of the phase space.

A promising approach to handle the complexity of high-dimensional Hamiltonian systems is model order reduction (MOR) \cite{model_order_reduction, buchfink2024modelreductionmanifoldsdifferential}, which approximates the original dynamics using lower-dimensional representation, also called a “latent space.” The main goal is to find a latent space where the dynamics of the system is more tractable, learn the reduced dynamics in that latent space, and then lift (reconstruct) the solution back to the original phase space.

Traditional MOR methods, like Proper Orthogonal Decomposition (POD) \cite{berkooz_pod}, use linear projections to reduce dimensionality by capturing the most important features of the data. In our context, the “data” refers to the time evolution of the system’s state, specifically its positions and momenta over time. However, POD does not guarantee the preservation of the symplectic geometry that is essential for correctly modeling Hamiltonian dynamics. Thus, the reduced models may lose important physical properties, also referred to as features of the dynamical system in machine learning. To overcome this, methods such as Proper Symplectic Decomposition (PSD) \cite{peng2015symplecticmodelreductionhamiltonian} have been developed to ensure that the reduced representation maintains the symplectic form. But PSD still relies on linear assumptions, which limits its ability to capture more complex, nonlinear system behavior, such as robot arm manipulation with flexible joints, and legged locomotion involving dynamic contact and balance. More recently, nonlinear techniques based on neural networks, such as autoencoders \cite{bank2021autoencoders, côte2024hamiltonianreductionusingconvolutional, venkat2021convolutionalautoencodersreducedordermodeling}, have been used to learn more flexible latent spaces. Although these methods can capture nonlinear features, vanilla autoencoders do not necessarily preserve the geometric constraints such as the symplecticity of the latent space. Without these constraints, the reduced models do not guarantee the preservation of symmetries of the system, leading potentially to inaccurate or unstable predictions.

In this work, we introduce a principled framework for learning reduced-order Hamiltonian dynamics that explicitly respects the underlying geometry of physical systems. To the best of our knowledge, this framework is the first to learn reduced Hamiltonian dynamics by jointly incorporating two complementary geometric priors: (i) the symplectic geometry of phase space, which we preserve by using a constrained symplectic autoencoder designed to enforce symplecticity in the latent space , and (ii) the Riemannian geometry underlying physical quantities, such as inertia matrices common in rigid body systems found in robotics, that lie on the space of symmetric positive definite (SPD) matrices \cite{goldstein:mechanics}. Our key contributions are threefold: 
\begin{enumerate}
    \item Formulating a theoretical framework that rigorously formalizes how incorporating geometric constraints improves the learning and modeling of reduced Hamiltonian dynamics.
    \item Developing a neural network model that effectively incorporates symplectic and Riemannian geometric priors by combining a constrained symplectic autoencoder with SPD-constrained parameterizations of inertia matrices to learn reduced Hamiltonian dynamics.
    \item Conducting comprehensive experiments that demonstrate better generalization, improved accuracy, long-term stability, and adherence to physical laws such as energy conservation across a variety of physical systems with differing complexity, degrees of freedom, and dynamics (both linear and nonlinear). 
\end{enumerate}

\section{Background}
In this section, we briefly review the mathematical background and prior work relevant to our approach. We begin with key concepts from differential geometry, which provide the foundation for modeling physical systems with geometric structure. By geometric structure, we mean the set of symmetries (invariances) and constraints that govern the evolution of these systems. We then review the Lagrangian and Hamiltonian formulations of classical mechanics, which formalize the dynamics of conservative systems. For example, these approaches can be used to model the movement of a robotic arm to understand how its joints and links move and interact. Finally, we discuss MOR techniques and their challenges in preserving the geometry of physical systems.

\subsection{An Overview of Differential Geometry}
Differential geometry \cite{Isham1989ModernDG,lee_smooth_manifolds} provides the mathematical framework for describing the geometric structures that govern physical systems.
Many physical systems do not evolve in flat Euclidean space, but instead on more general spaces called \emph{manifolds}.
For example, the possible positions of a robotic arm with revolute joints lie on the torus $\mathbb{T}^n = (S^1)^n$, where $n$ is the number of joints. 
More generally, the set of possible positions and orientations of a rigid body in 3D space is the Lie group $\mathrm{SE}(3)$, which accounts for both translation and rotation. 
Both the torus $\mathbb{T}^n$ and $\mathrm{SE}(3)$ are examples of manifolds.  These examples illustrate that accurately capturing the geometry and constraints of real-world systems requires modeling dynamics on appropriate manifolds.

A manifold $\mathcal{M}$ is a topological space that is locally Euclidean: around every point $p \in \mathcal{M}$, there exists a neighborhood that can be mapped homeomorphically to an open subset of $\mathbb{R}^n$.
These mappings are called \emph{charts}, and a collection of charts that covers the manifold forms an \emph{atlas}.
If the coordinate transitions between overlapping charts are smooth (i.e., differentiable as many times as needed), the manifold is called a \emph{smooth manifold}. 

This smooth structure allows us to perform differential calculus on $\mathcal{M}$, such as defining tangent vectors (velocities), vector fields (e.g., fluid velocity, force, acceleration, or control fields), and differential equations on manifolds~\cite{tu2020differential}. At each point $p \in \mathcal{M}$, the \emph{tangent space} $T_p\mathcal{M}$ is the vector space consisting of the tangent vectors to all smooth curves on $\mathcal{M}$ that pass through $p$. 
Intuitively, it represents the space of all possible velocity directions at $p$. The disjoint union of all tangent spaces defines the \emph{tangent bundle} $T\mathcal{M}$, which itself forms a smooth manifold of dimension $2n$ when $\mathcal{M}$ has dimension $n$, since each point in \( T\mathcal{M} \) pairs an \( n \)-dimensional position with an \( n \)-dimensional velocity. Moreover, each of these tangent spaces admits a natural dual space, which is called the \emph{cotangent space} $T^*_p\mathcal{M}$. It represents the space of linear maps from $T_p\mathcal{M}$ to $\mathbb{R}$ (i.e. linear forms). The \emph{cotangent bundle} $T^*\mathcal{M}$ is the disjoint union of these cotangent spaces, which is also a manifold of dimension $2n$.

To describe dynamics on a manifold, we use a \emph{vector field} $X : \mathcal{M} \to T\mathcal{M}$, which maps each point on the manifold to one of its tangent vectors. The space of all smooth vector fields is denoted $\mathfrak{X}(\mathcal{M})$, and an initial value problem (IVP) can be formulated: the integral curves $\gamma(t)$ of $X$ satisfy the differential equation
\[
\dot{\gamma}(t) = X(\gamma(t)),
\]
meaning that the curve's velocity at each point agrees with the assigned vector field. These integral curves describe the trajectories of physical systems evolving on $\mathcal{M}$, and can be written in integral form as follows:

\[
\gamma(t) = \gamma(0) + \int_0^t X(\gamma(s))\, ds.
\]

For more rigorous definitions and formal properties, please refer to the appendix.

\subsection{Configuration Manifold and Lagrangian Formulation}
In this work, we adopt the notation $\mathcal{Q}$ instead of $\mathcal{M}$ to denote the configuration manifold, which is standard in mechanics. 
The configuration manifold $\mathcal{Q}$ is an $n$-dimensional smooth manifold parameterized by the generalized coordinates of the physical system. 
Generalized coordinates are variables that uniquely define the system's configuration. 
A configuration refers to the set of degrees of freedom required to specify its position and orientation completely. For example, the angle of a single pendulum relative to the vertical axis is a generalized coordinate. 

More generally, each point $q \in \mathcal{Q}$ corresponds to a unique configuration, expressed as $q = (q_1, q_2, \dots, q_n) \in \mathbb{R}^n$. The configuration alone cannot describe the evolution of the system; velocity is also needed. 
This is because the future behavior depends on both where the system is (position $q$) and how it is moving at that moment (velocity $\dot{q}$). Mathematically, the state of the system is represented as a point in the tangent bundle $T\mathcal{Q}$, where each element $(q, \dot{q}) \in T\mathcal{Q}$ represents a position-velocity pair, which we refer to as the ``state'' of the system. 

The dynamics of the system is then derived from the Lagrangian $L : T\mathcal{Q} \rightarrow \mathbb{R}$, typically defined on $T\mathcal{Q}$ as the difference between kinetic and potential energy:
\begin{equation}
    \label{eq: lagrangian_form}
    L(q, \dot{q}) = T(q, \dot{q}) - V(q),
\end{equation}

where the kinetic energy is $T(q, \dot{q}) = \frac{1}{2} \dot{q}^\top M(q) \dot{q}$, with $M(q)$ the inertia matrix, and $V(q)$ the potential energy.

To obtain the equations of motion, we apply the ``principle of least action'' \cite{Lanczos1970}, a variational principle stating that the physical trajectory $\gamma$ of the system extremizes the action functional, defined as the time integral of the Lagrangian between two time steps $t_0$ and $t_f$:
\[
\mathcal{S}[\gamma] = \int_{t_0}^{t_f} L(\gamma(t), \dot{\gamma}(t)) \, dt.
\]
Formally, the system's trajectory $\gamma : [t_0, t_f] \to \mathcal{Q}$ is obtained by solving the optimization problem
\[
\min_{\gamma} \mathcal{S}[\gamma] = \min_{\gamma} \int_{t_0}^{t_f} L(\gamma(t), \dot{\gamma}(t)) \, dt,
\]
subject to suitable boundary conditions on $\gamma(t_0)$ and $\gamma(t_f)$.

This leads to the Euler--Lagrange equations \cite{goldstein:mechanics}, which describe how a point $(q, \dot{q})$ on $T\mathcal{Q}$ evolve over time in the absence of external forces:
\[
\frac{d}{dt} \frac{\partial L}{\partial \dot{q}} - \frac{\partial L}{\partial q} = 0.
\]
In the presence of external generalized forces $F(q, \dot{q}) \in \mathbb{R}^n$ acting on the system, the motion is governed by the forced Euler--Lagrange equations:
\[
\frac{d}{dt} \frac{\partial L}{\partial \dot{q}} - \frac{\partial L}{\partial q} = F(q, \dot{q}).
\]
Expanding this, one obtains a second-order differential equation expressing acceleration explicitly:
\[
M(q) \ddot{q} + C(q, \dot{q}) \dot{q} + \nabla V(q) = F(q, \dot{q}),
\]
where:
\begin{itemize}
    \item $\ddot{q}$ is the acceleration (second derivative of generalized coordinates),
    \item $C(q, \dot{q})$ contains Coriolis and centrifugal terms arising from the inertia matrix,
    \item $\nabla V(q)$ is the gradient of the potential energy with respect to the generalized coordinates.
\end{itemize}

Recent studies have leveraged this Lagrangian formulation as an inductive bias for learning physically plausible dynamics.  Deep Lagrangian Networks \cite{deepleangragiaannetworks} incorporate the Euler--Lagrange equations into the learning objective (cost function) by minimizing the mean squared error between the predicted and ground truth generalized forces. To achieve this, the total force is decomposed into physically interpretable components: inertial, Coriolis, and gravitational terms, each parameterized by neural networks. However, their formulation assumes a quadratic kinetic energy, which restricts applicability to rigid-body systems, such as those commonly found in robotics, and many-particle systems like those studied in molecular dynamics~\cite{robotics_systems, molecular_dynamics}. Lagrangian Neural Networks (LNNs) \cite{lnns} generalize this idea by directly parameterizing the Lagrangian with a neural network and using automatic differentiation to obtain the equations of motion. This allows modeling systems with more complex dynamics, such as those involving relativistic corrections or electromagnetic interactions~\cite{lnns}. Generalized LNNs \cite{xiao2024generalizedlagrangianneuralnetworks} further extend this framework by explicitly modeling non-conservative forces, enabling the learning of dissipative dynamics.

Despite their promise, learning dynamics through the Lagrangian framework introduces several challenges:
\begin{itemize}
    \item \textbf{Second-order differentiation:} Learning dynamics via the Euler--Lagrange equations requires computing accelerations $\ddot{q}$, which involves second-order automatic differentiation with respect to both inputs and network parameters. This significantly increases computational cost and memory usage, particularly in high-dimensional systems. To address this, some works leverage Neural ODEs \cite{chen2019neuralordinarydifferentialequations, zhong2024symplecticodenetlearninghamiltonian} to integrate second-order dynamics and compute trajectory-level losses using positions and velocities.
    \item \textbf{Inertia matrix degeneracy:} The inertia matrix $M(q)$ may become singular or ill-conditioned at certain configurations, leading to numerical instabilities in the system of equations:
\[
\ddot{q} = M(q)^{-1} \bigl( F(q, \dot{q}) - C(q, \dot{q}) \dot{q} - \nabla V(q) \bigr).
\]
\end{itemize}

Such degeneracies can cause exploding or vanishing gradients during training and compromise the stability and accuracy of the learned dynamics.

These challenges motivate the adoption of the Hamiltonian formulation, which expresses dynamics as a first-order system on the cotangent bundle $T^*\mathcal{Q}$ using position and momentum variables. This approach encodes the symplectic geometry of phase space via the structure of Hamilton’s equations, which depend only on first-order derivatives of the Hamiltonian. \cite{goldstein:mechanics}. Moreover, computing the Hamiltonian function only requires access to the inverse of the inertia matrix, in contrast to the Lagrangian framework, which involves computing both the inertia matrix and its inverse. Thus, we can directly parameterize the inverse inertia matrix using a neural network, eliminating the need for any explicit inversion and reducing potential numerical instabilities associated with it.

\subsection{Hamiltonian Formulation on the Cotangent Bundle}
To address the aforementioned limitations, we adopt the Hamiltonian formalism.  
To transition to the Hamiltonian setting, we apply the \emph{Legendre transform} \cite{arnold1989mathematical}, which reformulates the dynamics in terms of momenta rather than velocities.  
The Hamiltonian function $H: T^*\mathcal{Q} \rightarrow \mathbb{R}$ is defined as the Legendre dual of the Lagrangian $L$:
\begin{equation}
    H(q, p) = \sup_{\dot{q} \in T_q\mathcal{Q}} \left( \langle p, \dot{q} \rangle - L(q, \dot{q}) \right),
\end{equation}
where $\langle \cdot, \cdot \rangle$ denotes the natural pairing between the cotangent and tangent spaces, i.e., $p \in T_q^*\mathcal{Q}$ and $\dot{q} \in T_q\mathcal{Q}$.

If the Lagrangian is \emph{regular}, meaning the Legendre map
\begin{equation}
    \dot{q} \mapsto \frac{\partial L(q, \dot{q})}{\partial \dot{q}}
\end{equation}
is a local diffeomorphism, we can explicitly solve for the conjugate momentum:
\begin{equation}
    p = \frac{\partial L(q, \dot{q})}{\partial \dot{q}}.
\end{equation}

For many mechanical systems, such as pendulums, robotic arms, or satellites in orbit, the Lagrangian takes form (\ref{eq: lagrangian_form}). In this case, the Legendre transform simplifies to:
\begin{equation}
    p = M(q) \dot{q},
\end{equation}
and the corresponding Hamiltonian becomes:
\begin{equation}
    H(q, p) = \frac{1}{2} p^\top M(q)^{-1} p + V(q),
\end{equation}
which is the total energy (kinetic + potential) of the system.

In closed systems, this energy is conserved over time.  
To encode this conservation law, the cotangent bundle $T^*\mathcal{Q}$, also called the \emph{phase space}, is equipped with a symplectic form, denoted by $\omega$.  
This symplectic form is a closed, nondegenerate 2-form, which in canonical coordinates (position and momentum) can be written as
\[
\omega = dq \wedge dp.
\]
It defines the geometry that governs the dynamics of the system, ensuring both the preservation of phase space volume, as stated by Liouville’s theorem, and the conservation of energy, which follows from Noether’s theorem relating symmetries to conserved quantities throughout the system’s evolution. Indeed, given the Hamiltonian function $H$, the evolution of the system is governed by the Hamiltonian vector field $X_H$, according to the following IVP:

\begin{equation}
\label{eq: full-order-model}
    \begin{cases}
        \dot{z}(t) = X_H(z(t)), \\
        z(0) = z_0 \in T^*\mathcal{Q},
    \end{cases}
\end{equation}
where $z = (q, p)$.

Expressed in the canonical coordinates $(q, p)$, this leads to Hamilton’s equations:
\begin{equation}
    \label{eq: hamilton_equations}
    \dot{q} = \frac{\partial H}{\partial p}, \quad \dot{p} = - \frac{\partial H}{\partial q}.
\end{equation}

Under standard smoothness and regularity assumptions on the Hamiltonian function $H$, the Picard–Lindelöf theorem guarantees the existence and uniqueness of local solutions\footnote{A local solution is one that exists and is unique only on a short time interval around the initial condition.} \cite{walter1998ordinary}. To approximate these solutions from data, HNNs parameterize the Hamiltonian function $H_\theta(q,p)$ using a neural network.  
This neural network enforces the inductive bias of Hamiltonian mechanics by using Hamilton’s equations to obtain the time derivatives $\dot{q}$ and $\dot{p}$. Instead of regressing these derivatives directly as in generic machine learning models, such as multilayer perceptrons, HNNs first evaluate the Hamiltonian function $H_\theta(q,p)$, then compute its partial derivatives with respect to the inputs $q$ and $p$, and finally apply Hamilton’s equations (\ref{eq: hamilton_equations}) to obtain the temporal derivatives. These are then compared to the ground truth derivatives using a loss function of our choice.

However, computing the loss function does not necessarily require explicitly using the derivatives. Instead of matching derivatives directly, models like Symplectic Recurrent Neural Networks and Symplectic ODE-Nets simulate the system forward in time by integrating the predicted temporal derivatives \cite{symplecticrecurrentneuralnetworks, zhong2024symplecticodenetlearninghamiltonian}. This integration is performed using symplectic integrators, numerical methods specifically designed to preserve the symplectic structure of Hamiltonian systems during time evolution, resulting in predicted trajectories. These predicted trajectories are then compared to the ground truth trajectories using a loss function, allowing the model to learn the dynamics of the physical systems.

That said, although they show good performance on low-dimensional physical systems, such as single pendulums, and mass-spring oscillators, these methods face significant challenges.  
One main challenge is energy drift, where small numerical inaccuracies accumulate over time, leading to violations of conservation laws and reduced predictive accuracy.  
These issues become more pronounced as the duration of the simulation grows or as the number of degrees of freedom increases \cite{hamiltonian_neural_networks}.

A possible solution to reduce the effect of high dimensionality is model order reduction \cite{model_order_reduction}, that is, learning a lower-dimensional representation of the system’s dynamics where the IVP becomes more tractable.  
Once solved in this reduced space, the solution can then be mapped back to the original phase space.

\subsection{Model Order Reduction}

To formalize this approach, we adopt the differential geometric framework proposed in \cite{buchfink2024}, where we aim to approximate the set of all system solutions
\[
S := \{ \gamma(t; \mu) \in \mathcal{H} \mid (t, \mu) \in I \times \mathcal{P} \} \subset \mathcal{H},
\]
commonly referred to as the solution set or solution manifold, where $\mathcal{P}$ denotes the parameter space and $I$ the time interval. Rather than approximating the solution manifold $S$ over the entire high-dimensional phase space $\mathcal{H}$, our aim is to accurately learn the solution manifold within a lower-dimensional space that represents the essential dynamics. However, the existence of such a low-dimensional representation relies on the following assumption about $S$.

\begin{assumption}
   Given a metric $d_{\mathcal{H}}: \mathcal{H} \times \mathcal{H} \to \mathbb{R}_{\geq 0}$, we assume there exists a $2r$-dimensional manifold $\check{\mathcal{H}}$, with $2r \ll 2n := \dim(\mathcal{H})$, and a smooth embedding $\varphi \in C^\infty(\check{\mathcal{H}}, \mathcal{H})$ such that the image $\varphi(\check{\mathcal{H}}) \subset \mathcal{H}$ approximates the solution set $S$. Formally,
\[
d_{\mathcal{H}} \big( \varphi(\check{\mathcal{H}}), S \big) := \sup_{h \in S} \inf_{\check{h} \in \check{\mathcal{H}}} d_{\mathcal{H}} \big(h, \varphi(\check{h}) \big)
\]
is small.

Here, $\mathcal{H}$ denotes the full-order phase space, while $\check{\mathcal{H}}$ represents the reduced-order phase space. 
\end{assumption}

Intuitively, this assumption states that although the full phase space $\mathcal{H}$ may be high-dimensional, the physically realizable trajectories, the solution manifold $S$, occupy only a small region within it. This region can be well-approximated by a smooth, low-dimensional manifold embedded in $\mathcal{H}$.

Assuming we have identified such a low-dimensional manifold and its corresponding embedding \(\varphi\), the next step is to reduce the system dynamics accordingly. A naive approach might be to simply reparameterize the IVP by expressing the system state through the low-dimensional coordinates system and substituting into the full IVP. However, the original IVP consists of \(2n\) equations, while the reduced integral curve is parameterized by only \(2r \ll 2n\) variables, which means that the IVP in the reduced integral curve \(\check{\gamma}\) would be overdetermined. To eliminate redundant equations and thereby define a reduced IVP, we introduce a \emph{reduction map}.

\begin{definition}[Reduction Map \cite{buchfink2024}]
A map $R \in C^{\infty}(T\mathcal{H}, T\check{\mathcal{H}})$ is called reduction map for a smooth embedding $\varphi \in C^{\infty}(\check{\mathcal{H}}, \mathcal{H})$ if it satisfies the projection property
\begin{equation}
R \circ \mathrm{d}\varphi = \mathrm{id}_{T\check{\mathcal{H}}}.
\end{equation}
We split the reduction map
\begin{equation}
R \in C^{\infty}(T\mathcal{H}, T\check{\mathcal{H}}), \quad (h,v) \mapsto (\varrho(h), R|_h(v))
\end{equation}
with $\varrho \in C^{\infty}(\mathcal{H}, \check{\mathcal{H}})$ and $R|_h \in C^{\infty}(T_h\mathcal{H}, T_{\varrho(h)}\check{\mathcal{H}})$ for $h \in \mathcal{H}$. We refer to $\varrho$ as a point reduction for $\varphi$ and to $R|_h$ as a tangent reduction for $\varphi$.
\end{definition}
Intuitively, the reduction map lowers the dimensionality of both the points in the phase space and their associated tangent vectors. This is achieved through two operations:
\begin{itemize}
    \item \textbf{Point reduction}, carried out by the map \(\varrho\), projects points from the full-order manifold \(\mathcal{H}\) onto the reduced-order manifold \(\check{\mathcal{H}}\), and satisfies the identity
    \[
    \varrho \circ \varphi = \mathrm{id}_{\check{\mathcal{H}}},
    \]
    which we refer to as the \emph{point projection property}.

    \item \textbf{Tangent reduction}, performed by the map \(R|_m\), acts on tangent vectors at each point \(m \in \mathcal{H}\) and maps them to vectors tangent to \(\check{\mathcal{H}}\). It satisfies the identity
    \[
    R|_{\varphi(\hat{m})} \circ d\varphi|_{\hat{m}} = \mathrm{id}_{T_{\hat{m}}\check{\mathcal{H}}},
    \quad \text{for all } \hat{m} \in \check{\mathcal{H}},
    \]
    known as the \emph{tangent projection property}.
\end{itemize}

These projection properties ensure that the reduced manifold and its tangent spaces are well-defined within \(\mathcal{H}\), providing the necessary structure to formulate the reduced IVP.

\begin{definition}[Reduced-order model (ROM) \cite{buchfink2024}]
Consider the FOM, a smooth embedding $\varphi \in C^{\infty}(\mathcal{M}, \mathcal{N})$, and a reduction map $R \in C^{\infty}(T\mathcal{H}, T\check{\mathcal{H}})$ for $\varphi$ with point and tangent reduction for $\varphi$ given by $R(h,v) = (\varrho(h), R|_h(v))$. We define the ROM vector field as $\check{X}: \mathcal{P} \to \mathfrak{X}_{\mathcal{H}}$ via
\begin{equation}
\check{X}(\mu)\big|_h := R\big|_{\varphi(h)} \left( X(\mu)\big|_{\varphi(h)} \right) \in T_h\check{\mathcal{H}}.
\end{equation}

Then, for $\mu \in \mathcal{P}$, we call the IVP on $\check{\mathcal{H}}$
\begin{equation}
\begin{cases}
\frac{\mathrm{d}}{\mathrm{d}t}\check{\gamma}\big|_{t;\mu} = \check{X}(\mu)\big|_{\check{\gamma}(t;\mu)} \in T_{\check{\gamma}(t;\mu)}\check{\mathcal{H}} \\[0.5em]
\check{\gamma}(t_0; \mu) = \check{\gamma}_0(\mu) := \varrho(\gamma_0(\mu)) \in \check{\mathcal{H}}
\end{cases}
\end{equation}
the ROM for FOM under the reduction map $R$ with solution $\check{\gamma}(\cdot; \mu) \in C^{\infty}(\mathcal{I}, \check{\mathcal{H}})$.
\end{definition}

Once the reduced IVP is solved, we can use the embedding map \(\varphi\) to recover the solution of the full-order model. For a given parameter \(\mu \in \mathcal{P}\), assume that the full-order solution \(\gamma(t; \mu)\) lies on the embedded manifold \(\varphi(\check{\mathcal{H}})\) for all \(t \in I\); that is,
\[
\gamma(t; \mu) \in \varphi(\check{\mathcal{H}}), \quad \forall t \in I.
\]
Since \(\varphi \in C^\infty(\check{\mathcal{H}}, \mathcal{H})\) is a diffeomorphism onto its image, we can define a smooth reduced trajectory
\begin{equation}
\label{eq: beta}
    \check{\beta}(t; \mu) := \varphi^{-1}(\gamma(t; \mu)) \in C^\infty(I, \check{\mathcal{H}}).
\end{equation}

Applying the chain rule to the identity \(\gamma(t; \mu) = \varphi(\check{\beta}(t; \mu))\), we obtain
\[
\dot{\gamma}(t; \mu) = d\varphi|_{\check{\beta}(t; \mu)} \, \dot{\check{\beta}}(t; \mu),
\]
which implies that the full-order dynamics evaluated along the trajectory satisfy
\begin{equation}
    \label{eq: condition_2}
    X(\mu)|_{\gamma(t; \mu)} = d\varphi|_{\check{\beta}(t; \mu)} \, \dot{\check{\beta}}(t; \mu).
\end{equation}

Thus, the reduced vector field must reproduce the trajectory \(\check{\beta}(t; \mu)\). Now, we can show that the reduced trajectory \(\check{\beta}(t; \mu)\) is recovered by the ROM.

\begin{theorem}[Exact reproduction of a solution \cite{buchfink2024}]
Assume that the FOM  is uniquely solvable and consider a reduction map $R \in C^{\infty}(T\mathcal{H}, T\check{\mathcal{H}})$ for the smooth embedding $\varphi \in C^{\infty}(\check{\mathcal{H}}, \mathcal{H})$ and a parameter $\mu \in \mathcal{P}$. Assume that the ROM is uniquely solvable and $\gamma(t; \mu) \in \varphi(\check{\mathcal{H}})$ for all $t \in \mathcal{I}$. Then the ROM solution $\check{\gamma}(\cdot; \mu)$ exactly recovers the solution $\gamma(\cdot; \mu)$ of the FOM  for this parameter, i.e.,
\begin{equation}
\varphi(\check{\gamma}(t; \mu)) = \gamma(t; \mu) \quad \text{for all } t \in \mathcal{I}.
\end{equation}
\end{theorem}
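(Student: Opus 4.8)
The plan is to show that the curve $\check\beta(t;\mu) := \varphi^{-1}(\gamma(t;\mu))$ introduced in \eqref{eq: beta} is itself a solution of the ROM initial value problem, and then invoke the assumed unique solvability of the ROM to conclude $\check\gamma(\cdot;\mu) = \check\beta(\cdot;\mu)$ on $\mathcal{I}$, whence $\varphi(\check\gamma(t;\mu)) = \varphi(\check\beta(t;\mu)) = \gamma(t;\mu)$. This curve is well defined and smooth: by the hypothesis $\gamma(t;\mu)\in\varphi(\check{\mathcal{H}})$ for all $t\in\mathcal{I}$, and since $\varphi$ is a smooth embedding it is a diffeomorphism onto its image $\varphi(\check{\mathcal{H}})$, so $\varphi^{-1}$ is smooth there and $\check\beta\in C^\infty(\mathcal{I},\check{\mathcal{H}})$.

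First I would match the initial condition. Since $\gamma_0(\mu)=\gamma(t_0;\mu)\in\varphi(\check{\mathcal{H}})$ we have $\gamma_0(\mu)=\varphi(\check\beta(t_0;\mu))$, and applying the point projection property $\varrho\circ\varphi=\mathrm{id}_{\check{\mathcal{H}}}$ gives $\varrho(\gamma_0(\mu))=\check\beta(t_0;\mu)$, which is precisely the prescribed ROM initial value $\check\gamma_0(\mu)=\varrho(\gamma_0(\mu))$.

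Next I would verify the differential equation. Differentiating the identity $\gamma(t;\mu)=\varphi(\check\beta(t;\mu))$ by the chain rule yields $\dot\gamma(t;\mu)=\mathrm{d}\varphi|_{\check\beta(t;\mu)}\,\dot{\check\beta}(t;\mu)$; since $\gamma$ solves the FOM this is exactly \eqref{eq: condition_2}, namely $X(\mu)|_{\gamma(t;\mu)}=\mathrm{d}\varphi|_{\check\beta(t;\mu)}\,\dot{\check\beta}(t;\mu)$. Applying the tangent reduction $R|_{\varphi(\check\beta(t;\mu))}=R|_{\gamma(t;\mu)}$ to both sides and using the tangent projection property $R|_{\varphi(\hat m)}\circ\mathrm{d}\varphi|_{\hat m}=\mathrm{id}_{T_{\hat m}\check{\mathcal{H}}}$ collapses the right-hand side to $\dot{\check\beta}(t;\mu)$, while the left-hand side is by definition of the ROM vector field equal to $\check X(\mu)|_{\check\beta(t;\mu)}$. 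Hence $\dot{\check\beta}(t;\mu)=\check X(\mu)|_{\check\beta(t;\mu)}$ for all $t\in\mathcal{I}$, so $\check\beta$ solves the ROM. The uniqueness hypothesis then forces $\check\gamma(\cdot;\mu)=\check\beta(\cdot;\mu)$, and applying $\varphi$ finishes the proof.

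The only genuinely delicate point is the bookkeeping of base points: every pointwise identity must be evaluated consistently along the trajectory, using $\varphi(\check\beta(t;\mu))=\gamma(t;\mu)$ so that the differential $\mathrm{d}\varphi|_{\hat m}$ and the tangent reduction $R|_h$ are composed at matching points and the projection property applies verbatim. Once this is in place the argument is a direct concatenation of the chain rule with the point and tangent projection properties, plus the uniqueness assumption; I do not anticipate a substantive obstacle beyond keeping the evaluation points straight.
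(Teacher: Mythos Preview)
Your proposal is correct and follows essentially the same argument as the paper's proof: define $\check\beta=\varphi^{-1}\circ\gamma$, verify it satisfies the ROM initial condition via the point projection property and the ROM differential equation via the chain rule together with the tangent projection property applied to \eqref{eq: condition_2}, then invoke uniqueness of the ROM solution. Your write-up is in fact slightly more careful about base-point bookkeeping and the final application of $\varphi$, but the substance is identical.
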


This confirms that the reduced dynamics can reproduce the full-order solution on \(\varphi(\check{\mathcal{H}})\) when lifted through the embedding \(\varphi\).
The choice of embedding \(\varphi\) is therefore critical to the approximation quality of the ROM. One common approach is to assume that the reduced manifold \(\check{\mathcal{H}}\) is a low-dimensional linear subspace of the full phase space \(\mathcal{H}\). Under this linear assumption, the embedding \(\varphi\) corresponds to a linear projection onto the subspace, as employed by classical model reduction techniques such as Proper Orthogonal Decomposition (POD) \cite{berkooz_pod}.
\begin{example}
\label{ex: pod}
\textit{Projection-based linear-subspace MOR with a reduced-basis matrix $V \in \mathbb{R}^{2n \times 2r}$ and a projection matrix $W \in \mathbb{R}^{2n \times 2r}$ is contained as a special case of the presented formulation with $\mathcal{H} = \mathbb{R}^{2n}$, $\check{\mathcal{H}} =  \mathbb{R}^{2r}$, $x = \mathrm{id}_{\mathbb{R}^{2n}}$, $\check{x} = \mathrm{id}_{\mathbb{R}^{2r}}$ and}
\begin{align*}
\varrho(h) := W^{\top} h, \quad R|_h(v) := W^{\top} h, \quad \varphi(\check{h}) := V \check{h}.
\end{align*}
\textit{This exactly covers the case where $\varphi$ and $\varrho$ are linear. The projection property then relates to the biorthogonality of $W$ and $V$}
\begin{align*}
  \varrho \circ \varphi \equiv \mathrm{id}_{\mathbb{R}^{2r}} \quad &\Longleftrightarrow \quad W^{\top} V = I_{2r} \in \mathbb{R}^{2r \times 2r},  \\
R|_{\varphi(\hat{h})} \circ d\varphi|_{\hat{h}} \equiv \mathrm{id}_{\mathbb{R}^{2r}} \quad &\Longleftrightarrow \quad W^{\top} V = I_{2r} \in \mathbb{R}^{2r \times 2r},
\end{align*}
\textit{which is the assumption underlying POD.}
\end{example}
Although POD is effective for problems where the solution manifold is approximately linear, it may fail to capture the complexity of inherently nonlinear solution sets.
To address this limitation, nonlinear embedding methods use neural networks to learn complex reduced representations. Among these, autoencoders are widely used: the encoder projects states to a latent space \(\check{\mathcal{H}}\), and the decoder reconstructs the original states, effectively learning a nonlinear embedding.

Once an embedding \(\varphi\) is chosen, whether linear or nonlinear, the next step is to construct the associated reduction map.

\begin{theorem}[Manifold Galerkin Projection \cite{buchfink2024}]
\label{them: manifold_galerkin_projection}
\textit{Consider a smooth embedding $\varphi$ and a point reduction $\varrho$ for $\varphi$. Then, the differential of the point reduction $\varrho$ is a left inverse to the differential of the embedding $\varphi$. Consequently,}
\begin{equation}
R_{\text{MPG}}: T\mathcal{H} \to T\check{\mathcal{H}} \quad (h, v) \mapsto (\varrho(h), d\varrho|_h(v)) 
\end{equation}
\textit{is a smooth reduction map for $\varphi$, which we call the MPG reduction map for $(\varrho, \varphi)$.}
\end{theorem}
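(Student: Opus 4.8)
The plan is to obtain the whole statement from a single application of the chain rule to the defining identity of a point reduction. Recall that $\varrho$ being a point reduction for $\varphi$ means exactly $\varrho \circ \varphi = \mathrm{id}_{\check{\mathcal{H}}}$ (the point projection property stated in the excerpt). So the first step is: fix $\hat h \in \check{\mathcal{H}}$ and differentiate this identity at $\hat h$. By functoriality of the differential for smooth maps between manifolds, $d(\varrho \circ \varphi)|_{\hat h} = d\varrho|_{\varphi(\hat h)} \circ d\varphi|_{\hat h}$, while the right-hand side of the identity differentiates to $d(\mathrm{id}_{\check{\mathcal{H}}})|_{\hat h} = \mathrm{id}_{T_{\hat h}\check{\mathcal{H}}}$. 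Hence $d\varrho|_{\varphi(\hat h)} \circ d\varphi|_{\hat h} = \mathrm{id}_{T_{\hat h}\check{\mathcal{H}}}$, which is precisely the assertion that $d\varrho$, evaluated at points of the image $\varphi(\check{\mathcal{H}})$, is a left inverse of $d\varphi$. In particular this re-derives that $d\varphi|_{\hat h}$ is injective, consistent with $\varphi$ being an embedding.

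Second, I would check that $R_{\mathrm{MPG}}$ genuinely has the structure demanded of a reduction map. Define $R_{\mathrm{MPG}}$ to be the tangent map $T\varrho$, so that concretely $R_{\mathrm{MPG}}(h,v) = (\varrho(h), d\varrho|_h(v))$; in the notation of the reduction-map definition this means we take $\varrho$ itself as the point reduction and $R_{\mathrm{MPG}}|_h := d\varrho|_h \in C^{\infty}(T_h\mathcal{H}, T_{\varrho(h)}\check{\mathcal{H}})$ as the tangent reduction. Smoothness of $R_{\mathrm{MPG}} : T\mathcal{H} \to T\check{\mathcal{H}}$ is then just the standard fact that the tangent functor sends smooth maps to smooth maps, so no computation is needed here beyond invoking it.

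Third, I would verify the global projection property $R_{\mathrm{MPG}} \circ \mathrm{d}\varphi = \mathrm{id}_{T\check{\mathcal{H}}}$. Since $\mathrm{d}\varphi : T\check{\mathcal{H}} \to T\mathcal{H}$ carries a tangent vector based at $\hat h$ to one based at $\varphi(\hat h)$, evaluating $R_{\mathrm{MPG}}$ there uses exactly $d\varrho|_{\varphi(\hat h)}$, and on base points $\varrho(\varphi(\hat h)) = \hat h$; combining this with the pointwise identity from the first step gives $R_{\mathrm{MPG}} \circ \mathrm{d}\varphi = \mathrm{id}_{T\check{\mathcal{H}}}$ on the nose, which is the required axiom. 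This completes the proof that $R_{\mathrm{MPG}}$ is a smooth reduction map for $\varphi$.

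I do not expect a genuine obstacle here: the content is a coordinate-free chain rule. The only point requiring care is bookkeeping of base points, namely that $d\varrho$ is defined on all of $T\mathcal{H}$ while the left-inverse property holds only on fibers over the image $\varphi(\check{\mathcal{H}})$, which is exactly the domain on which the reduction-map axiom tests it. If one preferred a fully self-contained argument, one could instead pick charts adapted to the embedded submanifold $\varphi(\check{\mathcal{H}})$ and compute the relevant Jacobian blocks directly, but the intrinsic argument above is cleaner and sidesteps those routine manipulations.
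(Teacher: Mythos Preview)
Your proposal is correct and follows essentially the same route as the paper: differentiate the point projection property $\varrho\circ\varphi=\mathrm{id}_{\check{\mathcal{H}}}$ via the chain rule to obtain $d\varrho|_{\varphi(\hat h)}\circ d\varphi|_{\hat h}=\mathrm{id}_{T_{\hat h}\check{\mathcal{H}}}$, which is exactly the tangent projection property needed for $R_{\mathrm{MPG}}$ to be a reduction map. Your additional remarks on smoothness (via the tangent functor) and on base-point bookkeeping are not spelled out in the paper's proof but are entirely appropriate and make the argument more complete.
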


One major limitation of this model reduction technique is that it often neglects the geometry of the original manifold. For instance, if the original manifold is a phase space, which is naturally endowed with a symplectic structure, the reduction must preserve this symplecticity, regardless of the chosen embedding. However, if a linear embedding is used, the corresponding reduction map is an orthogonal projection matrix (Example \ref{ex: pod}), as in POD.
\begin{example}
\textit{If $\varphi$ and $\varrho$ are linear, then the MPG-ROM with the MPG reduction map from Theorem \ref{them: manifold_galerkin_projection} is the ROM obtained in classical linear-subspace MOR via Petrov--Galerkin projection}
\begin{equation}
R_{\text{MPG}}|_{\varphi(m)} = D\varrho|_{\varphi(m)} = W^{\top}, \quad \frac{d}{dt}\tilde{\gamma}\bigg|_t = W^{\top} X|_{\tilde{\gamma}(t)},
\end{equation}
\textit{which is the motivation for the terminology MPG.} 
\end{example}
Although effective in some cases, POD does not guarantee the preservation of the symplectic geometry of the full-order phase space. Thus, the reduced-order phase space is not necessarily symplectic. Therefore, it is necessary to define reduction maps that preserve geometric structures, such as symplectic geometry. 

\begin{theorem}[Generalized Manifold Galerkin (GMG) \cite{buchfink2024}]
\label{thm: generalized_manifold_galerkin}
Let $\mathcal{M}$ be a manifold of dimension $N$ endowed with a non-degenerate $(0,2)$-tensor field $\tau \in \Gamma(T^{(0,2)}(T\mathcal{M}))$, and let $\varphi \in \mathcal{C}^{\infty}(\tilde{\mathcal{M}}, \mathcal{M})$ be a smooth embedding such that the reduced tensor field $\tilde{\tau} := \varphi^*\tau \in \Gamma(T^{(0,2)}(T\tilde{\mathcal{M}}))$ is nondegenerate. 

Define the vector bundle $E_{\varphi(\tilde{\mathcal{M}})} := \bigcup_{m \in \varphi(\tilde{\mathcal{M}})} T_m\mathcal{M}$ and the generalized manifold Galerkin (GMG) mapping as follows:
\begin{equation*}
   R_{\text{GMG}}: T\mathcal{M} \supseteq E_{\varphi(\tilde{\mathcal{M}})} \to T\tilde{\mathcal{M}}, \quad (m,v) \mapsto \left(\varrho(m), \left(\sharp_{\check{\tau}} \circ \mathrm{d}\varphi^*|_{\varrho(m)} \circ b_{\tau}\right)(v)\right), 
\end{equation*}
where $\varrho: \varphi(\tilde{\mathcal{M}}) \to \tilde{\mathcal{M}}$ is the inverse of $\varphi$, $b_{\tau}$ is the musical isomorphism \footnote{A \emph{musical isomorphism} is the canonical isomorphism between vectors and covectors induced by a nondegenerate \((0,2)\)-tensor \(T\) on a manifold \(\mathcal{M}\). The \emph{flat} map sends a vector \(v\) to the covector
\[
v^\flat = T(v, \cdot),
\]
and the \emph{sharp} map is its inverse, assigning to each covector \(\alpha\) the unique vector \(\alpha^\sharp\) satisfying
\[
T(\alpha^\sharp, \cdot) = \alpha.
\]
} induced by $\tau$, and $\sharp_{\tau}$ is the musical isomorphism induced by $\tilde{\tau}$. Then $R_{\text{GMG}}$ is a reduction map for $\varphi$.
\end{theorem}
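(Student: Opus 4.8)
The plan is to verify directly that $R_{\text{GMG}}$ satisfies the two defining properties of a reduction map for $\varphi$: the point projection property $\varrho \circ \varphi = \mathrm{id}_{\tilde{\mathcal{M}}}$, the tangent projection property $R_{\text{GMG}}|_{\varphi(\hat m)} \circ \mathrm{d}\varphi|_{\hat m} = \mathrm{id}_{T_{\hat m}\tilde{\mathcal{M}}}$ for every $\hat m \in \tilde{\mathcal{M}}$, and smoothness. The point projection property is immediate, since $\varrho$ is \emph{defined} as the inverse of the embedding $\varphi$ on its image $\varphi(\tilde{\mathcal{M}})$; restricting the domain of $R_{\text{GMG}}$ to the bundle $E_{\varphi(\tilde{\mathcal{M}})} = \bigcup_{m \in \varphi(\tilde{\mathcal{M}})} T_m\mathcal{M}$ is exactly what makes $\varrho$, hence $R_{\text{GMG}}$, well defined. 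Here I read $\mathrm{d}\varphi^*|_{\varrho(m)}$ as the transpose (cotangent lift) $(\mathrm{d}\varphi|_{\hat m})^* \colon T_m^*\mathcal{M} \to T_{\hat m}^*\tilde{\mathcal{M}}$ of $\mathrm{d}\varphi|_{\hat m}$, where $m = \varphi(\hat m)$, so that the composition $\sharp_{\tilde\tau} \circ (\mathrm{d}\varphi)^* \circ b_\tau$ indeed maps $T_m\mathcal{M} \to T_{\hat m}\tilde{\mathcal{M}}$.

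The core of the argument is the pointwise identity
\[
b_{\tilde\tau}|_{\hat m} \;=\; (\mathrm{d}\varphi|_{\hat m})^* \circ b_\tau|_{\varphi(\hat m)} \circ \mathrm{d}\varphi|_{\hat m} \colon T_{\hat m}\tilde{\mathcal{M}} \to T_{\hat m}^*\tilde{\mathcal{M}}.
\]
To prove it I would unwind the definitions of the flat maps and of the pullback tensor $\tilde\tau = \varphi^*\tau$: for $\hat v, \hat w \in T_{\hat m}\tilde{\mathcal{M}}$,
\[
\big((\mathrm{d}\varphi)^*\, b_\tau(\mathrm{d}\varphi\,\hat v)\big)(\hat w) = b_\tau(\mathrm{d}\varphi\,\hat v)(\mathrm{d}\varphi\,\hat w) = \tau(\mathrm{d}\varphi\,\hat v,\, \mathrm{d}\varphi\,\hat w) = (\varphi^*\tau)(\hat v,\hat w) = \tilde\tau(\hat v,\hat w) = b_{\tilde\tau}(\hat v)(\hat w).
\]
Since $\tilde\tau$ is nondegenerate by hypothesis, $b_{\tilde\tau}$ is a linear isomorphism with inverse $\sharp_{\tilde\tau}$; composing the displayed identity on the left with $\sharp_{\tilde\tau}$ yields
\[
R_{\text{GMG}}|_{\varphi(\hat m)} \circ \mathrm{d}\varphi|_{\hat m} = \sharp_{\tilde\tau} \circ (\mathrm{d}\varphi)^* \circ b_\tau \circ \mathrm{d}\varphi = \sharp_{\tilde\tau} \circ b_{\tilde\tau} = \mathrm{id}_{T_{\hat m}\tilde{\mathcal{M}}},
\]
which is the tangent projection property, i.e. the required relation $R_{\text{GMG}} \circ \mathrm{d}\varphi = \mathrm{id}_{T\tilde{\mathcal{M}}}$.

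It then remains to record smoothness. Because $\varphi$ is a smooth embedding, $\varphi(\tilde{\mathcal{M}})$ is an embedded submanifold of $\mathcal{M}$, $\varrho = \varphi^{-1}$ is smooth on it, and $E_{\varphi(\tilde{\mathcal{M}})}$ is a smooth vector subbundle of $T\mathcal{M}$ restricted to $\varphi(\tilde{\mathcal{M}})$. The flat map $b_\tau$ is a smooth vector bundle isomorphism $T\mathcal{M} \to T^*\mathcal{M}$ because $\tau$ is a smooth nondegenerate $(0,2)$-tensor field, and likewise $\sharp_{\tilde\tau}$ is smooth because $\tilde\tau = \varphi^*\tau$ is smooth and nondegenerate; the cotangent lift $m \mapsto (\mathrm{d}\varphi|_{\varrho(m)})^*$ depends smoothly on the base point. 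Hence $R_{\text{GMG}}$, a fibrewise composition of these smooth bundle maps covering the smooth map $\varrho$, is smooth.

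As for where the difficulty lies: the computation is genuinely routine, so the only care needed is bookkeeping — reading $\mathrm{d}\varphi^*$ as the transpose of $\mathrm{d}\varphi$ (not the exterior derivative of some pullback), consistently tracking the base-point identifications $m = \varphi(\hat m)$ and $\varrho(m) = \hat m$, and observing that the hypothesis that $\tilde\tau = \varphi^*\tau$ is nondegenerate enters in exactly one place, namely to guarantee that $\sharp_{\tilde\tau}$ exists and inverts $b_{\tilde\tau}$. Without that hypothesis the map $R_{\text{GMG}}$ would not even be defined, so it is essential rather than technical.
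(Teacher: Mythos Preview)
Your proof is correct and follows essentially the same route as the paper: the paper isolates the identity $\mathrm{d}\varphi^*|_{\check m}\circ b_\tau\circ \mathrm{d}\varphi|_{\check m}=b_{\check\tau}$ as a separate lemma (proved in index notation rather than your coordinate-free evaluation on test vectors) and then composes on the left with $\sharp_{\check\tau}$ exactly as you do. Your additional paragraph on smoothness is a welcome bit of care that the paper's proof omits.
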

The geometric structure preserved by GMG  is characterized by a reduction of the manifold tensor field.
A tensor field is a geometric object that maps each point on a manifold to a multilinear map (a tensor) \cite{grinfeld2013introduction}. The geometry of a manifold is largely determined by these tensor fields. For instance, a Riemannian metric is a symmetric, positive-definite (0,2)-tensor field that defines distances and angles on the manifold. Similarly, a symplectic form is a nondegenerate, closed 2-form (a skew-symmetric (0,2)-tensor field) that encodes the geometry of phase spaces in Hamiltonian mechanics. In particular, for Hamiltonian systems, it is crucial that the symplectic form is correctly reduced, a property that our approach explicitly enforces in reduction map construction.

\section{Reduced-Order Geometric Hamiltonian Neural Networks}
Prior work on learning Hamiltonian system dynamics often overlooks their underlying geometry. In particular, two geometric structures must be accounted for during the learning process: the \emph{Riemannian geometry} of the inertia matrices, and the \emph{symplectic geometry} of the phase space.

\subsection{Riemannian Geometry of Inertia Matrix}
We begin by considering the inertia matrix. Existing approaches to learning the Hamiltonian function generally follow one of two strategies. Some methods model the Hamiltonian as a whole, without explicitly separating kinetic and potential energy components \cite{hamiltonian_neural_networks}. Other approaches explicitly decompose the Hamiltonian into kinetic and potential energies and learn them separately \cite{zhong2024symplecticodenetlearninghamiltonian}. For the kinetic energy, this requires learning the inertia matrix. Since it must be symmetric positive definite (SPD), they typically satisfy this constraint using one of the following methods:

\begin{enumerate}
    \item \textbf{Symmetrization:} The predicted matrix is symmetrized as
    \[
        M = \tfrac{1}{2}(M + M^\top) + \varepsilon I,
    \]
    where $\varepsilon > 0$ ensures positive definiteness. Although this guarantees an SPD matrix, it treats $M$ as an unconstrained Euclidean object, ignoring the Riemannian geometry of the SPD manifold. Thus, the predicted matrices may not evolve naturally along the SPD manifold, leading to suboptimal parameter updates that follow arbitrary Euclidean directions rather than geodesics.
    
    \item \textbf{Cholesky factorization:} The inertia matrix is expressed as
    \[
        M = L L^\top,
    \]
    where $L$ is a lower triangular matrix with non-zero diagonal elements. This ensures $M$ is SPD, but it implicitly assigns a Euclidean geometry to the SPD space via the space of Cholesky factors. This oversimplification neglects the  Riemannian geometry of the SPD manifold $\mathcal{S}_{++}^n$, potentially leading to the \emph{swelling effect}~\citep{feragen2017geodesic,lin2019riemannian}, where interpolating between SPD matrices with the same determinant might produce intermediate matrices with increased determinant. This geometric distortion can degrade the physical plausibility of the learned dynamics.
\end{enumerate}

To address these limitations, we propose a \emph{geometry-aware} parameterization of the inertia matrix $M(q) \in \mathcal{S}_{++}^n$ by modeling it directly as a point on the SPD manifold. This requires accounting for the non-Euclidean geometry of $\mathcal{S}_{++}^n$, which cannot be captured by Euclidean distances. To this end, we adopt the \emph{Affine Invariant Metric} (AIM)~\citep{spd_manifold_metric}, which defines a natural Riemannian distance between two SPD matrices $M_1, M_2 \in \mathcal{S}_{++}^n$ as:
\[
    d_{\mathrm{AIM}}(M_1, M_2) = \left\| \log\left(M_1^{-1/2} M_2 M_1^{-1/2}\right) \right\|_F,
\]
where $\log(\cdot)$ denotes the matrix logarithm and $\|\cdot\|_F$ is the Frobenius norm. This metric is invariant under congruence transformations (change of basis) $M \mapsto P^\top M P$ for any invertible matrix $P$. Such invariance ensures coordinate-independence, an essential property in physical systems where the dynamics must remain unchanged under changes of coordinates, making the AIM a suitable metric.

Having specified the metric on our manifold, we now define the exponential map at a base point $M_0 \in \mathcal{S}_{++}^n$. The exponential map takes a tangent vector $\Xi \in T_{M_0} \mathcal{S}_{++}^n$ and maps it to a point on the SPD manifold by moving along the geodesic starting at $M_0$ in the direction of $\Xi$, ensuring the result remains SPD. Formally, it is given by:
\[
    \mathrm{Exp}_{M_0}(\Xi) = M_0^{1/2} \exp\left( M_0^{-1/2} \Xi M_0^{-1/2} \right) M_0^{1/2},
\]
where $\exp(\cdot)$ denotes the matrix exponential. This allows the parameterization of $M$ as a smooth perturbation from a base matrix $M_0$ along the SPD manifold. In our work, $M_0$ is treated as a learnable base matrix, and the neural network predicts the tangent vector $\Xi$, ensuring that $M = \mathrm{Exp}_{M_0}(\Xi)$ always lies on the SPD manifold and respects its geometry. Building on this geometry-aware parameterization, we introduce our \emph{Geometric Hamiltonian Neural Network} (GeoHNN), which consists of two main components:
\begin{itemize}
    \item \textbf{Inertia Network} $\mathcal{N}_M$: Given a configuration $q \in \mathcal{Q}$, this network outputs a tangent vector $\Xi(q) \in T_{M_0} \mathcal{S}_{++}^n$, from which the inverse inertia matrix is reconstructed as:
    \[
        M(q)^{-1} = \mathrm{Exp}_{M_0} \left( \mathcal{N}_M(q; \theta_M) \right),
    \]
    where $\theta_M$ and $M_0$ are the network parameters.

    \item \textbf{Potential Energy Network} $\mathcal{N}_V$: A standard feedforward neural network that predicts the potential energy:
    \[
        V(q) = \mathcal{N}_V(q; \theta_V).
    \]
    where $\theta_V$ are the network parameters.

\end{itemize}

The total Hamiltonian is then defined as:
\[
    H(q, p) = \frac{1}{2} p^\top M(q)^{-1} p + V(q).
\]

Although our GeoHNN captures the geometry of the SPD manifold, its computational complexity scales poorly in high-dimensional systems where $n \gg 1$. In such regimes, the cost of computing and storing full-rank inertia matrices, as well as performing manifold-aware operations (e.g., matrix exponentials), becomes prohibitive. To address this issue, we need a model reduction method that reduces computation while keeping the symplectic structure of the phase space.

\subsection{Symplectic geometry of the Phase Space}
To preserve the geometry of the phase space, we apply Theorem \ref{thm: generalized_manifold_galerkin} to Hamiltonian systems in order to preserve the symplectic geometry of the phase space. The phase space is naturally endowed with a canonical symplectic form $\omega = dp \wedge dq$. This symplectic form is the tensor field governing the phase space geometry, and it needs to be properly reduced to ensure the symplecticity of the reduced-order model. Under a smooth embedding $\varphi \in C^{\infty}(\check{\mathcal{H}}, \mathcal{H})$, the reduced phase space $\check{\mathcal{H}}$ can be endowed with a reduced symplectic form $\hat{\omega} = \varphi^* \omega$.
\begin{lemma}[\cite{buchfink2024}]
\textit{Consider a symplectic manifold $(\mathcal{H}, \omega)$, a smooth manifold $\check{\mathcal{H}}$, and a smooth embedding $\varphi \in C^{\infty}(\check{\mathcal{H}}, \mathcal{H})$ such that $\check{\omega} := \varphi^* \omega$ is nondegenerate. Then $\check{\omega}$ is a symplectic form, $(\check{\mathcal{H}}, \check{\omega})$ is a symplectic manifold, and $\varphi$ is a symplectomorphism.}
\end{lemma}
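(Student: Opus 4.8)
The plan is to verify the three defining properties of a symplectic form for $\check{\omega} := \varphi^* \omega$ — that it is a smooth $2$-form, that it is closed, and that it is nondegenerate — and then to observe that the symplectomorphism claim follows almost immediately once these are in place. Nondegeneracy is already granted by hypothesis, so the real content is showing that the pullback is a well-defined closed $2$-form.

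First I would note that, since $\omega \in \Omega^2(\mathcal{H})$ and $\varphi \in C^\infty(\check{\mathcal{H}}, \mathcal{H})$, the pullback $\check{\omega} = \varphi^* \omega$ is automatically a smooth $2$-form on $\check{\mathcal{H}}$ (pullback of a smooth alternating $2$-tensor field by a smooth map is again a smooth alternating $2$-tensor field), and it is skew-symmetric because every $2$-form is. As an aside, nondegeneracy of $\check{\omega}$ forces $\dim \check{\mathcal{H}}$ to be even, which is consistent with $\dim \check{\mathcal{H}} = 2r$ in our setting.

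The one nontrivial point is closedness, and here I would invoke the naturality of the exterior derivative with respect to pullback, namely $d \circ \varphi^* = \varphi^* \circ d$ on forms. Since $\omega$ is closed (being a symplectic form), this gives
\[
d\check{\omega} = d(\varphi^* \omega) = \varphi^*(d\omega) = \varphi^*(0) = 0.
\]
Hence $\check{\omega}$ is a closed, nondegenerate $2$-form, i.e.\ a symplectic form, and $(\check{\mathcal{H}}, \check{\omega})$ is a symplectic manifold. For the symplectomorphism claim I would factor $\varphi$ through its image: because $\varphi$ is a smooth embedding, $\varphi(\check{\mathcal{H}})$ is an embedded submanifold of $\mathcal{H}$ and $\varphi$ restricts to a diffeomorphism $\tilde{\varphi}: \check{\mathcal{H}} \to \varphi(\check{\mathcal{H}})$ with $\varphi = \iota \circ \tilde{\varphi}$, where $\iota$ is the inclusion. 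Then $\iota^*\omega$ is closed (again by naturality of $d$) and $\tilde{\varphi}^*(\iota^*\omega) = \varphi^*\omega = \check{\omega}$ is nondegenerate; since $\tilde{\varphi}$ is a diffeomorphism, nondegeneracy transfers to $\iota^*\omega$, so $(\varphi(\check{\mathcal{H}}), \iota^*\omega)$ is a symplectic submanifold and $\tilde{\varphi}$ intertwines $\check{\omega}$ and $\iota^*\omega$ by construction — that is, $\varphi$ is a symplectomorphism onto its image.

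I do not expect a genuine obstacle here: the whole argument rests on the single naturality identity $d\circ\varphi^* = \varphi^*\circ d$ together with the assumed nondegeneracy. The only place where care is warranted is the interpretation of ``symplectomorphism'' for a non-surjective $\varphi$ — it must be read as a symplectomorphism onto its image — and verifying that the image indeed carries a symplectic structure, which is exactly what the embedding hypothesis plus nondegeneracy of $\varphi^*\omega$ supply.
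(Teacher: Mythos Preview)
Your proposal is correct and follows essentially the same approach as the paper: both reduce the problem to verifying skew-symmetry and closedness of $\check{\omega}=\varphi^*\omega$, invoking that pullback commutes with the exterior derivative for the latter. The paper checks skew-symmetry via an explicit index computation and leaves the symplectomorphism claim implicit, whereas you note skew-symmetry is automatic for $2$-forms and spell out the ``symplectomorphism onto its image'' interpretation more carefully---if anything, your treatment is slightly more complete.
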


Assuming this symplectic form is non degenerate, $\varphi$ is a symplectomorphism, that is, a diffeomorphism that preserves the symplectic form, and we can use it to define our reduction map.

\begin{equation*}
\begin{array}{rcl}
& T\mathcal{H} \supset E_{\varphi}(\mathcal{H}) & \longrightarrow T\tilde{\mathcal{H}} \\
R_{\text{SMG}}: & (h, v) & \longmapsto \left( \varrho(h), \left({\musSharp_{\check{\omega}}}\circ {\mathrm{d}\varphi^*\big|_{\varrho(h)}} \circ {\musFlat_{\omega}} \right)(v) \right)
\end{array}
\end{equation*}
We refer to this reduction map as the Symplectic Manifold Galerkin Projection associated with the embedding $\varphi$, and it defines our ROM. Now, the final step is to verify that the resulting system indeed constitutes a valid Hamiltonian system in the reduced phase space.
\begin{theorem}[Symplectic Manifold Galerkin \cite{buchfink2024}]
\label{thm: SMG-ROM}
\textit{The SMG-ROM is a Hamiltonian system $(\check{\mathcal{H}}, \tilde{\omega}, \check{H})$ with the reduced Hamiltonian $\check{H} := \varphi^* H = H\circ \varphi$.}    
\end{theorem}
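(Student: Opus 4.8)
The plan is to prove the theorem by showing that the reduction map $R_{\mathrm{SMG}}$ sends the full-order Hamiltonian vector field $X_H$ to the Hamiltonian vector field $X_{\check H}$ of $\check H := \varphi^* H$ on the reduced symplectic manifold $(\check{\mathcal{H}},\check\omega)$, where $\check\omega = \varphi^*\omega$ is the form denoted $\tilde\omega$ in the statement; once the identity $\check X = X_{\check H}$ is established, the ROM initial value problem from the ROM definition is literally Hamilton's equations for $\check H$ on $(\check{\mathcal{H}},\check\omega)$, so the SMG-ROM \emph{is} the Hamiltonian system $(\check{\mathcal{H}},\check\omega,\check H)$. First I would record what the hypotheses already give: by the preceding Lemma, since $\check\omega := \varphi^*\omega$ is nondegenerate and closed (being the pullback of the closed form $\omega$), $(\check{\mathcal{H}},\check\omega)$ is a symplectic manifold and $\varphi$ is a symplectomorphism onto its image; in particular $\flat_{\check\omega}$ is invertible with inverse $\sharp_{\check\omega}$, so the Hamiltonian vector field $X_{\check H}$ characterized by $\iota_{X_{\check H}}\check\omega = d\check H$ is well-defined and smooth, and $\check H = H\circ\varphi$ is smooth.

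The heart of the argument is a pointwise chase through the three maps composing $R_{\mathrm{SMG}}|_{\varphi(h)} = \sharp_{\check\omega}\circ d\varphi^*|_{\varrho(\varphi(h))}\circ\flat_\omega$. Fix $h\in\check{\mathcal{H}}$; since $\varrho$ is the inverse of $\varphi$ on $\varphi(\check{\mathcal{H}})$ we have $\varrho(\varphi(h)) = h$, so the middle map is $d\varphi^*|_h$. Step one: $\flat_\omega$ applied to $X_H|_{\varphi(h)}\in T_{\varphi(h)}\mathcal{H}$ returns, by the defining relation $\iota_{X_H}\omega = dH$ of the Hamiltonian vector field, the covector $dH|_{\varphi(h)}$. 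Step two: $d\varphi^*|_h : T^*_{\varphi(h)}\mathcal{H}\to T^*_h\check{\mathcal{H}}$ is the transpose of $d\varphi|_h$, so for any $w\in T_h\check{\mathcal{H}}$ one has $(d\varphi^*|_h\, dH|_{\varphi(h)})(w) = dH|_{\varphi(h)}(d\varphi|_h w) = d(H\circ\varphi)|_h(w)$ by the chain rule; hence $d\varphi^*|_h\, dH|_{\varphi(h)} = d\check H|_h$, which is just the naturality identity $\varphi^* dH = d(\varphi^* H)$ localized at $h$. Step three: $\sharp_{\check\omega}$ applied to $d\check H|_h$ gives, by the characterization $\flat_{\check\omega}(X_{\check H}) = d\check H$ and nondegeneracy of $\check\omega$, exactly $X_{\check H}|_h$. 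Composing the three steps, $\check X|_h = R_{\mathrm{SMG}}|_{\varphi(h)}\big(X_H|_{\varphi(h)}\big) = X_{\check H}|_h$ for all $h$, i.e.\ $\check X = X_{\check H}$.

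To finish, I would observe that with $\check X = X_{\check H}$ the ROM IVP $\frac{d}{dt}\check\gamma = \check X|_{\check\gamma(t)}$, $\check\gamma(t_0) = \varrho(\gamma_0)$, is precisely the Hamiltonian flow of $\check H$ on $(\check{\mathcal{H}},\check\omega)$ — Hamilton's equations for $\check H$ in canonical coordinates — and the parameter $\mu$ passes through unchanged since $\check H(\cdot;\mu) = H(\cdot;\mu)\circ\varphi$; hence the reduced data $(\check{\mathcal{H}},\check\omega,\check H)$ constitutes a Hamiltonian system, and one gets for free that the reduced flow conserves $\check H$ along trajectories and preserves $\check\omega$, hence the reduced-phase-space (Liouville) volume. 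I expect the only point needing care — more bookkeeping than genuine obstacle — to be the handling of the musical isomorphisms: $\flat_\omega$ must be read as acting on the \emph{full} tangent space $T_{\varphi(h)}\mathcal{H}$ rather than merely on $d\varphi(T_h\check{\mathcal{H}})$, the use of $\sharp_{\check\omega}$ is legitimate only because the Lemma supplies nondegeneracy of $\check\omega$, and one must check that the base points line up via $\varrho\circ\varphi = \mathrm{id}_{\check{\mathcal{H}}}$ so that $d\varphi^*|_{\varrho(\varphi(h))} = d\varphi^*|_h$. Everything else reduces to naturality of $d$ under pullback together with the two definitional descriptions of Hamiltonian vector fields.
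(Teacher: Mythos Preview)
Your proposal is correct and follows essentially the same approach as the paper: both compute $R_{\mathrm{SMG}}|_{\varphi(\check h)}(X_H|_{\varphi(\check h)})$ by using the point projection property $\varrho\circ\varphi=\mathrm{id}$, the musical-isomorphism identity $\flat_\omega(X_H)=dH$, and the chain rule $d\varphi^*\,dH = d(H\circ\varphi)$ to arrive at $\sharp_{\check\omega}(d\check H)=X_{\check H}$. Your three-step decomposition is simply a more explicit unpacking of the paper's one-line computation.
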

Theorem \ref{thm: SMG-ROM} ensures that, under this reduction, the resulting system preserves the symplectic structure and thus defines a valid Hamiltonian system on the reduced phase space for the chosen embedding $\varphi$.
To realize the embedding $\varphi : \check{\mathcal{H}} \mapsto \mathcal{H}$ in practice, we parameterize it using an autoencoder architecture. Concretely, we define:
\begin{itemize}
    \item \textbf{Encoder} $\rho_{\text{AE}} : \mathbb{R}^{2n} \rightarrow \mathbb{R}^{2r}$, which maps the high-dimensional phase space to a reduced latent space;
    \item \textbf{Decoder} $\varphi_{\text{AE}} : \mathbb{R}^{2r} \rightarrow \mathbb{R}^{2n}$, which reconstructs the full phase space from the latent space.
\end{itemize}
The composition $\varphi_{\text{AE}} \circ \rho_{\text{AE}} : \mathbb{R}^{2n} \rightarrow \mathbb{R}^{2n}$ forms an autoencoder, and the intermediate representation in $\mathbb{R}^{2r}$ serves as the reduced-order phase space. We denote the family of these autoencoders as
\begin{equation*}
   \mathcal{F}_{\varphi, \rho, \text{AE}} := \left\{ (\varphi_{\text{AE}}, \rho_{\text{AE}}) \mid \theta \in \mathbb{R}^{n_\theta} \text{ are network parameters} \right\}. 
\end{equation*}
Although autoencoders are highly flexible, they do not, in general, guarantee the pointwise projection property or the exact preservation of symplectic structure. To guarantee \emph{weak symplecticity}, meaning that the symplectic structure is preserved up to a small controlled error, prior work such as \cite{buchfink2021} incorporates geometric regularization terms during training to guarantee \emph{weak} symplecticity preservation.

To strictly preserve this symplecticity, we constrain the layers of the autoencoder architecture to strictly satisfy the point projection property \cite{otto2023}. In our framework, the encoder and decoder are built from layers that work in pairs, where each layer in the encoder has a corresponding inverse layer in the decoder. Each encoder layer is defined as:
\[
\rho_{\mathcal{H}}^{(l)}(h^{(l)}) = \sigma^{-}\left( \Psi_l^\top (h^{(l)} - b_l) \right),
\]
while the corresponding decoder layer is given by
\[
\varphi_{\mathcal{H}}^{(l)}(\hat{h}^{(l-1)}) = \Phi_l \, \sigma^{+}(\hat{h}^{(l-1)}) + b_l,
\]
where $(\Phi_l, \Psi_l)$ are weight matrices, $b_l$ are bias vectors, and $(\sigma^+, \sigma^-)$ are smooth, invertible activation functions.

These weight matrices and activation functions are required to satisfy the conditions
\[
\Psi_l^\top \Phi_l = I \quad \text{and} \quad \sigma^- \circ \sigma^+ = \mathrm{id},
\]
to ensure that $\rho_{\mathcal{H}}^{(l)} \circ \varphi_{\mathcal{H}}^{(l)} = \mathrm{id}_{\mathbb{R}^n}$, and consequently, the entire autoencoder satisfies the map reduction property.

To maintain the biorthogonality constraint during training, we take a geometric approach, rather than relying on indirect methods such as overparameterization and penalty terms in the loss function, as used by \cite{otto2023}. Specifically, we treat each pair $(\Phi_l, \Psi_l)$ as a point on the biorthogonal manifold defined as follows:
\begin{equation*}
    \mathcal{B}_{n_l, n_{l-1}} := \left\{ (\Phi, \Psi) \in \mathbb{R}^{n_l \times n_{l-1}} \times \mathbb{R}^{n_l \times n_{l-1}} \;\middle|\; \Psi^\top \Phi = I_{n_{l-1}} \right\}
\end{equation*}
and we perform optimization over this manifold to minimize the reconstruction loss (i.e., the difference between the input and its reconstruction by the autoencoder), while strictly enforcing the biorthogonality constraint. In addition, the invertibility of the nonlinear activations constraint (i.e., $\sigma^- \circ \sigma^+ = \mathrm{id}$), it is satisfied by using the smooth, bijective activation functions proposed in \cite[Equation 12]{otto2023}.

Finally, to effectively leverage our geometric priors, namely, the inertia matrix parameterization and the constrained autoencoder, we formulate a training objective that aligns the model with the underlying geometry of Hamiltonian systems.

\subsection{Training Objective and Optimization Procedure}
For low-dimensional systems without dimensionality reduction, training involves minimizing the mean squared error between the predicted temporal derivatives of the states \((q, p)\) and their ground truth values. In this setting, the total loss reduces to
\[
\mathcal{L}_{\text{total}} = \sum_{i,j} \left\| \dot{\tilde{q}}_i(t_j) - \dot{q}_i(t_j) \right\|^2 + \left\| \dot{\tilde{p}}_i(t_j) - \dot{p}_i(t_j) \right\|^2,
\]
where \(\dot{q}_i, \dot{p}_i\) denote the true temporal derivatives and \(\dot{\tilde{q}}_i, \dot{\tilde{p}}_i\) the predicted ones. 

For high-dimensional systems with dimensionality reduction, we design a composite loss function that captures the essential goals of reduced-order modeling: long-term predictive accuracy (trajectory $q(t)$), geometric consistency of the latent space (biorthogonality), and fidelity of reconstruction of the states ($q(t), p(t)$) . Each component contributes to preserving the Hamiltonian structure in the reduced phase space.

\paragraph{Multi-step Prediction Loss.}
Since the dynamics evolve over time, relying solely on single-step prediction losses is insufficient to ensure stability and accuracy. Instead, we adopt a multi-step prediction loss, where we simulate the system forward in the reduced latent phase space using a symplectic Euler integrator over $N$ steps with time step $\Delta t$, generating a predicted trajectory $\{(\tilde{q}_i(t_j), \tilde{p}_i(t_j))\}_{j=1}^N$. The predicted trajectory is then compared against ground truth trajectories $\{(q_i(t_j), p_i(t_j))\}_{j=1}^N$. This yields the loss term:
\[
\mathcal{L}_{\text{multistep}} = \sum_{i,j} \left\| \tilde{q}_i(t_j) - q_i(t_j) \right\|^2 + \left\| \tilde{p}_i(t_j) - p_i(t_j) \right\|^2.
\]

\paragraph{Latent Encoding Loss.}
Accurate rollout does not guarantee that the latent variables $(\check{q}, \check{p})$ produced by the model correspond to physically meaningful coordinates. To ensure that the learned encodings preserve physical structure, we supervise them using ground truth latent values via:
\[
\mathcal{L}_{\text{latent}} = \sum_{i,j} \left\| \check{q}_i(t_j) - \rho_{\mathcal{Q}}(q_i(t_j)) \right\|^2 + \left\| \check{p}_i(t_j) - \rho_{\mathcal{P}}(p_i(t_j)) \right\|^2.
\]

\paragraph{Reconstruction Loss.}
To preserve the geometry of the phase space through the point projection property, i.e., $\varphi_{\mathcal{Q}} \circ \rho_{\mathcal{Q}} = \mathrm{id}$ and $\varphi_{\mathcal{P}} \circ \rho_{\mathcal{P}} = \mathrm{id}$, we require encoders and decoders to be inverse mappings. This is enforced by minimizing the reconstruction error:
\[
\mathcal{L}_{\text{recon}} = \sum_{i,j} \left\| \varphi_{\mathcal{Q}}(\check{q}_i(t_j)) - q_i(t_j) \right\|^2 + \left\| \varphi_{\mathcal{P}}(\check{p}_i(t_j)) - p_i(t_j) \right\|^2.
\]

\paragraph{Regularization.}
To prevent overfitting and encourage generalization, we apply standard $\ell_2$ regularization on model parameters $\theta$:
\[
\mathcal{L}_{\text{reg}} = \gamma \| \theta \|^2,
\]
where $\gamma > 0$ is a regularization coefficient.

\paragraph{Full Loss Function.}
The total loss function minimized during training is:
\[
\mathcal{L}_{\text{total}} =  \mathcal{L}_{\text{multistep}} +  \mathcal{L}_{\text{latent}} + \mathcal{L}_{\text{recon}} + \mathcal{L}_{\text{reg}},
\]

\paragraph{Riemannian Optimization.}
To optimize $\mathcal{L}_{\text{total}}$ while respecting the geometry of the underlying parameter manifolds, namely, the SPD manifold for inertia matrices and the biorthogonal manifold for encoder–decoder weight pairs, we employ Riemannian optimization techniques. These algorithms operate on curved manifolds by updating parameters along geodesic paths.

Each Riemannian optimization step consists of the following operations:

\begin{itemize}
    \item \textbf{Riemannian Gradient Computation:} Compute the Euclidean gradient $\nabla_{\theta} \mathcal{L}$ and project it onto the tangent space $T_M \mathcal{M}$ of the manifold $\mathcal{M}$ at the current point $M$ to obtain a valid tangent direction respecting the geometry of the manifold $\mathcal{M}$.
    
    \item \textbf{Retraction:} Move in the tangent direction along a geodesic and map the result back to the manifold via a retraction operator. When the latter is the exponential map, the update strictly follows the shortest path on the manifold.
    
    \item \textbf{Vector Transport (for momentum-based methods):} For optimizers like Riemannian Adam or Riemannian SGD with momentum, past velocity vectors must be transported from the old tangent space to the new one using a vector transport operation.
\end{itemize}

These operations ensure that model parameters remain on their respective manifolds throughout training, preserving the required geometric structure. Detailed derivations and algorithmic implementations are provided in the Appendix.

\section{Experiments}
\subsection{Experimental Setup}  
All models were implemented in PyTorch. For optimization, we used the AdamW optimizer \cite{loshchilov2019decoupledweightdecayregularization} with a weight decay of \(5 \times 10^{-4}\) for standard architectures, and Riemannian Adam optimizer \cite{bécigneul2019riemannianadaptiveoptimizationmethods} (via the \texttt{geoopt} library \cite{geoopt2020kochurov}) for GeoHNN model to accommodate manifold-constrained parameters. Training was performed for a maximum of 1000 epochs with early stopping based on validation loss, using a patience of 50 epochs and a minimum relative improvement threshold of \(10^{-6}\). Learning rates were selected from the set \(\{10^{-3}, 5 \times 10^{-4}, 10^{-4}, 5 \times 10^{-5}, 10^{-5}\}\) through manual tuning per model and dataset to ensure stable convergence.

Datasets were split into 80\% training, 10\% validation, and 10\% test sets. A batch size of 128 was used across all experiments. Final model selection was based on the lowest validation loss, and all reported performance metrics are computed on the held-out test set. Training was conducted on an NVIDIA RTX8000 GPU. To assess robustness and statistical significance, each model was trained over five independent runs, and we report the mean and standard deviation of the resulting test metrics: energy drift, and trajectory error.

\subsection{Models Description}
To assess the impact of geometric priors on learning Hamiltonian dynamics, we evaluate a diverse set of machine learning models. These models vary in how they incorporate such priors and are categorized based on their suitability for low- and high-dimensional systems.

\subsubsection{Low-Dimensional Setting}

\begin{itemize}
    \item \textbf{Baseline MLP} A fully-connected multi-layer perceptron that directly models the time derivative \(\dot{x} = f(x)\) without any physical or geometric inductive bias. This model serves as a reference to quantify the benefits of incorporating Hamiltonian structure and geometry in learning.
    \item \textbf{Vanilla HNN)} The original HNN formulation \citep{hamiltonian_neural_networks}, which parameterizes the Hamiltonian function \(H(q, p)\) via a neural network and derives the system dynamics by applying Hamilton's equations. This model incorporates the physical Hamiltonian prior but does not explicitly enforce geometric constraints.
    \item \textbf{DoubleHead HNN} An extension of the standard HNN that explicitly models kinetic and potential energies via two separate neural network branches. One branch learns the kinetic energy term \(T(q, p)\), through a learned symmetrized mass matrix, while the other models the potential energy \(V(q)\).
    \item \textbf{Cholesky-Parameterized HNN} Another variant that ensures positive definiteness of the mass matrix \(M(q)\) by parameterizing it via its Cholesky decomposition, \(M(q) = L(q) L(q)^\top\). This guarantees physically valid inertia matrices, though it does not explicitly respect the SPD manifold geometry.
    \item \textbf{GeoHNN} Our proposed model that explicitly parameterizes \(M(q)\) as a point on the symmetric positive definite (SPD) manifold. By leveraging Riemannian optimization, GeoHNN preserves the manifold geometry, ensuring physically consistent and valid mass matrices.
\end{itemize}

\subsubsection{High Dimensional Setting}
In high-dimensional systems, we use GeoHNN to learn Hamiltonian dynamics by modeling the inertia matrix \(M(q)\) on the SPD manifold with Riemannian optimization, and perform dimensionality reduction using different choices of autoencoders.

\paragraph{Vanilla Autoencoder}
This autoencoder takes the concatenated position and momentum vectors \((q, p)\) as input and encodes them into a latent vector of size \(2r\). GeoHNN is applied in the latent space to model the reduced Hamiltonian dynamics efficiently.

\paragraph{Biorthogonal Autoencoder}
A geometry-preserving autoencoder that enforces biorthogonality constraints between encoder and decoder. This ensures the latent space satisfies the reduction map property.

\subsection{Dataset Description}

We evaluate our models on a diverse set of low- to high-dimensional physical systems. These systems vary in size and complexity, allowing us to assess the models’ ability to learn geometry-preserving dynamics that respect conservation laws while scaling effectively. 

\subsubsection{Low-Dimensional Systems}
In low-dimensional settings, we consider the following physical systems:
\begin{itemize}
    \item \textbf{Mass-Spring System} A point mass attached to a linear spring oscillating in one dimension. This system features linear dynamics, and a constant inertia matrix.
    \item \textbf{Coupled Oscillators} Two masses connected by springs in a one-dimensional arrangement. The system exhibits harmonic coupling between degrees of freedom. It is used to evaluate the model's ability to capture interactions between subsystems in a conservative setting.
    \item \textbf{Two-Body Problem} Two particles interacting via Newtonian gravity in a two-dimensional plane. The dynamics follow an inverse-square central force law and conserve both energy and angular momentum. 
    \item \textbf{Single Pendulum} A single rigid body rotating in a vertical plane under the influence of gravity. This system has one degree of freedom and exhibits periodic motion with conserved energy. It serves as a classical example of nonlinear Hamiltonian dynamics.
\end{itemize}

\subsubsection{High-Dimensional Systems}
In high-dimensional settings, we consider a \textbf{\emph{deformable cloth}} \cite{friedl2025riemannianframeworklearningreducedorder}, which is a two-dimensional mass-spring model. Each vertex is treated as a point mass connected to its neighbors by springs, resulting in a high number of degrees of freedom. This system challenges the model’s ability to learn geometry-preserving dynamics efficiently in high-dimensional settings.

\subsection{Numerical Results}
Our experiments across various physical systems consistently demonstrate that incorporating geometric priors significantly improves learning accuracy, long-term stability, and energy conservation in Hamiltonian systems.
\subsubsection{Low Dimensional Setting}
Firstly, we begin by evaluating our method in low-dimensional settings to isolate and understand the impact of incorporating geometric priors on the inertia matrix.
\paragraph{Mass Spring}
Figure~\ref{fig:mass_spring_trajectory_error} shows that the MLP baseline exhibits rapid error growth on the mass-spring system, with trajectory error increasing from \(10^{-2}\) to nearly \(10^{0}\) over a 50-unit integration window. In contrast, all HNN variants (vanilla HNN, DoubleHeadHNN, CholeskyHNN, and GeoHNN) maintain bounded errors within \(10^{-3}\) to \(10^{-2}\). 
\begin{figure}[H]
    \centering
    \includegraphics[width=0.65\columnwidth]{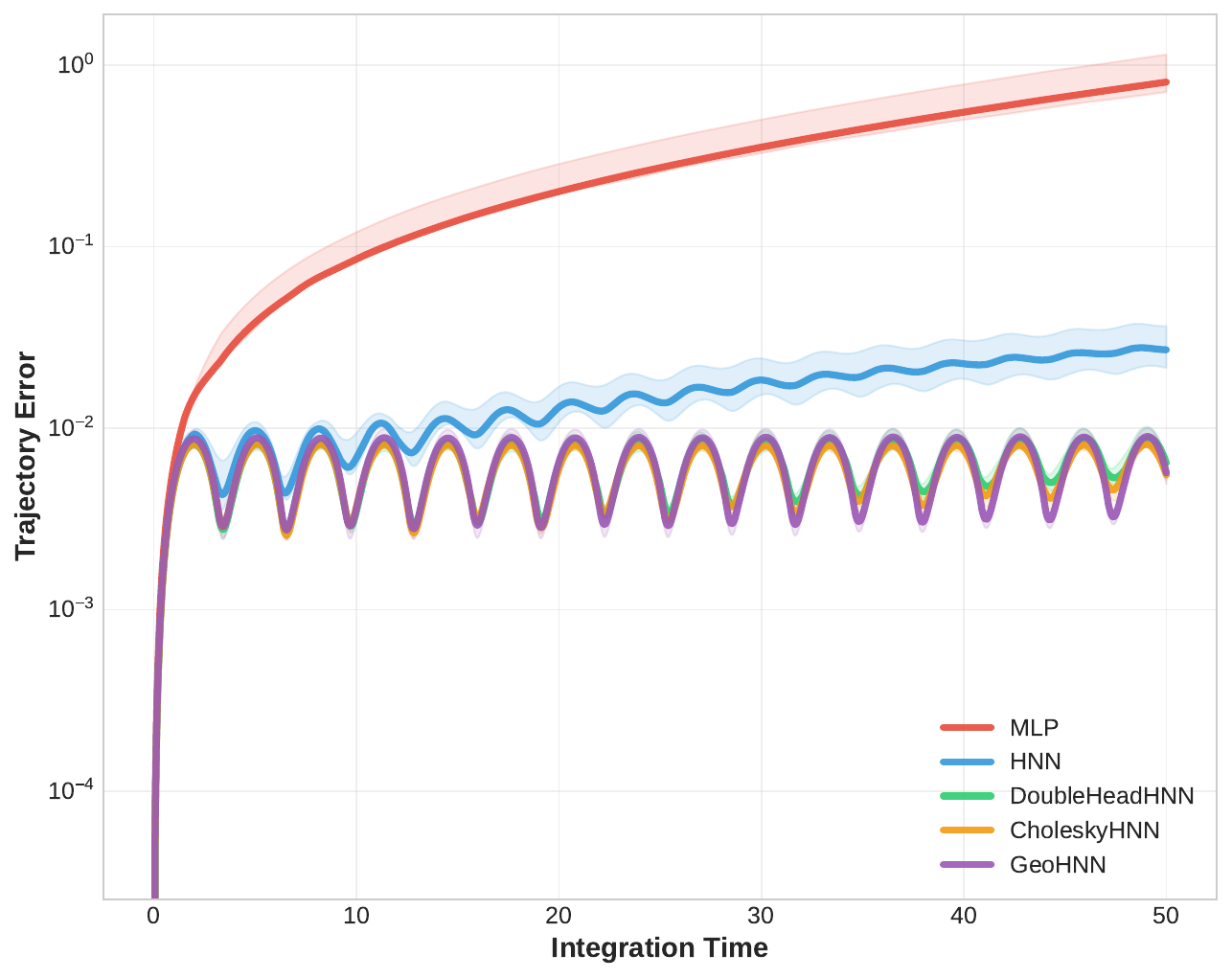}
    \caption{Long-term trajectory error as a function of time for the mass-spring system.}
    \label{fig:mass_spring_trajectory_error}
\end{figure}
Figure~\ref{fig:mass_spring_energy} further illustrates that the MLP suffers from unbounded relative energy drift, reaching values near \(10^{1}\), whereas the HNN variants exhibit oscillatory, bounded energy errors around \(10^{-1}\). Among these, GeoHNN achieves the lowest and most stable errors across both metrics, followed by CholeskyHNN.
\begin{figure}[H]
    \centering
    \includegraphics[width=0.65\columnwidth]{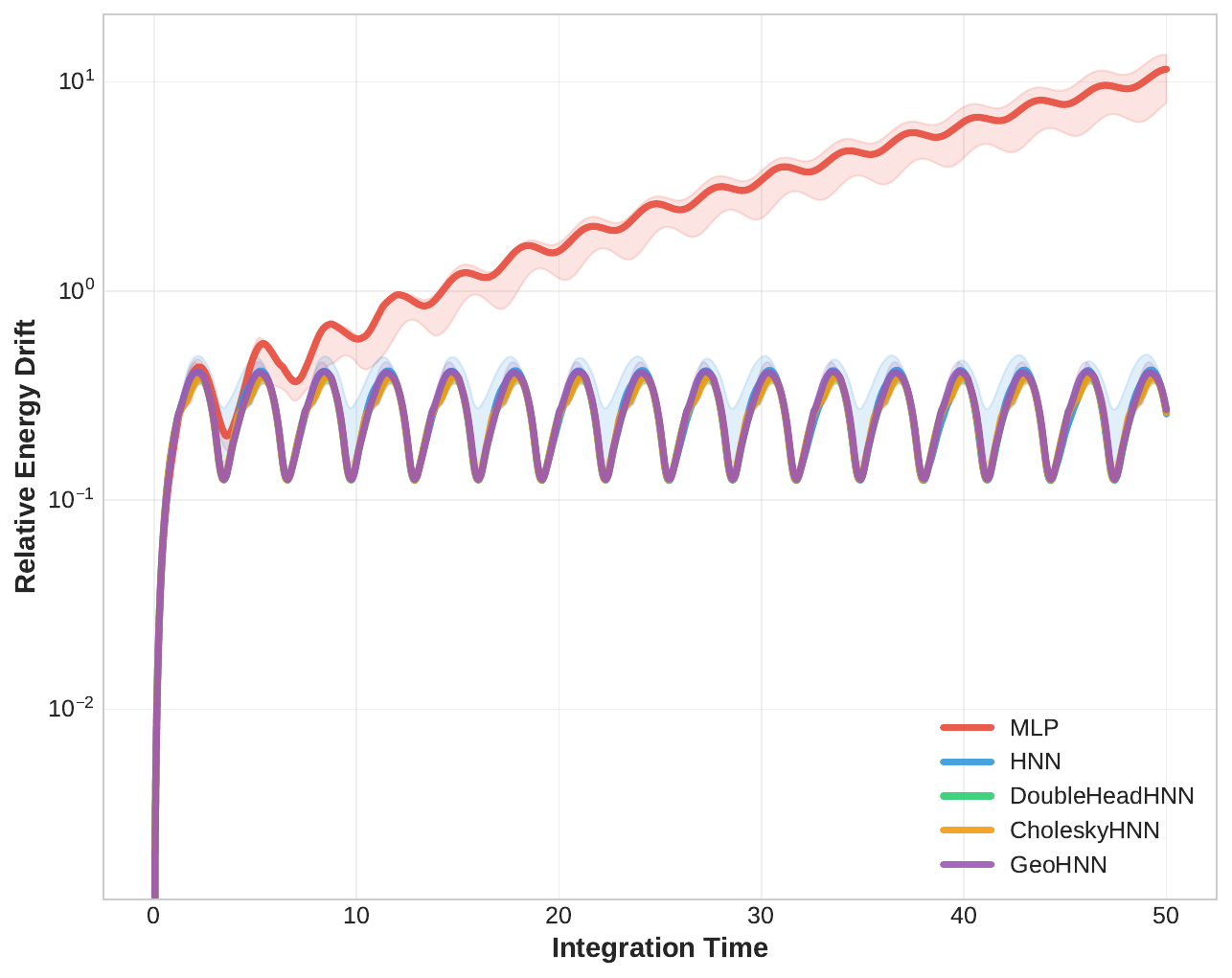}
    \caption{Energy drift as a function of time for the mass-spring system.}
    \label{fig:mass_spring_energy}
\end{figure}

\paragraph{3-Coupled Oscillators}
As shown in Figure~\ref{fig:coupled_oscillators_trajectory_error}, the MLP baseline exhibits exponential error growth from \(10^{-2}\) to \(10^{0}\), leading to unstable long-term predictions. In contrast, structure-preserving models remain stable, with HNN reaching \(\sim 5 \times 10^{-2}\), DoubleHeadHNN and CholeskyHNN improving to \(\sim 2 \times 10^{-2}\), and GeoHNN achieving the lowest errors near \(10^{-2}\). 
\begin{figure}[H]
    \centering
    \includegraphics[width=0.65\columnwidth]{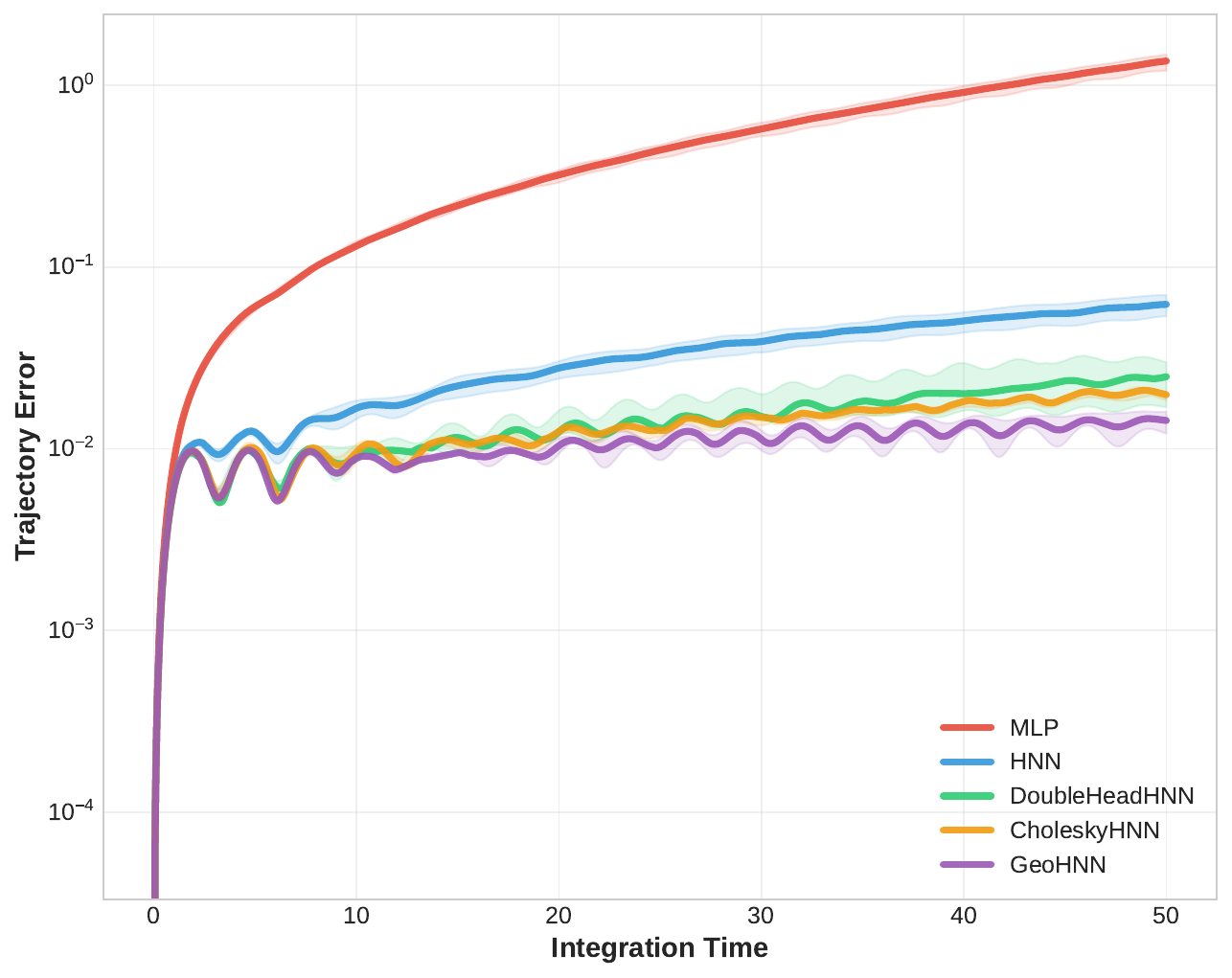}
    \caption{Long-term trajectory error as a function of time for $3$-coupled oscillators system.}
    \label{fig:coupled_oscillators_trajectory_error}
\end{figure}
Moreover, Figure~\ref{fig:coupled_oscillators_energy} shows a similar trend in energy conservation: the MLP exhibits unbounded drift up to \(10^{1}\), while all HNN variants maintain bounded oscillations around \(10^{-2}\), with GeoHNN again showing the most consistent behavior.
\begin{figure}[H]
    \centering
    \includegraphics[width=0.65\columnwidth]{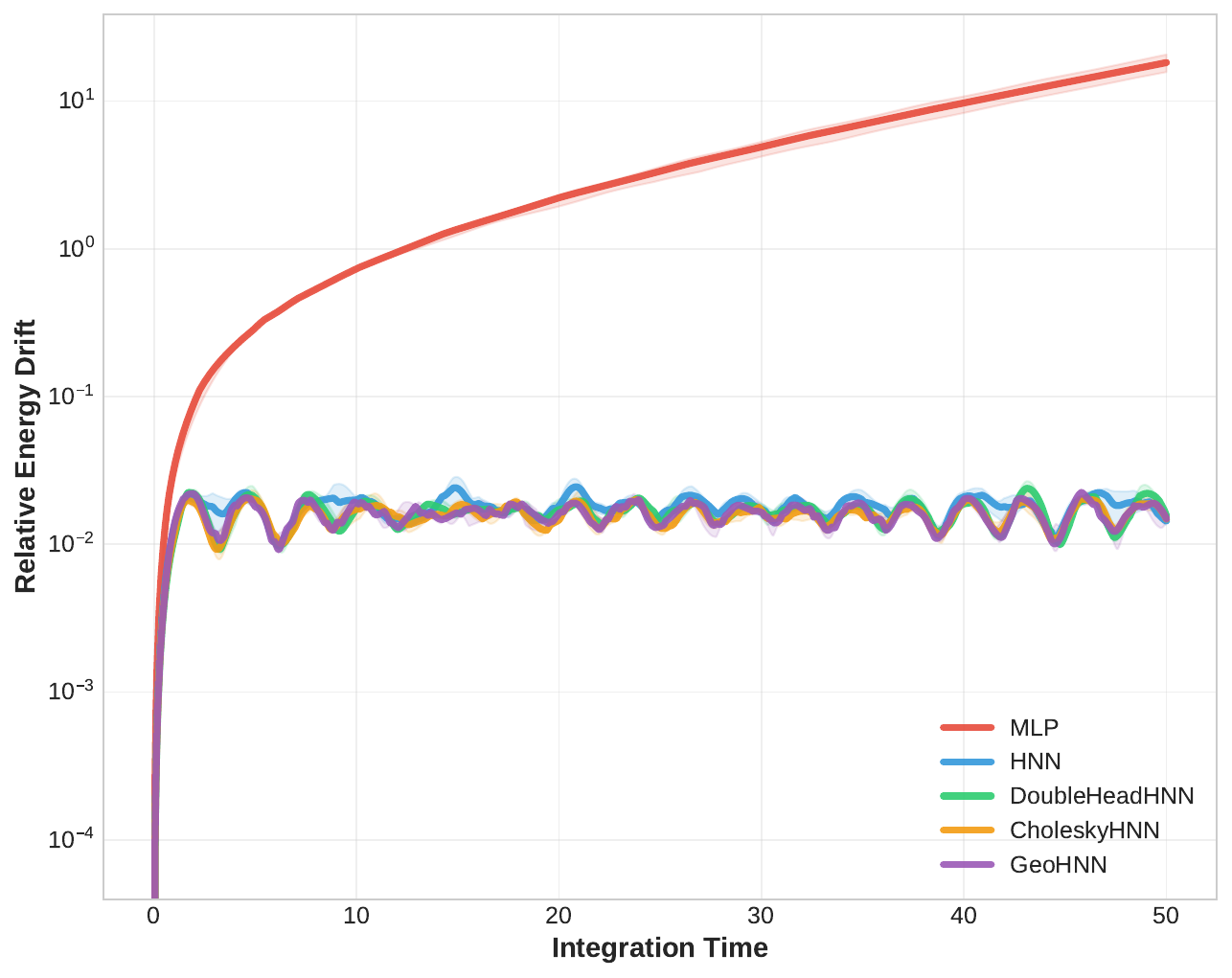}
    \caption{Energy drift as a function of time for $3$-coupled oscillators system.}
    \label{fig:coupled_oscillators_energy}
\end{figure}
\paragraph{Two Body Problem}
As shown in Figure~\ref{fig:two_body_trajectory_error}, the MLP baseline exhibits rapid error accumulation, with trajectory error reaching \(10^{2}\), and correspondingly the energy drift grows to \(10^{3}\) (see Figure~\ref{fig:two_body_energy}). Among structure-preserving models, HNN maintains stable trajectory errors around \(10^{1}\) and energy drift near \(10^{2}\), while CholeskyHNN shows similar performance but with slightly increased long-term error growth. GeoHNN demonstrates the most robust behavior, with trajectory error remaining near unity and energy drift bounded around \(10^{-1}\), highlighting the importance of properly accounting for the geometry of the inertia matrix.

\begin{figure}[H]
    \centering
    \includegraphics[width=0.65\columnwidth]{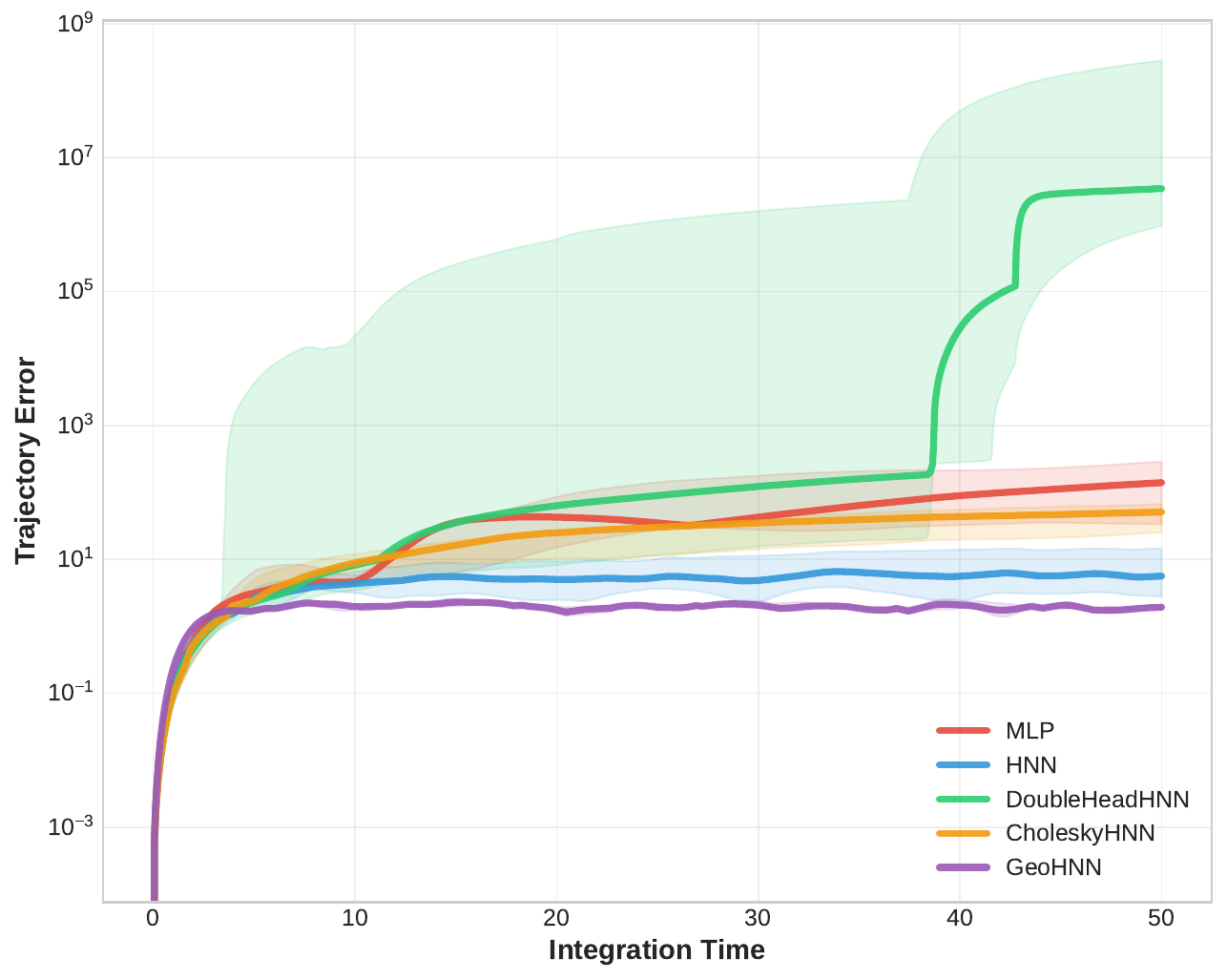}
    \caption{Long-term trajectory error as a function of time for the two body problem.}
    \label{fig:two_body_trajectory_error}
\end{figure}

In contrast, DoubleHeadHNN suffers severe numerical instability around \(t=40\), with trajectory error and energy drift diverging dramatically to \(10^{6}\) and \(10^{17}\), respectively. This instability arises because both CholeskyHNN and DoubleHeadHNN enforce positive definiteness algebraically but ignore the underlying SPD manifold geometry during optimization. GeoHNN, by explicitly respecting the SPD geometry through Riemannian optimization, inherently preserves the positive definiteness constraint, resulting in better numerical stability and accuracy.

\begin{figure}[H]
    \centering
    \includegraphics[width=0.65\columnwidth]{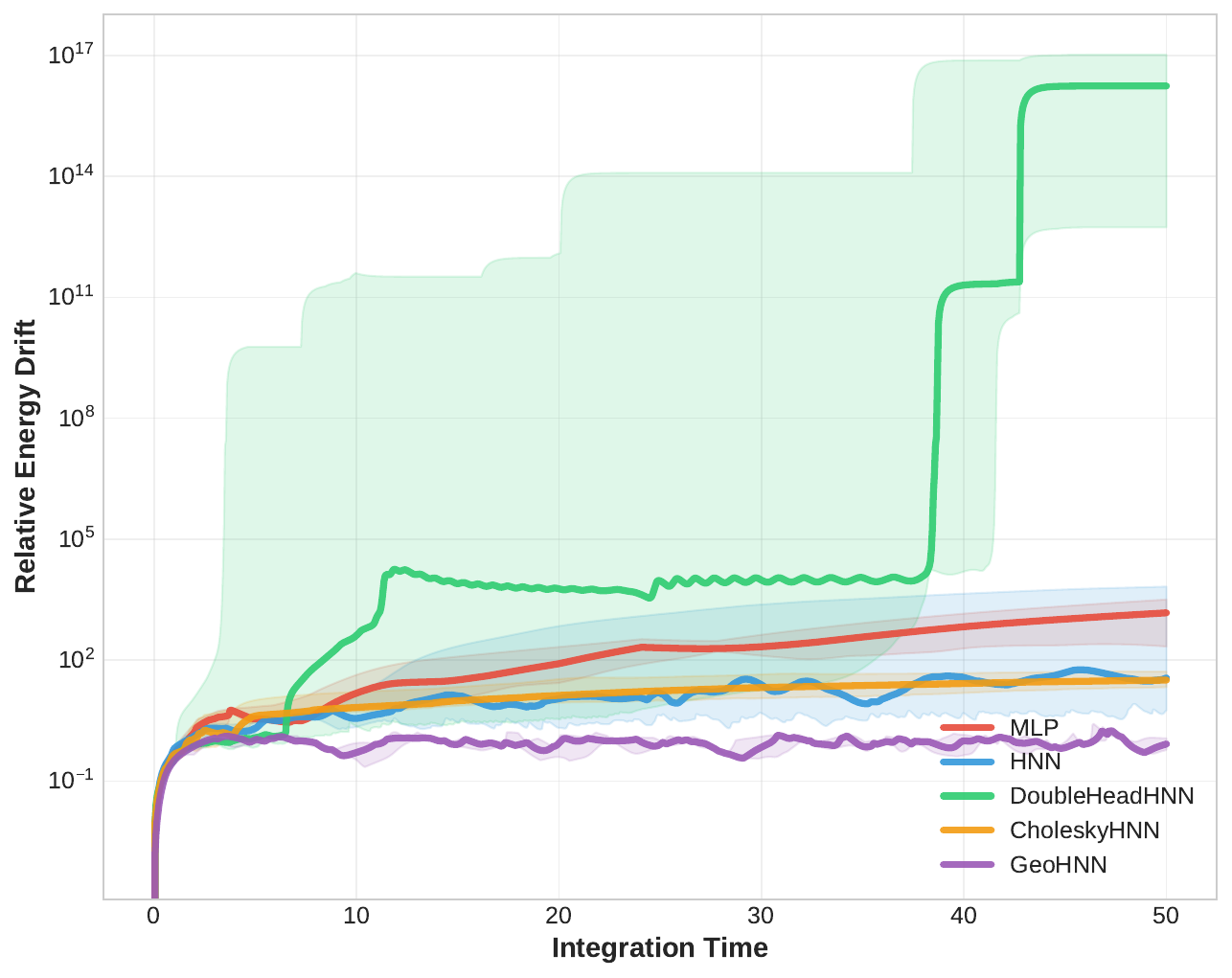}
    \caption{Energy Drift as a function of time for the two body problem.}
    \label{fig:two_body_energy}
\end{figure}

\paragraph{Single Pendulum}
As shown in Figure~\ref{fig:single_pendulum_trajectory_error}, the standard MLP suffers from severe instability, with trajectory errors growing exponentially from \(10^{-1}\) to over \(10^{2}\). In contrast, structure-preserving Hamiltonian variants exhibit markedly improved stability: the baseline HNN accumulates errors up to approximately 20; DoubleHeadHNN and CholeskyHNN improve stability with errors plateauing near 10; and GeoHNN achieves the best performance, maintaining errors near 5 throughout the integration horizon.
\begin{figure}[H]
    \centering
    \includegraphics[width=0.65\columnwidth]{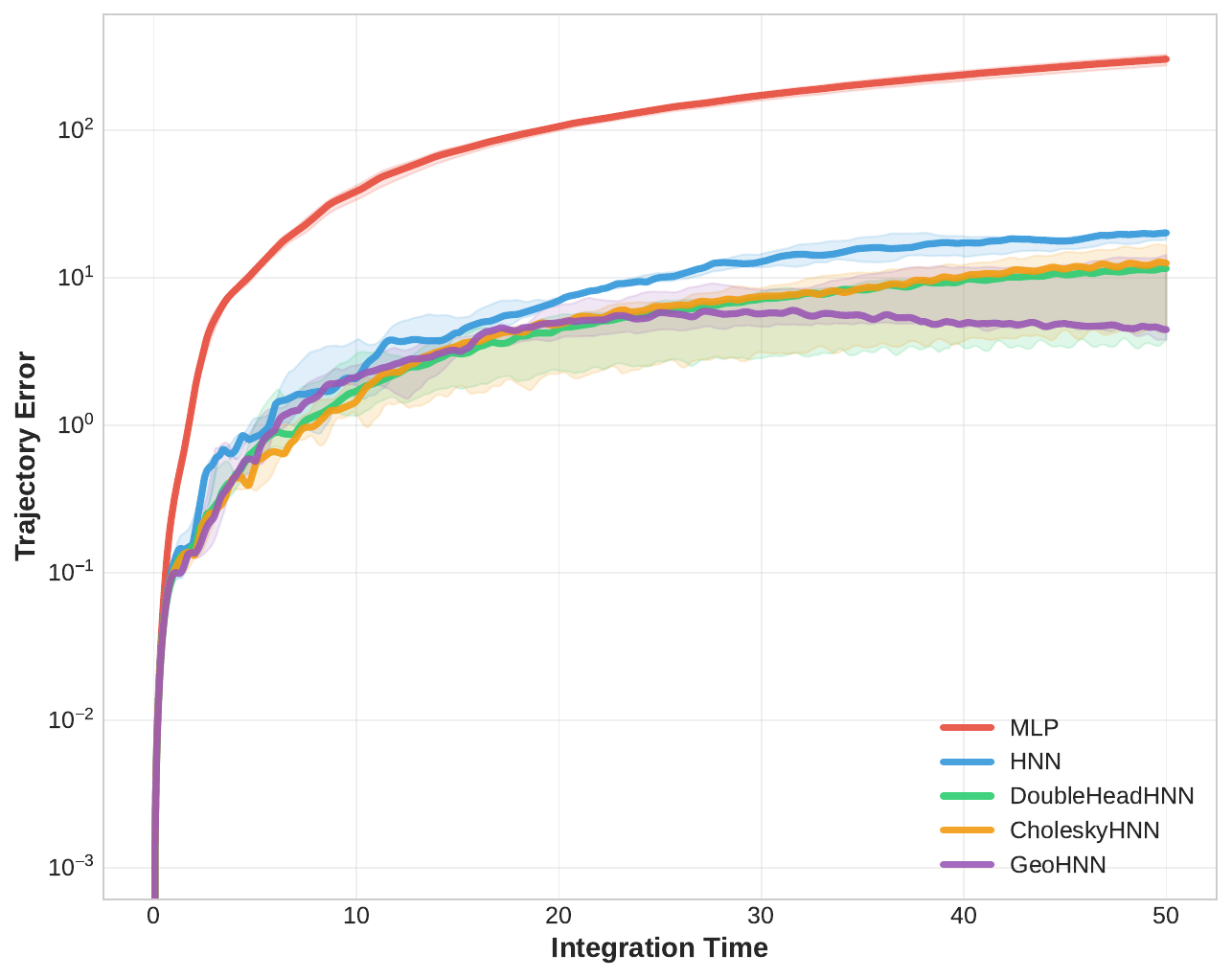}
    \caption{Long-term trajectory error as a function of  time for the single pendulum system.}
    \label{fig:single_pendulum_trajectory_error}
\end{figure}

Figure~\ref{fig:single_pendulum_energy} presents the corresponding energy drift results, where the MLP’s energy error increaase from 1 to \(10^{3}\), indicating a failure to preserve physical conservation laws. Conversely, all Hamiltonian models maintain energy drift bounded around \(10^{-1}\), representing over three orders of magnitude improvement in conservation accuracy.

\begin{figure}[H]
    \centering
    \includegraphics[width=0.65\columnwidth]{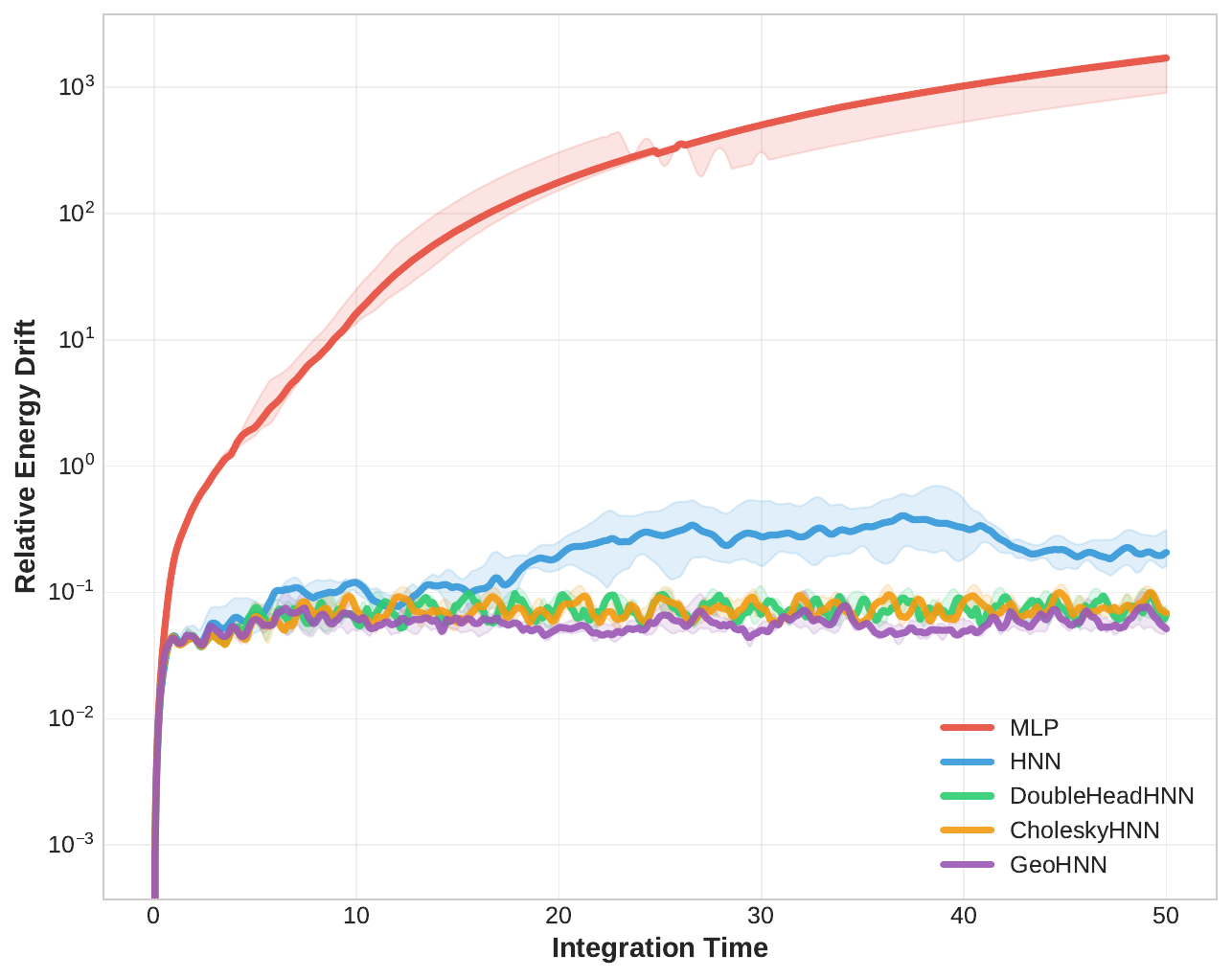}
    \caption{Energy drift as a function of time for the single pendulum system.}
    \label{fig:single_pendulum_energy}
\end{figure}

\subsubsection{High Dimensional Setting}
In the high-dimensional setting, we fix the base model to GeoHNN and vary only the model reduction component to isolate the impact of incorporating symplectic geometry. 

\paragraph{Dynamics Prediction}
We quantitatively compare the dynamics prediction accuracy of the proposed \emph{constrained biorthogonal autoencoder} against a \emph{vanilla unconstrained autoencoder}, focusing on position and momentum across six test trajectories and up to 501 degrees of freedom.

\begin{itemize}
    \item \textbf{Constrained Biorthogonal Autoencoder (Figure \ref{fig:ro-hnn-dynamics})} The constrained model achieves a mean position error of \(1.84 \times 10^{-1}\) (standard deviation: \(1.13 \times 10^{-1}\)) and a mean momentum error of \(1.05\) (standard deviation: \(8.31 \times 10^{-1}\)). Performance remains consistent across all degrees of freedom (DoF), indicating robust generalization to high-dimensional setting.
    \begin{figure}[H]
    \centering
    \includegraphics[width=\linewidth]{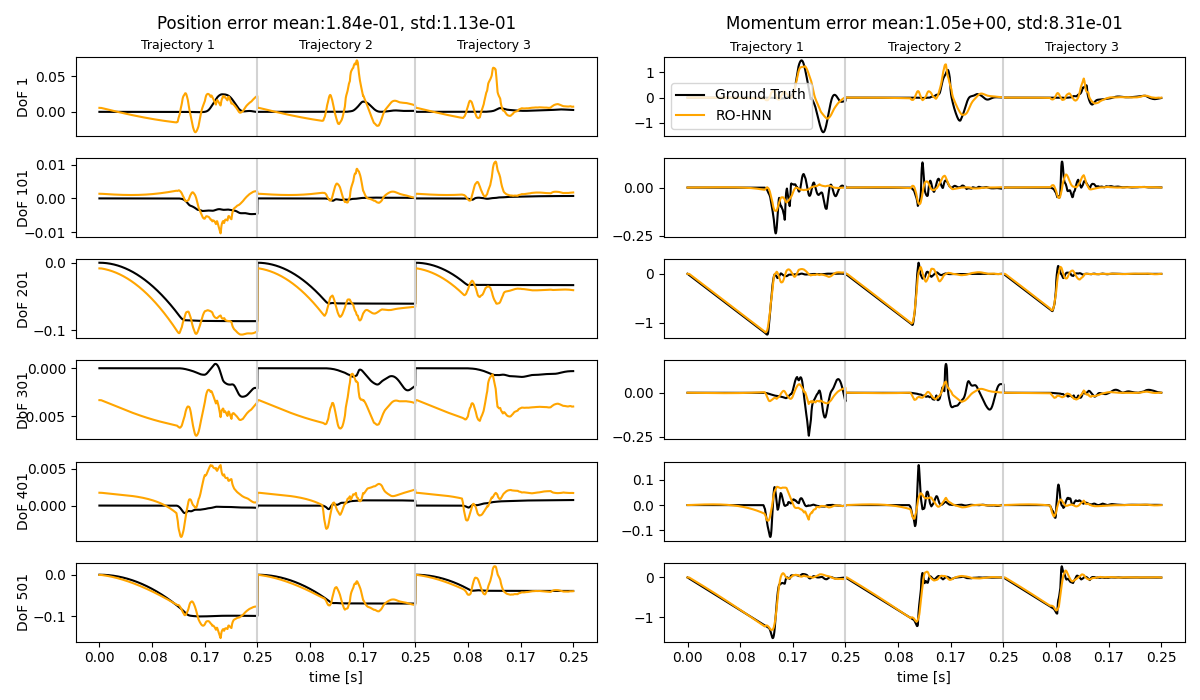}
    \caption{Prediction error across degrees of freedom for position and momentum using GeoHNN with a constrained biorthogonal autoencoder.}
    \label{fig:ro-hnn-dynamics}
\end{figure}
    \item \textbf{Vanilla Autoencoder (Figure \ref{fig:ro-hnn-euc-dynamics})} In contrast, the unconstrained model yields substantially worse accuracy, with a mean position error of \(1.82\) (standard deviation: \(2.09 \times 10^{-1}\)) and a mean momentum error of \(6.26\) (standard deviation: \(4.89\)). Prediction degrades significantly in high DoFs, where the model frequently underestimates dynamics, producing near-zero outputs when substantial motion is present.
    
\begin{figure}[H]
    \centering
    \includegraphics[width=\linewidth]{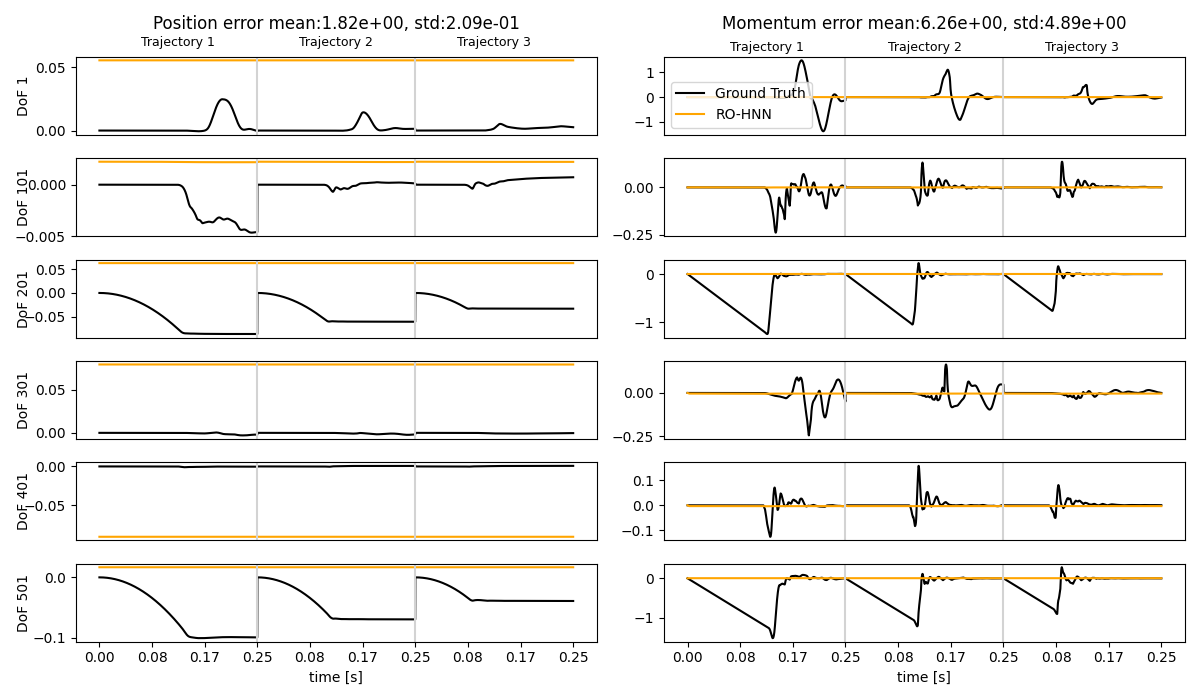}
    \caption{Prediction error across degrees of freedom for position and momentum using GeoHNN with a a unconstrained vanilla autoencoder.}
    \label{fig:ro-hnn-euc-dynamics}
\end{figure}
\end{itemize}
GeoHNN with constrained biorthogonal autoencoder improves position and momentum prediction by approximately \(10\times\) and \(6\times\), respectively, over the unconstrained baseline. It also exhibits markedly reduced variance. These improvements highlight the benefit of embedding geometric structure into the autoencoder. Specifically, the preservation of the symplectic structure appears essential in ensuring physically plausible and long-term stable dynamics, as further evidenced by sustained accuracy across all evaluated trajectories.

\paragraph{Reduction Map Property Satisfaction} We assess the fidelity of the learned reduction maps by measuring trajectory reconstruction error across all degrees of freedom, providing insight into how well each autoencoder preserves essential dynamical information.

\begin{itemize}
    \item \textbf{Constrained Biorthogonal Autoencoder (Figure \ref{fig:ro-hnn-ae})} The constrained model exhibits outstanding reduction map property satisfaction, with consistently low reconstruction error. Position reconstruction yields a mean absolute error of \(7.79 \times 10^{-2}\) (standard deviation: \(5.00 \times 10^{-3}\)), while momentum reconstruction achieves a mean error of \(8.75 \times 10^{-1}\) (std: \(7.83 \times 10^{-1}\)). Visual inspection confirms close alignment between reconstructed and ground truth trajectories across all DoFs, indicating that the learned latent space preserves the geometric structure of the original phase space.
    \begin{figure}[H]
    \centering
    \includegraphics[width=\linewidth]{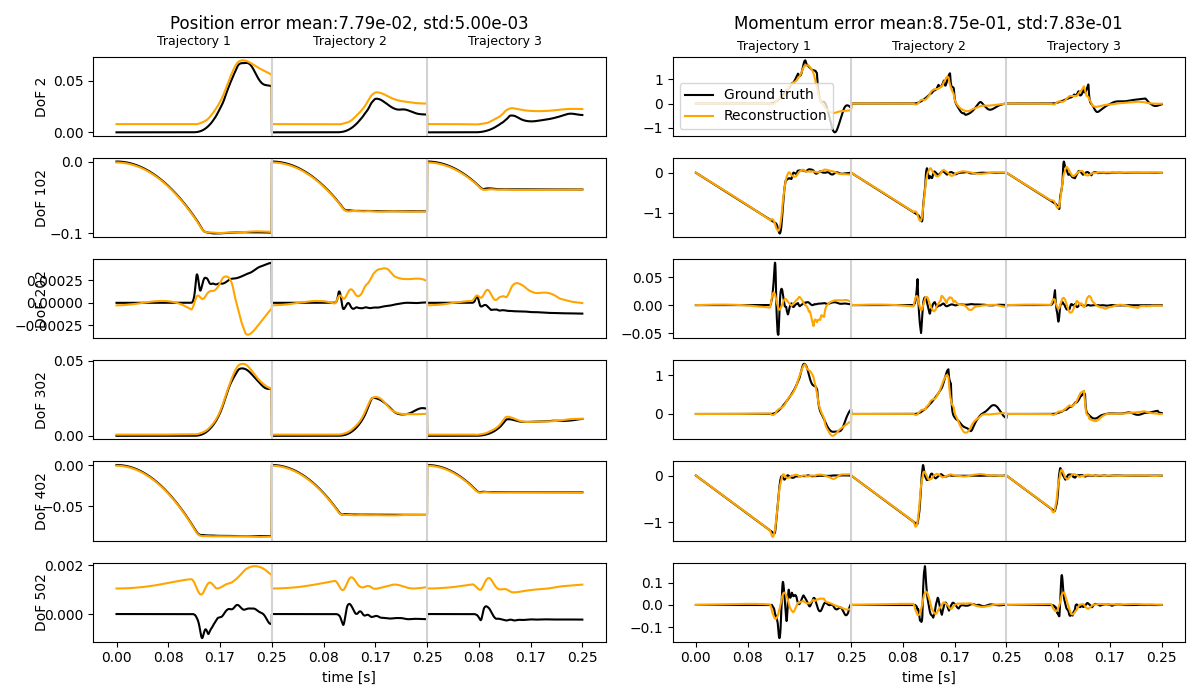}
    \caption{Trajectory reconstruction comparison for position and momentum using constrained biorthogonal autoencoder.}
    \label{fig:ro-hnn-ae}
\end{figure}
    \item \textbf{Vanilla Autoencoder (Figure \ref{fig:ro-hnn-euc-ae})} In contrast, the unconstrained model shows substantial degradation. Position reconstruction errors rise more than an order of magnitude to \(1.82\) (std: \(2.09 \times 10^{-1}\)), and momentum reconstruction fails catastrophically with mean error reaching \(6.26\) (std: \(4.89\)). Reconstruction is particularly poor in higher DoFs (e.g., DoF-302, DoF-402, DoF-502), where the model often collapses to near-constant outputs, failing to recover dynamic variability.

    \begin{figure}[H]
    \centering
    \includegraphics[width=\linewidth]{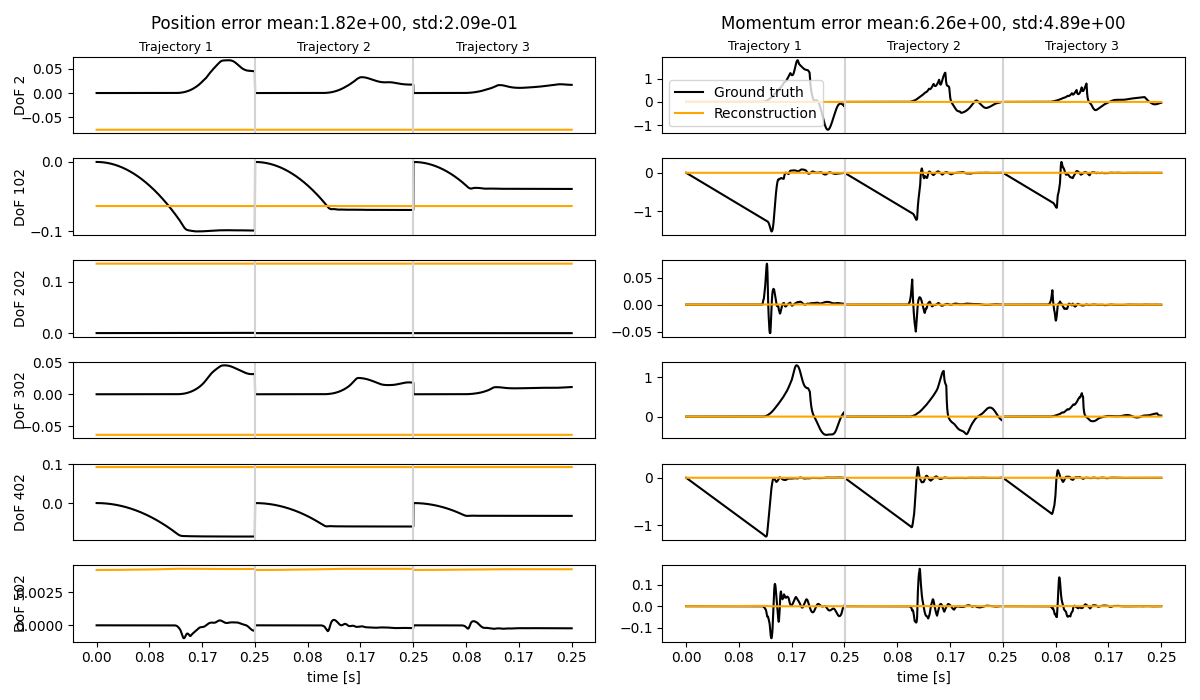}
    \caption{Trajectory reconstruction comparison for position and momentum using unconstrained vanilla autoencoder.}
    \label{fig:ro-hnn-euc-ae}
\end{figure}

\end{itemize}
The biorthogonal constrained autoencoder improves position and momentum reconstruction by approximately \(23\times\) and \(7\times\), respectively, compared to the unconstrained baseline. The significantly lower standard deviations highlight its consistency across trajectory segments. These results confirm that the constrained architecture learns a well-posed, geometry-preserving reduction map, while the unconstrained model suffers from severe information loss, likely due to lack of structural priors enforcing invertibility and symplecticity.

\section{Conclusion and Future Work}
In this work, we have developed a machine learning framework, GeoHNN, that successfully learns the dynamics of complex Hamiltonian systems by fundamentally respecting their geometric nature. By enforcing the Riemannian geometry of system inertia and the symplectic structure of phase space, our approach ensures that the learned models adhere to critical conservation laws, exhibit long-term stability, and generalize effectively across a variety of physical systems. This demonstrates that the explicit encoding of geometric priors is a crucial step beyond current physics-informed methods, reducing physical inconsistencies that have limited their applicability.

The success of this geometry-aware framework opens several exciting frontiers for data-driven physical modeling. The principles established here could be extended to a broader class of physical phenomena that have so far remained challenging. One such frontier is the dynamics of controlled and dissipative systems, where energy is no longer conserved but flows in a structured manner. This could directly apply to robotics, for example, by creating better models for complex tasks like controlling a robotic arm with flexible joints or helping a legged robot walk by understanding dynamic contact and balance. Our method is especially suited for these problems because it correctly models the inertia of rigid-body systems often found in robotics. Another area for future work lies in infinite-dimensional systems, such as fluid and soft-body dynamics, which are governed by partial differential equations and represent a significant jump in complexity. Finally, integrating these geometric constraints into models of stochastic Hamiltonian systems could reveal new insights into molecular dynamics and climate modeling, where noise and structure-preserving dynamics might be linked. This approach could even be extended to financial markets, where recent work proposes a Hamiltonian structure by defining market momentum as signed trading volume and position as the asset’s price; applying our geometry-aware model could lead to new methods for optimal trade execution.

By bridging data-driven modeling with the foundational geometric principles of physics, this work opens the door for a new generation of computational models capable of learning and predicting the dynamics of the physical world with improved fidelity and robustness.

\bibliographystyle{plain}
\bibliography{references} 
\newpage
\appendix
\section{Mathematical Formalism}

In this section, we outline the mathematical background needed to understand the theoretical aspects of our framework. Since our goal is to preserve the underlying geometry of the physical systems we model, we work with ``manifolds'', which generalize Euclidean spaces and allow us to represent more complex geometric structures.

\subsection{A short introduction to Smooth Manifolds}
In many physical systems, the space of possible configurations is not flat like $\mathbb{R}^n$, but curved or constrained in some way. To capture this, we model such spaces as manifolds. Intuitively, a manifold is a space that may be curved globally but looks like a flat Euclidean space locally.

\begin{definition}[Topological Manifold]
A Hausdorff topological space $M$ is called an $n$-dimensional topological manifold if:
\begin{enumerate}
\item $M$ is second-countable (possesses a countable basis for its topology)
\item $M$ is locally Euclidean: for each point $p \in M$, there exists an open neighborhood $U \ni p$ and a homeomorphism $\varphi: U \to V \subseteq \R^n$, where $V$ is open in $\R^n$.
\end{enumerate}

The pair $(U, \varphi)$ is called a coordinate chart or simply a chart. The homeomorphism $\varphi$ provides local coordinates for points in $U$.
\end{definition}

\begin{remark}
The Hausdorff condition ensures that distinct points can be separated by disjoint open neighborhoods, while second-countability guarantees the existence of partitions of unity, crucial for many constructions.
\end{remark}

To perform calculus on manifolds, we need additional structure beyond topology.

\begin{definition}[Smooth Atlas]
A smooth atlas on a topological manifold $M$ is a collection $\mathcal{A} = \{(U_\alpha, \varphi_\alpha)\}_{\alpha \in I}$ of charts such that:
\begin{enumerate}
\item $\{U_\alpha\}_{\alpha \in I}$ covers $M$
\item For any two charts $(U_\alpha, \varphi_\alpha)$ and $(U_\beta, \varphi_\beta)$ with $U_\alpha \cap U_\beta \neq \emptyset$, the transition map
\[\varphi_\beta \circ \varphi_\alpha^{-1}: \varphi_\alpha(U_\alpha \cap U_\beta) \to \varphi_\beta(U_\alpha \cap U_\beta)\]
is a smooth ($C^\infty$) diffeomorphism.
\end{enumerate}
\end{definition}

\begin{definition}[Smooth Manifold]
A smooth manifold is a topological manifold $M$ equipped with a maximal smooth atlas.
\end{definition}

\begin{theorem}[Existence of Maximal Atlas]
Every smooth atlas $\mathcal{A}$ on $M$ is contained in a unique maximal smooth atlas $\mathcal{A}_{\max}$.
\end{theorem}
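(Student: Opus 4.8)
The plan is to construct $\mathcal{A}_{\max}$ explicitly and then verify it has all the required properties. Define $\mathcal{A}_{\max}$ to be the set of all charts $(U,\varphi)$ on $M$ that are \emph{smoothly compatible} with every chart in $\mathcal{A}$, meaning that for each $(U_\alpha,\varphi_\alpha) \in \mathcal{A}$ with $U \cap U_\alpha \neq \emptyset$, both transition maps $\varphi_\alpha \circ \varphi^{-1}$ and $\varphi \circ \varphi_\alpha^{-1}$ are $C^\infty$ diffeomorphisms between the corresponding open subsets of $\R^n$. Since $\mathcal{A}$ is itself a smooth atlas, every chart of $\mathcal{A}$ is compatible with every other chart of $\mathcal{A}$, so $\mathcal{A} \subseteq \mathcal{A}_{\max}$; in particular the domains of the charts in $\mathcal{A}_{\max}$ cover $M$.

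The crux of the argument is to show that $\mathcal{A}_{\max}$ is a smooth atlas, i.e., that any two of its charts are mutually smoothly compatible. I would take $(U,\varphi), (V,\psi) \in \mathcal{A}_{\max}$ with $U \cap V \neq \emptyset$, fix $x \in \varphi(U \cap V)$, and set $p = \varphi^{-1}(x)$. Since $\mathcal{A}$ covers $M$, choose $(U_\alpha,\varphi_\alpha) \in \mathcal{A}$ with $p \in U_\alpha$. On the open neighborhood $\varphi(U \cap V \cap U_\alpha)$ of $x$ one has the factorization $\psi \circ \varphi^{-1} = (\psi \circ \varphi_\alpha^{-1}) \circ (\varphi_\alpha \circ \varphi^{-1})$, where both factors are smooth by the defining property of $\mathcal{A}_{\max}$. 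Because smoothness is a local condition, $\psi \circ \varphi^{-1}$ is smooth on all of $\varphi(U \cap V)$; the symmetric argument handles $\varphi \circ \psi^{-1}$. Hence $(U,\varphi)$ and $(V,\psi)$ are compatible, and $\mathcal{A}_{\max}$ is a smooth atlas.

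Next I would establish maximality and uniqueness, both of which follow formally. If $\mathcal{C} \supseteq \mathcal{A}_{\max}$ is any smooth atlas, then every chart of $\mathcal{C}$ is compatible with every chart of $\mathcal{C}$, in particular with every chart of $\mathcal{A} \subseteq \mathcal{A}_{\max} \subseteq \mathcal{C}$, so every chart of $\mathcal{C}$ lies in $\mathcal{A}_{\max}$, forcing $\mathcal{C} = \mathcal{A}_{\max}$. For uniqueness, suppose $\mathcal{B}$ is any maximal smooth atlas with $\mathcal{A} \subseteq \mathcal{B}$. Every chart of $\mathcal{B}$ is then compatible with every chart of $\mathcal{A}$, since $\mathcal{A} \subseteq \mathcal{B}$ and $\mathcal{B}$ is an atlas, so $\mathcal{B} \subseteq \mathcal{A}_{\max}$; since $\mathcal{A}_{\max}$ is a smooth atlas and $\mathcal{B}$ is maximal, $\mathcal{B} = \mathcal{A}_{\max}$.

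I expect the main obstacle to be the pairwise-compatibility step for $\mathcal{A}_{\max}$: the definition only guarantees compatibility of a chart with members of $\mathcal{A}$, so one must route through an intermediate chart of $\mathcal{A}$ at each point and invoke the locality of smoothness to upgrade the pointwise factorizations to a global smoothness statement. Once this is in place, maximality and uniqueness are routine set-theoretic bookkeeping, and the covering condition is inherited from $\mathcal{A}$.
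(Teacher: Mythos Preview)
Your proposal is correct and follows exactly the approach sketched in the paper: define $\mathcal{A}_{\max}$ as the collection of all charts compatible with every chart in $\mathcal{A}$, then verify that this collection is itself a smooth atlas and is maximal. The paper's proof is a one-line sketch, and your write-up simply fills in the standard details (the local factorization through a chart of $\mathcal{A}$ and the formal maximality/uniqueness checks) without any departure in strategy.
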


\begin{proof}
Define $\mathcal{A}_{\max}$ as the collection of all charts compatible with every chart in $\mathcal{A}$. Verification of maximality follows from the definition.
\end{proof}

\begin{example}
In classical mechanics, the configuration space of a physical system is the set of all possible positions the system can attain. When this space has a smooth structure, it is modeled as a configuration manifold, denoted by $\mathcal{Q}$.

\begin{enumerate}
    \item For a single pendulum, the position is described by an angle $\theta$, which lives on the circle $S^1$, a 1-dimensional manifold.
    \item For a double pendulum, the configuration space is $S^1 \times S^1$, a 2-dimensional manifold representing two angular degrees of freedom.
    \item For a rigid body in 3D space, the configuration manifold is $SE(3)$, the special Euclidean group of rigid transformations, which combines translation ($\mathbb{R}^3$) and rotation ($SO(3)$).
\end{enumerate}
These examples illustrate that configuration spaces naturally form smooth manifolds, often with nontrivial topology or curvature, making the manifold framework essential for modeling dynamics accurately.
\end{example}

Having defined smooth manifolds, we now turn to functions between them. Many physical quantities of interest, such as the Lagrangian and Hamiltonian, are best understood as smooth maps between manifolds. This allows us to reason about their differentiability, composition, and geometric properties in a coordinate-free manner.

\begin{definition}[Smooth Map]
A map $f: M \to N$ between smooth manifolds is smooth if for every pair of charts $(U, \varphi)$ on $M$ and $(V, \psi)$ on $N$ with $f(U) \cap V \neq \emptyset$, the composition
\[\psi \circ f \circ \varphi^{-1}: \varphi(U \cap f^{-1}(V)) \to \psi(f(U) \cap V)\]
is smooth as a map between Euclidean spaces.
\end{definition}
This ensures that the map $f$ is differentiable in every coordinate system and therefore behaves smoothly with respect to the underlying manifold structures.
\begin{definition}[Diffeomorphism]
A smooth bijection $f: M \to N$ with a smooth inverse is called a diffeomorphism. Manifolds related by a diffeomorphism are said to be diffeomorphic.
\end{definition}

In our context, diffeomorphisms play a crucial role in modeling coordinate-independent dynamics. For instance, canonical transformations in Hamiltonian mechanics, such as the changes of generalized coordinates, are diffeomorphisms between phase spaces that preserve the geometric structure of the equations of motion. These transformations allow us to express the same physical dynamics in different coordinate systems without altering the underlying mechanics.

Now that we have introduced smooth manifolds and smooth maps between them, we turn to the differential calculus necessary for modeling physical systems. To define derivatives, flows, and vector fields on manifolds, we need a notion of tangent vectors. Although subspaces of $\R^n$ inherit a notion of tangent vectors from the ambient space, abstract manifolds require an intrinsic formulation.
\begin{definition}[Tangent Vector as Derivation]
Let $M$ be a smooth manifold and $p \in M$. A tangent vector at $p$ is a linear map $v: C^\infty(M) \to \R$ satisfying the Leibniz rule:
\[v(fg) = f(p)v(g) + g(p)v(f)\]
for all $f, g \in C^\infty(M)$.

The set of all tangent vectors at $p$ forms a vector space denoted $T_p M$, called the tangent space at $p$.
\end{definition}

\begin{theorem}[Dimension of Tangent Space]
If $M$ is an $n$-dimensional manifold, then $\dim(T_p M) = n$ for all $p \in M$.
\end{theorem}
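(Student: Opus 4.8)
The plan is to fix a chart $(U, \varphi)$ about $p$, reduce the computation of $T_pM$ to an explicit calculation in a Euclidean neighborhood, and use a first-order Taylor formula there to pin down every derivation. Write $a := \varphi(p)$ and let $x^i$ denote the $i$-th coordinate function of the chart (i.e. the $i$-th component of $\varphi$, extended to all of $M$ by a bump function when necessary). The proof splits into a locality reduction, an upper bound via Hadamard's lemma, and a matching lower bound via the coordinate partials.

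First I would establish that derivations are \emph{local}. Applying the Leibniz rule to $1 = 1 \cdot 1$ gives $v(1) = 2v(1)$, hence $v(1) = 0$, and by linearity $v(c) = 0$ for every constant $c$. Next, if $h \in C^\infty(M)$ vanishes on an open neighborhood $W$ of $p$, choose a smooth bump function $\chi \in C^\infty(M)$ with $\chi(p) = 1$ and $\operatorname{supp}\chi \subset W$; then $\chi h \equiv 0$, so $0 = v(\chi h) = \chi(p)\,v(h) + h(p)\,v(\chi) = v(h)$. Thus $v(f)$ depends only on the germ of $f$ at $p$, and — extending a locally defined function to a global one by another bump function — it suffices to understand how $v$ acts on smooth functions defined near $a$ in $\mathbb{R}^n$.

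Working in the chart on a convex neighborhood of $a$, Hadamard's lemma gives $f(x) = f(a) + \sum_{i=1}^n (x^i - a^i)\,g_i(x)$ with $g_i$ smooth and $g_i(a) = \partial_i f(a)$. Applying $v$: the constant term $f(a)$ dies by locality, the Leibniz rule expands each product, and the factors $x^i - a^i$ vanish at $p$, leaving $v(f) = \sum_{i=1}^n v(x^i)\,\partial_i f(a)$. Hence $v$ is determined by the $n$ scalars $v(x^1), \dots, v(x^n)$, so $v \mapsto (v(x^1), \dots, v(x^n))$ embeds $T_pM$ linearly into $\mathbb{R}^n$ and $\dim T_pM \le n$. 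For the reverse inequality, the coordinate partials $\frac{\partial}{\partial x^i}\big|_p : f \mapsto \partial_i(f \circ \varphi^{-1})(a)$ are genuine derivations (the Leibniz rule follows from the ordinary product rule in $\mathbb{R}^n$), and $\frac{\partial}{\partial x^i}\big|_p(x^j) = \delta^j_i$ shows they are linearly independent; so $\dim T_pM \ge n$, and equality follows.

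The step I expect to be the main obstacle is the locality reduction: derivations are defined on the algebra of \emph{global} smooth functions, yet the argument is inherently local, so the proof relies crucially on the existence of bump functions — which is precisely why second-countability and the Hausdorff property were flagged earlier (they furnish partitions of unity). Once locality is secured, Hadamard's lemma and the $\delta^j_i$ computation are routine.
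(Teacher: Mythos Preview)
Your proof is correct and follows essentially the same route as the paper's: both fix a chart, exhibit the coordinate partials $\frac{\partial}{\partial x^i}\big|_p$, establish their linear independence by evaluating on the coordinate functions $x^j$ to get $\delta^j_i$, and then use a first-order Taylor expansion together with the Leibniz rule to show that every derivation is a linear combination of these partials. The paper's version is a terse sketch (it simply invokes ``Taylor's theorem and the Leibniz rule'' for the spanning step), whereas you have filled in the locality reduction via bump functions and made Hadamard's lemma explicit; this extra care is exactly what is needed to make the sketch rigorous, and your identification of locality as the crux is on point.
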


\begin{proof}
Let $(U, \varphi)$ be a chart with $p \in U$ and $\varphi(p) = 0$. Define coordinate vectors
\[\frac{\partial}{\partial x^i}\bigg|_p : f \mapsto \frac{\partial(f \circ \varphi^{-1})}{\partial x^i}\bigg|_{\varphi(p)}\]

We claim that $\{\frac{\partial}{\partial x^1}|_p, \ldots, \frac{\partial}{\partial x^n}|_p\}$ forms a basis for $T_p M$.

\textit{Linear Independence:} Suppose $\sum_i a^i \frac{\partial}{\partial x^i}|_p = 0$. Applying this to the coordinate function $x^j = \pi^j \circ \varphi$ gives $a^j = 0$.

\textit{Spanning:} For any $v \in T_p M$, define $v^i = v(x^i)$. We show that $v = \sum_i v^i \frac{\partial}{\partial x^i}|_p$ using Taylor's theorem and the Leibniz rule.
\end{proof}

Alternatively, a more geometric interpretation defines tangent vectors as velocities of curves passing through the point.

\begin{definition}[Velocity Vector]
Let $\gamma: (-\varepsilon, \varepsilon) \to M$ be a smooth curve with $\gamma(0) = p$. The velocity vector of $\gamma$ at $p$ is the tangent vector $\dot{\gamma}(0) \in T_p M$ defined by
\[\dot{\gamma}(0)(f) = \frac{d}{dt}(f \circ \gamma)\bigg|_{t=0}\]
\end{definition}

\begin{theorem}[Curve Characterization]
Every tangent vector arises as the velocity vector of some curve. 
\end{theorem}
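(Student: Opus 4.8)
The plan is to construct an explicit curve using a coordinate chart and the basis of $T_pM$ established in the preceding theorem. First I would fix a chart $(U, \varphi)$ with $p \in U$ and $\varphi(p) = 0$, and expand the given tangent vector $v \in T_pM$ in the coordinate basis as $v = \sum_{i=1}^n v^i \frac{\partial}{\partial x^i}\big|_p$, where $v^i = v(x^i)$; this expansion is exactly the spanning statement proved in the Dimension of Tangent Space theorem, so I am entitled to assume it. Since $\varphi(U)$ is open in $\R^n$ and contains $0$, there is $\varepsilon > 0$ such that the segment $\{\, t(v^1,\dots,v^n) : |t| < \varepsilon \,\}$ lies in $\varphi(U)$.

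Next I would define the curve
\[
\gamma : (-\varepsilon,\varepsilon) \to M, \qquad \gamma(t) := \varphi^{-1}\bigl( t v^1, \dots, t v^n \bigr).
\]
Smoothness of $\gamma$ is immediate: it is the composition of the smooth (affine) map $t \mapsto (tv^1,\dots,tv^n)$ with the smooth inverse chart $\varphi^{-1}$. Also $\gamma(0) = \varphi^{-1}(0) = p$, so $\gamma$ is admissible in the sense of the Velocity Vector definition.

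Then I would verify $\dot\gamma(0) = v$ by testing against an arbitrary $f \in C^\infty(M)$. Using the definition of the velocity vector and the chain rule in $\R^n$,
\[
\dot\gamma(0)(f) = \frac{d}{dt}\bigl(f \circ \gamma\bigr)\Big|_{t=0} = \frac{d}{dt}\,(f\circ\varphi^{-1})\bigl(tv^1,\dots,tv^n\bigr)\Big|_{t=0} = \sum_{i=1}^n v^i\, \frac{\partial (f\circ\varphi^{-1})}{\partial x^i}\Big|_{\varphi(p)} = \sum_{i=1}^n v^i\, \frac{\partial}{\partial x^i}\Big|_p(f),
\]
which equals $v(f)$ by the coordinate expansion of $v$. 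Since $f$ was arbitrary, $\dot\gamma(0) = v$ as elements of $T_pM$.

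I do not expect a genuine obstacle here; the argument is routine once the coordinate-basis theorem is in hand. The only points requiring a little care are (i) choosing $\varepsilon$ small enough that the straight segment stays inside $\varphi(U)$, so that $\varphi^{-1}$ may legitimately be applied, and (ii) being explicit that the chain-rule computation takes place in the Euclidean model via $f\circ\varphi^{-1}$, matching the coordinate-derivation defined in the proof of the Dimension of Tangent Space theorem. If one wanted the slightly stronger statement that \emph{every} $v$ is the velocity of \emph{many} curves, I would simply note that adding any curve tangent to $0$ at the origin (in coordinates) to the segment produces another representative, but for the stated theorem the single construction above suffices.
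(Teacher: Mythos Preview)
Your argument is correct and is the standard construction: pick a chart centered at $p$, expand $v$ in the coordinate basis guaranteed by the preceding theorem, push a short straight line through the origin back to $M$ via $\varphi^{-1}$, and verify by the chain rule that its velocity at $0$ acts on functions exactly as $v$ does. The two points you flag as needing care (choosing $\varepsilon$ so the segment stays in $\varphi(U)$, and computing in the Euclidean model via $f\circ\varphi^{-1}$) are handled properly.

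As for comparison: the paper states this theorem without proof, so there is nothing to compare against. Your write-up would in fact fill a gap in the exposition. The only cosmetic remark is that the final aside about ``many curves'' and the accompanying Remark in the paper are making the same point from opposite sides (many curves share the same velocity; the velocity depends only on the first-order equivalence class), so you could drop that sentence without loss.
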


\begin{remark}
Strictly speaking, the tangent vector is determined by the equivalence class of curves that share the same first-order behavior at $p$.
\end{remark}
To organize all tangent spaces into a single geometric object, we define the tangent bundle.

\begin{definition}[Tangent Bundle]
The tangent bundle of $M$ is
\[TM = \bigsqcup_{p \in M} T_p M = \{(p, v) : p \in M, v \in T_p M\}\]
with natural projection $\pi: TM \to M$ given by $\pi(p, v) = p$.
\end{definition}

\begin{theorem}[Tangent Bundle Structure]
$TM$ has a natural structure as a smooth manifold of dimension $2n$, where $n = \dim(M)$.
\end{theorem}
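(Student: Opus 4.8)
The plan is to build a smooth atlas on $TM$ out of a smooth atlas on $M$ and then invoke the standard chart lemma, which manufactures a unique topology and smooth structure on a set once it is covered by countably many compatible ``abstract charts''. So I fix a smooth atlas $\{(U_\alpha,\varphi_\alpha)\}_{\alpha\in I}$ on $M$, writing $\varphi_\alpha=(x_\alpha^1,\dots,x_\alpha^n)$. For each $\alpha$ I set $\tilde U_\alpha:=\pi^{-1}(U_\alpha)\subseteq TM$ and define
\[
\tilde\varphi_\alpha:\tilde U_\alpha\to\varphi_\alpha(U_\alpha)\times\R^n\subseteq\R^{2n},\qquad (p,v)\mapsto\bigl(\varphi_\alpha(p),\,v^1,\dots,v^n\bigr),
\]
where $v=\sum_i v^i\,\partial/\partial x_\alpha^i\big|_p$ is the expansion of $v\in T_pM$ in the coordinate basis supplied by the Dimension of Tangent Space theorem. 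Each $\tilde\varphi_\alpha$ is a bijection onto an open subset of $\R^{2n}$, with explicit inverse $(x,v)\mapsto\bigl(\varphi_\alpha^{-1}(x),\sum_i v^i\,\partial/\partial x_\alpha^i\big|_{\varphi_\alpha^{-1}(x)}\bigr)$.

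The one computation that actually carries content is the transition map. Abbreviating $\tau:=\varphi_\beta\circ\varphi_\alpha^{-1}$, the change-of-basis rule for coordinate derivations, namely $\partial/\partial x_\alpha^i\big|_p=\sum_j\frac{\partial\tau^j}{\partial x^i}\bigl(\varphi_\alpha(p)\bigr)\,\partial/\partial x_\beta^j\big|_p$, immediately yields
\[
\tilde\varphi_\beta\circ\tilde\varphi_\alpha^{-1}(x,v)=\bigl(\tau(x),\,D\tau|_x\,v\bigr)
\]
on $\varphi_\alpha(U_\alpha\cap U_\beta)\times\R^n$. I do not expect a genuine obstacle here; the conceptual crux --- and the only place the smoothness of $M$ is really invoked --- is that the Jacobian $x\mapsto D\tau|_x$ of a smooth map is itself smooth, so that, together with the fact that matrix-vector multiplication is polynomial, the displayed transition map is a diffeomorphism of open subsets of $\R^{2n}$. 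In particular $\tilde\varphi_\alpha(\tilde U_\alpha\cap\tilde U_\beta)$ is open and the charts $\{(\tilde U_\alpha,\tilde\varphi_\alpha)\}$ are pairwise smoothly compatible.

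It then remains to check the two point-set hypotheses. Since $M$ is second-countable I may take $I$ countable, so $\{\tilde U_\alpha\}$ is a countable cover of $TM$. For separation: given distinct $(p,v),(q,w)\in TM$, either $p=q$ and both points lie in a single $\tilde U_\alpha$, or $p\neq q$ and, using that $M$ is Hausdorff, I can choose chart domains $U_\alpha\ni p$, $U_\beta\ni q$ with $U_\alpha\cap U_\beta=\emptyset$, whence $\tilde U_\alpha\cap\tilde U_\beta=\emptyset$. The chart lemma now endows $TM$ with a unique smooth manifold structure in which every $(\tilde U_\alpha,\tilde\varphi_\alpha)$ is a chart, and since each chart has $2n$ coordinate functions, $\dim TM=2n$.

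Finally, to justify the word \emph{natural}, I would observe that replacing the atlas on $M$ by any smoothly compatible one changes each $\tilde\varphi_\alpha$ only by a transition of the same form $(x,v)\mapsto(\tau(x),D\tau|_x v)$ with $\tau$ smooth, and hence produces the same maximal atlas on $TM$; moreover in these coordinates $\pi$ is the projection $(x,v)\mapsto x$, which is visibly smooth, so $\pi:TM\to M$ is a smooth submersion. This completes the construction.
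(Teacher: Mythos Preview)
Your proof is correct and follows exactly the same construction as the paper: both build charts on $TM$ of the form $\pi^{-1}(U)\to\varphi(U)\times\R^n$, $(p,v)\mapsto(\varphi(p),v^1,\dots,v^n)$, from a chart $(U,\varphi)$ on $M$. The paper's version is only a two-line sketch, whereas you supply the transition-map computation, the Hausdorff and second-countability verifications, and the naturality remark; these are welcome additions but not a different approach.
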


\begin{proof}[Proof Construction]
For each chart $(U, \varphi)$ on $M$, define
\begin{align}
\Psi: \pi^{-1}(U) &\to \varphi(U) \times \R^n\\
\Psi(p, v) &= \left(\varphi(p), \sum_{i=1}^n v^i \frac{\partial}{\partial x^i}\bigg|_p\right)
\end{align}
These provide charts making $TM$ into a smooth manifold.
\end{proof}

\begin{example}
Let $M = \mathcal{Q}$ be the configuration manifold of a mechanical system. A Lagrangian is a smooth function $L: T\mathcal{Q} \to \mathbb{R}$ that assigns a scalar (typically kinetic minus potential energy) to each configuration and velocity. The Euler–Lagrange equations derived from $L$ govern the dynamics of the system, and are naturally formulated on the tangent bundle $T\mathcal{Q}$.
\end{example}

To describe the dynamics of a system evolving on a manifold, we need to specify how a point moves at each location. This leads to the concept of a vector field, which assigns a tangent vector to every point on the manifold, intuitively representing the direction and speed of motion of a particle at each location.

\begin{definition}[Vector Field]
A vector field on $M$ is a smooth section of the projection $\pi: TM \to M$, i.e., a smooth map $X: M \to TM$ such that $\pi \circ X = \id_M$. This means that for each point $p \in M$, the vector $X(p)$ lies in the tangent space $T_p M$.
We denote the space of smooth vector fields by $\mathfrak{X}(M)$.

\end{definition}

\begin{example}[Coordinate Vector Fields]
Let $(U, \varphi = (x^1, \dots, x^n))$ be a coordinate chart on $M$. Then the coordinate vector fields
\begin{equation*}
    \frac{\partial}{\partial x^1}, \dots, \frac{\partial}{\partial x^n}
\end{equation*}
form a local basis for the tangent bundle $TM$ over $U$. That is, any smooth vector field $X \in \mathfrak{X}(U)$ can be expressed locally as
\begin{equation*}
X = \sum_{i=1}^n X^i(x) \frac{\partial}{\partial x^i},
\end{equation*}
where each $X^i: U \to \mathbb{R}$ is a smooth function, called the component of $X$ in the $i$-th coordinate direction.
\end{example}

Having defined a vector field $X \in \mathfrak{X}(M)$ that governs how points move over time, a natural question arises: given an initial point $p \in M$, how does it evolve under the flow of $X$?

Intuitively, an \textit{integral curve} of $X$ is a trajectory that follows the direction specified by the vector field at every point. It represents the motion of a particle whose instantaneous velocity at any time $t$ is given by $X$ evaluated at its current location.

\begin{definition}[Integral Curve]
Let $X \in \mathfrak{X}(M)$ be a smooth vector field. A smooth curve $\gamma: I \to M$, where $I \subseteq \mathbb{R}$ is an open interval, is called an \emph{integral curve} of $X$ if it satisfies
\[
\dot{\gamma}(t) = X(\gamma(t)) \quad \text{for all } t \in I.
\]
\end{definition}

This is a geometric generalization of ordinary differential equations (ODEs) to manifolds. The condition says that the velocity of the curve $\gamma$ at each time $t$ must agree with the vector field $X$ at the current point $\gamma(t)$.

\begin{definition}[Initial Value Problem (IVP)]
Given a point $p \in M$ and a vector field $X \in \mathfrak{X}(M)$, the initial value problem consists of finding an integral curve $\gamma: I \to M$ such that
\[
\dot{\gamma}(t) = X(\gamma(t)), \quad \gamma(0) = p.
\]
\end{definition}

Under mild regularity conditions, the existence and uniqueness of local solutions to this IVP are guaranteed by the classical Picard–Lindelöf theorem adapted to manifolds.

\begin{example}[Lagrangian Dynamics as an IVP]
Let $\mathcal{Q}$ be the configuration manifold of a mechanical system, and let $L: T\mathcal{Q} \to \mathbb{R}$ be a smooth Lagrangian function. The Euler–Lagrange equations for a curve $q(t) \in \mathcal{Q}$ are given by
\[
\frac{d}{dt} \left( \frac{\partial L}{\partial \dot{q}^i} \right) - \frac{\partial L}{\partial q^i} = 0.
\]

To express these dynamics as a first-order system on the tangent bundle $T\mathcal{Q}$, we define the state variable as $z = (q^i, \dot{q}^i)$, and construct a vector field $X_L \in \mathfrak{X}(T\mathcal{Q})$ such that
\[
X_L(z) = \left( \dot{q}^i, \ddot{q}^i \right),
\]
where the accelerations $\ddot{q}^i$ are implicitly defined by the Euler–Lagrange equations. More precisely, under regularity (i.e., non-degenerate mass matrix),
\[
\ddot{q}^i = \left( \frac{\partial^2 L}{\partial \dot{q}^i \partial \dot{q}^j} \right)^{-1} \left( \frac{\partial L}{\partial q^j} - \frac{\partial^2 L}{\partial \dot{q}^j \partial q^k} \dot{q}^k \right).
\]

The corresponding initial value problem is:
\[
\dot{\gamma}(t) = X_L(\gamma(t)), \quad \gamma(0) = (q_0, \dot{q}_0) \in T\mathcal{Q},
\]
where the solution $\gamma(t) \in T\mathcal{Q}$ gives the time evolution of both the position and velocity.
\end{example}

Although Lagrangian mechanics is naturally formulated on the tangent bundle $T\mathcal{Q}$, Hamiltonian mechanics resides on the cotangent bundle $T^*\mathcal{Q}$. This transition reflects a change of variables from velocities $\dot{q}$ to momenta $p$, introducing a dual geometric structure based on covectors.

\begin{definition}[Cotangent Space]
Let $M$ be a smooth manifold and $p \in M$. The cotangent space at $p$, denoted $T_p^* M$, is the dual vector space of $T_p M$:
\begin{equation}
    T_p^* M = \{\omega: T_p M \to \mathbb{R} \mid \omega \text{ is linear} \}.
\end{equation}

\end{definition}

\begin{definition}[Cotangent Bundle]
The cotangent bundle of $M$ is the disjoint union
\[
T^* M = \bigsqcup_{p \in M} T_p^* M,
\]
with a natural smooth manifold structure of dimension $2n$ if $\dim(M) = n$. It comes equipped with the canonical projection $\pi: T^*M \to M$.
\end{definition}

There is a natural dual pairing between tangent and cotangent vectors: if $v \in T_p M$ and $\omega \in T_p^* M$, then $\omega(v) \in \mathbb{R}$. In the context of Hamiltonian mechanics, this duality is formalized via the \emph{Legendre transform}, which enables the transition from Lagrangian to Hamiltonian formulations.

\begin{definition}[Legendre Transform]
Let $L: T\mathcal{Q} \to \mathbb{R}$ be a regular Lagrangian. The Legendre transform is a smooth map $\mathbb{F}L: T\mathcal{Q} \to T^*\mathcal{Q}$ defined by
\[
\mathbb{F}L(q, \dot{q}) = \left(q, \frac{\partial L}{\partial \dot{q}}\right),
\]
which maps generalized velocities to conjugate momenta.
\end{definition}

If $L$ is regular (i.e., the Hessian matrix $\frac{\partial^2 L}{\partial \dot{q}^i \partial \dot{q}^j}$ is invertible), then the Legendre transform is a local diffeomorphism. When it is globally invertible, one can define the Hamiltonian via
\[
H(q, p) = \left\langle p, \dot{q} \right\rangle - L(q, \dot{q}),
\]
where $(q, \dot{q}) = (\mathbb{F}L)^{-1}(q, p)$.

Moreover, the cotangent bundle $T^*\mathcal{Q}$, in this case, possesses a canonical \emph{symplectic structure}.

\begin{definition}[Symplectic Manifold]
A symplectic manifold is a pair $(M, \omega)$ where $M$ is a smooth manifold and $\omega$ is a closed, non-degenerate 2-form:
\[
d\omega = 0, \quad \text{and} \quad \forall v \neq 0 \in T_p M, \exists u \in T_p M \text{ s.t. } \omega_p(u, v) \neq 0.
\]
\end{definition}

\begin{example}[Canonical Symplectic Form]
The cotangent bundle $T^*\mathcal{Q}$ over a configuration manifold $\mathcal{Q}$ is naturally a symplectic manifold. It is equipped with the canonical 2-form
\[
\omega = \sum_i dq^i \wedge dp_i,
\]
which arises as the exterior derivative of the Liouville 1-form $\theta = \sum_i p_i \, dq^i$, i.e., $\omega = -d\theta$.
\end{example}

Now, given a smooth Hamiltonian function $H: T^*\mathcal{Q} \to \mathbb{R}$, the dynamics are encoded in the Hamiltonian vector field $X_H$, defined implicitly by
\[
\iota_{X_H} \omega = dH.
\]
This vector field generates the flow of the system in phase space, and its integral curves describe the time evolution of state trajectories.

\begin{example}[Hamiltonian Dynamics as an IVP]
Let $\mathcal{Q}$ be a configuration manifold and $H: T^*\mathcal{Q} \to \mathbb{R}$ a smooth Hamiltonian. In local coordinates $(q^i, p_i)$, the Hamiltonian vector field is given by:
\[
X_H = \frac{\partial H}{\partial p_i} \frac{\partial}{\partial q^i} - \frac{\partial H}{\partial q^i} \frac{\partial}{\partial p_i}.
\]

Let $\gamma(t) = (q^i(t), p_i(t))$ be an integral curve of $X_H$. Then the dynamics are governed by the initial value problem:
\[
\dot{q}^i = \frac{\partial H}{\partial p_i}, \quad \dot{p}_i = -\frac{\partial H}{\partial q^i}, \quad \gamma(0) = (q_0^i, p_i^0).
\]

This system of first-order ODEs describes the evolution of the system in phase space under the Hamiltonian flow.
\end{example}

Next, to extend optimization methods to curved spaces such as manifolds, we must first endow the manifold with additional geometric structure. This is achieved through a \emph{Riemannian metric}, which allows us to define inner products, gradients, and descent directions intrinsically on the manifold.

\begin{definition}[Riemannian Metric]
Let $M$ be a smooth manifold. A \emph{Riemannian metric} $g$ is a smoothly varying inner product on the tangent spaces of $M$. That is, for each $p \in M$, $g_p: T_p M \times T_p M \to \mathbb{R}$ is a symmetric, positive-definite bilinear form, and the map $p \mapsto g_p(u, v)$ is smooth for all smooth vector fields $u, v$.
\end{definition}

This allows us to measure lengths of tangent vectors and angles between them. It also enables us to define the notion of gradient in a way that respects the underlying manifold geometry.

\begin{definition}[Differential]
Let $f: M \to \mathbb{R}$ be a smooth function. The \emph{differential} of $f$ at $p \in M$ is the linear map
\[
df_p: T_p M \to \mathbb{R}, \quad df_p(v) = v[f] = \left.\frac{d}{dt} f(\gamma(t))\right|_{t=0},
\]
where $\gamma(t)$ is any smooth curve with $\gamma(0) = p$ and $\dot{\gamma}(0) = v$.
\end{definition}

The Riemannian metric gives a natural way to associate a vector to the differential of a function.

\begin{definition}[Riemannian Gradient]
Let $(M, g)$ be a Riemannian manifold and $f: M \to \mathbb{R}$ a smooth function. The \emph{Riemannian gradient} $\nabla^g f \in \mathfrak{X}(M)$ is the unique vector field satisfying
\[
g_p(\nabla^g f(p), v) = df_p(v) \quad \text{for all } v \in T_p M.
\]
\end{definition}

In Euclidean space, this recovers the usual gradient. On a Riemannian manifold, the gradient is the tangent vector that points in the direction of steepest ascent, as measured by the metric $g$.

Given this structure, we can generalize gradient descent to Riemannian manifolds. Suppose we want to minimize a smooth function $f: M \to \mathbb{R}$. Starting from an initial point $x_0 \in M$, we iteratively update:
\[
x_{k+1} = \operatorname{Retr}_{x_k}\left(-\alpha_k \nabla^g f(x_k)\right),
\]
where $\alpha_k > 0$ is a step size, and $\operatorname{Retr}_{x_k}$ is a \emph{retraction}, a smooth map that approximates the exponential map and ensures the updated point remains on the manifold.

\begin{definition}[Retraction]
A retraction on $M$ is a smooth map $R: TM \to M$ such that:
\begin{enumerate}
\item $R_p(0_p) = p$ for all $p \in M$
\item $dR_p|_{0_p} = \id_{T_p M}$, where we identify $T_{0_p}T_p M \cong T_p M$
\end{enumerate}
\end{definition}

\section{Mathematical Proofs}
In this section, we present formal proofs for the theoretical results stated in the main text based on \cite{buchfink2024modelreductionmanifoldsdifferential}. Each proof is self-contained and follows from the assumptions and definitions introduced earlier.
\begin{theorem}[Exact reproduction of a solution]
Assume that the FOM  is uniquely solvable and consider a reduction map $R \in C^{\infty}(T\mathcal{H}, T\check{\mathcal{H}})$ for the smooth embedding $\varphi \in C^{\infty}(\check{\mathcal{H}}, \mathcal{H})$ and a parameter $\mu \in \mathcal{P}$. Assume that the ROM is uniquely solvable and $\gamma(t; \mu) \in \varphi(\check{\mathcal{H}})$ for all $t \in \mathcal{I}$. Then the ROM solution $\check{\gamma}(\cdot; \mu)$ exactly recovers the solution $\gamma(\cdot; \mu)$ of the FOM  for this parameter, i.e.,
\begin{equation}
\varphi(\check{\gamma}(t; \mu)) = \gamma(t; \mu) \quad \text{for all } t \in \mathcal{I}.
\end{equation}
\end{theorem}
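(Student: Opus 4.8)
The plan is to produce an explicit candidate solution of the ROM IVP and then invoke uniqueness of the ROM solution. The natural candidate is the curve $\check\beta(\cdot;\mu)$ already introduced in~\eqref{eq: beta}, namely $\check\beta(t;\mu) := \varphi^{-1}(\gamma(t;\mu))$. Since $\varphi$ is a smooth embedding, it is a diffeomorphism onto its image $\varphi(\check{\mathcal{H}})$; combined with the hypothesis $\gamma(t;\mu)\in\varphi(\check{\mathcal{H}})$ for every $t\in\mathcal{I}$, this makes $\check\beta(\cdot;\mu)$ a well-defined, smooth curve on all of $\mathcal{I}$. A useful preliminary observation is that on $\varphi(\check{\mathcal{H}})$ the map $\varphi^{-1}$ agrees with the point reduction $\varrho$: writing a point of the image as $\varphi(a)$, the point projection property $\varrho\circ\varphi=\id_{\check{\mathcal{H}}}$ gives $\varrho(\varphi(a))=a=\varphi^{-1}(\varphi(a))$.

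First I would check the initial condition. Using $\gamma(t_0;\mu)=\gamma_0(\mu)$ and the identity $\varphi^{-1}=\varrho$ on the image, $\check\beta(t_0;\mu)=\varphi^{-1}(\gamma_0(\mu))=\varrho(\gamma_0(\mu))=\check\gamma_0(\mu)$, which is exactly the ROM initial datum. Second, I would verify that $\check\beta$ solves the ROM ODE. Differentiating $\gamma(t;\mu)=\varphi(\check\beta(t;\mu))$ with the chain rule and substituting the FOM equation $\dot\gamma(t;\mu)=X(\mu)|_{\gamma(t;\mu)}$ yields precisely~\eqref{eq: condition_2}, i.e. $X(\mu)|_{\varphi(\check\beta(t;\mu))}=d\varphi|_{\check\beta(t;\mu)}\,\dot{\check\beta}(t;\mu)$. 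Now apply the tangent reduction $R|_{\varphi(\check\beta(t;\mu))}$ to both sides. The left-hand side becomes $\check X(\mu)|_{\check\beta(t;\mu)}$ by the definition of the ROM vector field, while the right-hand side collapses to $\dot{\check\beta}(t;\mu)$ by the tangent projection property $R|_{\varphi(\hat m)}\circ d\varphi|_{\hat m}=\id_{T_{\hat m}\check{\mathcal{H}}}$. Hence $\dot{\check\beta}(t;\mu)=\check X(\mu)|_{\check\beta(t;\mu)}$ for all $t\in\mathcal{I}$, so $\check\beta(\cdot;\mu)$ is a solution of the ROM IVP.

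Third, since the ROM is assumed uniquely solvable, $\check\beta(\cdot;\mu)=\check\gamma(\cdot;\mu)$ on $\mathcal{I}$, and applying $\varphi$ gives $\varphi(\check\gamma(t;\mu))=\varphi(\check\beta(t;\mu))=\gamma(t;\mu)$, which is the claim. The only delicate point is the well-definedness and smoothness of $\check\beta$ — that $\varphi^{-1}$ exists and is smooth along the entire trajectory — and this is guaranteed exactly by the embedding hypothesis together with the invariance assumption $\gamma(t;\mu)\in\varphi(\check{\mathcal{H}})$; the remainder is bookkeeping with the point and tangent projection properties, so I do not expect a substantive obstacle beyond stating these facts carefully.
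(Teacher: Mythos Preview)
Your proposal is correct and follows essentially the same route as the paper: construct $\check\beta=\varphi^{-1}\circ\gamma$, verify it satisfies the ROM initial condition via the point projection property and the ROM ODE via the tangent projection property applied to~\eqref{eq: condition_2}, then invoke uniqueness. If anything, you are more explicit than the paper about the final uniqueness step and about the identification $\varphi^{-1}=\varrho$ on $\varphi(\check{\mathcal{H}})$.
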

\begin{proof}
Since $\gamma(t; \mu) \in \varphi(\mathcal{H})$ for all $t \in \mathcal{I}$, we can construct $\check{\beta}$ as in Equation \ref{eq: beta}. It remains to show that $\check{\beta}$ satisfies the ROM. First, we obtain
\begin{equation}
\check{\gamma}_0(\mu) = \varrho(\gamma_0(\mu)) = \varrho(\gamma(t_0; \mu)) = (\varrho \circ \varphi)\left(\check{\beta}(t_0; \mu)\right) = \check{\beta}(t_0; \mu),
\end{equation}
where the last equality is due to the projection property for the point reduction. Second, $\check{\beta}$ suffices the initial value problem of the ROM since the tangent projection property implies with Equation \ref{eq: condition_2}:
\begin{equation}
R|_{\varphi(\check{\beta}(t;\mu))}\left(X(\mu)|_{\varphi(\check{\beta}(t;\mu))}\right) = \left(R|_{\varphi(\check{\beta}(t;\mu))} \circ \mathrm{d}\varphi|_{\check{\beta}(t;\mu)}\right)\left(\frac{\mathrm{d}}{\mathrm{d}t}\check{\beta}\bigg|_{t;\mu}\right) = \frac{\mathrm{d}}{\mathrm{d}t}\check{\beta}\bigg|_{t;\mu}. 
\end{equation}
\end{proof}

\begin{theorem}[Manifold Galerkin Projection]
\label{them: 22}
\textit{Consider a smooth embedding $\varphi$ and a point reduction $\varrho$ for $\varphi$. Then, the differential of the point reduction $\varrho$ is a left inverse to the differential of the embedding $\varphi$. Consequently,}
\begin{equation}
R_{\text{MPG}}: T\mathcal{H} \to T\check{\mathcal{H}} \quad (h, v) \mapsto (\varrho(h), d\varrho|_h(v)) 
\end{equation}
\textit{is a smooth reduction map for $\varphi$, which we call the MPG reduction map for $(\varrho, \varphi)$.}
\end{theorem}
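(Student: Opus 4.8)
The plan is to reduce the whole statement to a single application of the chain rule to the point projection property, followed by a routine smoothness and bookkeeping check. Before starting I would fix the convention that ``differential of $\varphi$'' in the statement means the bundle map $\mathrm{d}\varphi : T\check{\mathcal{H}} \to T\mathcal{H}$ (equivalently the tangent map $T\varphi$), which is exactly how $\mathrm{d}\varphi$ enters the definition of a reduction map via $R \circ \mathrm{d}\varphi = \mathrm{id}_{T\check{\mathcal{H}}}$.

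First I would differentiate the identity $\varrho \circ \varphi = \mathrm{id}_{\check{\mathcal{H}}}$, which holds since $\varrho$ is a point reduction for $\varphi$. Fixing $\hat{h} \in \check{\mathcal{H}}$ and applying the chain rule at $\hat{h}$ yields
\[
d\varrho|_{\varphi(\hat{h})} \circ d\varphi|_{\hat{h}} = d(\mathrm{id}_{\check{\mathcal{H}}})|_{\hat{h}} = \mathrm{id}_{T_{\hat{h}}\check{\mathcal{H}}},
\]
so $d\varrho|_{\varphi(\hat{h})}$ is a left inverse of the (hence necessarily injective) linear map $d\varphi|_{\hat{h}}$ for every $\hat{h} \in \check{\mathcal{H}}$. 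This already establishes the first assertion of the theorem.

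Next I would set $R_{\text{MPG}} := T\varrho$, the tangent map of $\varrho$, which in bundle coordinates is exactly $(h,v) \mapsto (\varrho(h), d\varrho|_h(v))$. Since $\varrho \in C^\infty(\mathcal{H}, \check{\mathcal{H}})$, functoriality of the tangent bundle construction gives $T\varrho \in C^\infty(T\mathcal{H}, T\check{\mathcal{H}})$, and it has the split form demanded of a reduction map, with point reduction $\varrho$ and tangent reduction $R|_h = d\varrho|_h \in C^\infty(T_h\mathcal{H}, T_{\varrho(h)}\check{\mathcal{H}})$. To finish I would check the projection property $R_{\text{MPG}} \circ d\varphi = \mathrm{id}_{T\check{\mathcal{H}}}$: evaluating on $(\hat{h}, \hat{v}) \in T\check{\mathcal{H}}$ and using $d\varphi(\hat{h}, \hat{v}) = (\varphi(\hat{h}), d\varphi|_{\hat{h}}(\hat{v}))$ gives
\[
R_{\text{MPG}}\big(d\varphi(\hat{h},\hat{v})\big) = \big(\varrho(\varphi(\hat{h})),\; d\varrho|_{\varphi(\hat{h})}(d\varphi|_{\hat{h}}(\hat{v}))\big) = (\hat{h}, \hat{v}),
\]
where the first component uses $\varrho \circ \varphi = \mathrm{id}$ and the second uses the left-inverse identity from the previous step.

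There is no genuine obstacle here; the only point requiring care is base-point bookkeeping — the left-inverse identity $d\varrho \circ d\varphi = \mathrm{id}$ holds only at points of the form $\varphi(\hat{h})$, not at arbitrary $h \in \mathcal{H}$, which is precisely why the reduction-map axiom is the one-sided composition $R \circ d\varphi = \mathrm{id}$ rather than a two-sided inverse of bundle maps.
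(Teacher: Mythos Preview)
Your proof is correct and follows essentially the same approach as the paper: both differentiate the point projection property $\varrho \circ \varphi = \mathrm{id}_{\check{\mathcal{H}}}$ via the chain rule to obtain $d\varrho|_{\varphi(\hat{h})} \circ d\varphi|_{\hat{h}} = \mathrm{id}_{T_{\hat{h}}\check{\mathcal{H}}}$. Your write-up is in fact more complete than the paper's, which stops at the left-inverse identity and leaves the smoothness of $R_{\text{MPG}}$ and the verification of the full bundle-level projection property implicit; your identification of $R_{\text{MPG}}$ with the tangent map $T\varrho$ and the explicit base-point bookkeeping are welcome additions.
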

\begin{proof}
Since $\varphi$ is a homeomorphism onto its image, we know that $\varphi^{-1} : \varphi(\check{\mathcal{H}}) \to \check{\mathcal{H}}$ exists. Under mild assumptions, the extension lemma for smooth functions guarantees that we can find a smooth extension $\varrho$ of $\varphi^{-1}$ \cite[Lemma 2.26]{lee_smooth_manifolds} , which by construction satisfies the point projection property. Differentiating the point projection property with the chain rule implies
\begin{equation}
\mathrm{d}\varrho|_{\varphi(\check{h})} \circ \mathrm{d}\varphi|_{\check{h}} = \mathrm{d}(\mathrm{id}_{\check{\mathcal{H}}})|_{\check{h}} = \mathrm{id}_{T_{\check{h}}\check{\mathcal{H}}} : T_{\check{m}}\check{\mathcal{H}} \to T_{\check{h}}\check{\mathcal{H}}, \tag{3.10}
\end{equation}
i.e., $\mathrm{d}\varrho|_{\varphi(\check{h})}$ is a left-inverse to $\mathrm{d}\varphi|_{\check{h}}$. 
\end{proof}

\begin{theorem}[Generalized Manifold Galerkin (GMG)]
\label{thm: ss}
Let $\mathcal{M}$ be a manifold of dimension $N$ endowed with a non-degenerate $(0,2)$-tensor field $\tau \in \Gamma(T^{(0,2)}(T\mathcal{M}))$, and let $\varphi \in \mathcal{C}^{\infty}(\tilde{\mathcal{M}}, \mathcal{M})$ be a smooth embedding such that the reduced tensor field $\tilde{\tau} := \varphi^*\tau \in \Gamma(T^{(0,2)}(T\tilde{\mathcal{M}}))$ is nondegenerate. 

Define the vector bundle $E_{\varphi(\tilde{\mathcal{M}})} := \bigcup_{m \in \varphi(\tilde{\mathcal{M}})} T_m\mathcal{M}$ and the generalized manifold Galerkin (GMG) mapping as follows:
$$
R_{\text{GMG}}: T\mathcal{M} \supseteq E_{\varphi(\tilde{\mathcal{M}})} \to T\tilde{\mathcal{M}}, \quad (m,v) \mapsto \left(\varrho(m), \left(\sharp_{\check{\tau}} \circ \mathrm{d}\varphi^*|_{\varrho(m)} \circ b_{\tau}\right)(v)\right),
$$
where $\varrho: \varphi(\tilde{\mathcal{M}}) \to \tilde{\mathcal{M}}$ is the inverse of $\varphi$, $b_{\tau}$ is the musical isomorphism induced by $\tau$, and $\sharp_{\tau}$ is the musical isomorphism induced by $\tilde{\tau}$. Then $R_{\text{GMG}}$ is a reduction map for $\varphi$.
\end{theorem}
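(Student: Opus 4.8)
The plan is to verify the defining identity of a reduction map for $\varphi$, namely $R_{\text{GMG}} \circ \mathrm{d}\varphi = \mathrm{id}_{T\tilde{\mathcal{M}}}$, by checking its two components separately: the \emph{point projection property} for $\varrho$ and the \emph{tangent projection property} for the fiberwise maps $R_{\text{GMG}}|_m$. First I would record that, since $\varphi$ is a smooth embedding, it is a diffeomorphism onto its image $\varphi(\tilde{\mathcal{M}})$, so $\varrho := \varphi^{-1} \in C^{\infty}(\varphi(\tilde{\mathcal{M}}), \tilde{\mathcal{M}})$ is well defined and smooth and satisfies $\varrho \circ \varphi = \mathrm{id}_{\tilde{\mathcal{M}}}$ by construction; this is the point projection property, and it simultaneously shows the first component of $R_{\text{GMG}}$ is well defined and smooth on the bundle $E_{\varphi(\tilde{\mathcal{M}})}$.

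The core step is to rewrite the hypothesis $\tilde{\tau} = \varphi^*\tau$ in terms of musical isomorphisms. Unwinding the definition of the pullback tensor, for $\hat m \in \tilde{\mathcal{M}}$ and $u, w \in T_{\hat m}\tilde{\mathcal{M}}$ one has
\[
(b_{\tilde\tau}\, u)(w) = \tilde\tau_{\hat m}(u, w) = \tau_{\varphi(\hat m)}\big(\mathrm{d}\varphi|_{\hat m}\, u,\ \mathrm{d}\varphi|_{\hat m}\, w\big) = \big((\mathrm{d}\varphi|_{\hat m})^{*}\, b_\tau(\mathrm{d}\varphi|_{\hat m}\, u)\big)(w),
\]
so that $b_{\tilde\tau} = \mathrm{d}\varphi^{*} \circ b_\tau \circ \mathrm{d}\varphi$ pointwise, where $\mathrm{d}\varphi^{*}$ denotes the dual (pullback) of $\mathrm{d}\varphi$. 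Since $\tilde\tau$ is nondegenerate \emph{by hypothesis}, $b_{\tilde\tau}$ is invertible with inverse $\sharp_{\tilde\tau}$, and composing on the left yields
\[
\sharp_{\tilde\tau} \circ (\mathrm{d}\varphi|_{\hat m})^{*} \circ b_\tau \circ \mathrm{d}\varphi|_{\hat m} = \mathrm{id}_{T_{\hat m}\tilde{\mathcal{M}}},
\]
which is precisely $R_{\text{GMG}}|_{\varphi(\hat m)} \circ \mathrm{d}\varphi|_{\hat m} = \mathrm{id}_{T_{\hat m}\tilde{\mathcal{M}}}$, i.e.\ the tangent projection property (using $\varrho(\varphi(\hat m)) = \hat m$).

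Finally I would note that $R_{\text{GMG}}$ is smooth, as a fiberwise composition of the smooth bundle maps $b_\tau$ (induced by the smooth tensor $\tau$), $\mathrm{d}\varphi^{*}$ (smooth since $\varphi$ is smooth), and $\sharp_{\tilde\tau}$ (smooth since $\tilde\tau$ is a smooth nondegenerate tensor field), together with the smooth point map $\varrho$; hence it is a genuine reduction map for $\varphi$. The main obstacle I anticipate is essentially bookkeeping: keeping straight which point each (co)tangent fiber sits over, getting the directions of the flat/sharp maps and of the pullback $\mathrm{d}\varphi^{*}$ correct, and ensuring that the nondegeneracy assumption on $\tilde\tau$ — not merely on $\tau$ — is the hypothesis invoked exactly where $\sharp_{\tilde\tau}$ is needed.
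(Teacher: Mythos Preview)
Your proposal is correct and follows essentially the same route as the paper: the key identity $b_{\tilde\tau} = \mathrm{d}\varphi^{*}\circ b_\tau \circ \mathrm{d}\varphi$ is exactly the lemma the paper isolates (proved there in index notation rather than your coordinate-free pairing argument), and composing with $\sharp_{\tilde\tau}$ then yields the tangent projection property just as you do. Your added remark on smoothness is a welcome detail that the paper leaves implicit.
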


\begin{assumption}
\label{assumption_necessary}
Given the nondegenerate $(0,2)$-tensor field $\tau \in \Gamma(T^{(0,2)}(T\mathcal{M}))$, the smooth embedding $\varphi \in C^{\infty}(\check{\mathcal{M}}, \mathcal{M})$ is such that the reduced tensor field
\begin{equation*}
\check{\tau} := \varphi^*\tau \in \Gamma(T^{(0,2)}(T\check{\mathcal{M}})),
\end{equation*}
is nondegenerate.
\end{assumption}

\begin{lemma}
\label{lemmaa}
Under Assumption \ref{assumption_necessary}, it holds
\begin{equation}
\mathrm{d}\varphi^*|_{\check{m}} \circ b_\tau \circ \mathrm{d}\varphi|_{\check{m}} = b_{\check{\tau}} \in C^{\infty}(T_{\check{m}}\check{\mathcal{M}}, T_{\check{m}}^*\check{\mathcal{M}}). 
\end{equation}
\end{lemma}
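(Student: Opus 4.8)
The plan is to prove Lemma \ref{lemmaa} by a direct unwinding of definitions: the claimed operator identity is really just the defining bilinear formula for the pullback tensor $\check{\tau} = \varphi^*\tau$, repackaged through the flat musical isomorphisms. So I would verify it by evaluating both sides on an arbitrary tangent vector and reducing everything to that formula. No integration, limiting, or approximation argument is involved — the statement holds pointwise on $\check{\mathcal{M}}$.

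First I would fix $\check{m} \in \check{\mathcal{M}}$, set $m := \varphi(\check{m})$, and record the types and characterizations of the maps in play: the differential $\mathrm{d}\varphi|_{\check{m}} : T_{\check{m}}\check{\mathcal{M}} \to T_m\mathcal{M}$; its dual (codifferential) $\mathrm{d}\varphi^*|_{\check{m}} : T_m^*\mathcal{M} \to T_{\check{m}}^*\check{\mathcal{M}}$, characterized by $\langle \mathrm{d}\varphi^*|_{\check{m}}\alpha,\, u\rangle = \langle \alpha,\, \mathrm{d}\varphi|_{\check{m}} u\rangle$ for all $\alpha \in T_m^*\mathcal{M}$, $u \in T_{\check{m}}\check{\mathcal{M}}$; the flat maps $b_\tau|_m(v) = \tau_m(v, \cdot) \in T_m^*\mathcal{M}$ and $b_{\check{\tau}}|_{\check{m}}(w) = \check{\tau}_{\check{m}}(w, \cdot) \in T_{\check{m}}^*\check{\mathcal{M}}$; and the pointwise definition of the pullback $(0,2)$-tensor, $\check{\tau}_{\check{m}}(w,u) = (\varphi^*\tau)_{\check{m}}(w,u) = \tau_m\big(\mathrm{d}\varphi|_{\check{m}} w,\, \mathrm{d}\varphi|_{\check{m}} u\big)$.

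Then, for arbitrary $w, u \in T_{\check{m}}\check{\mathcal{M}}$, I would compute the left-hand side from the inside out: $\mathrm{d}\varphi|_{\check{m}} w \in T_m\mathcal{M}$, then $b_\tau|_m(\mathrm{d}\varphi|_{\check{m}} w) = \tau_m(\mathrm{d}\varphi|_{\check{m}} w, \cdot) \in T_m^*\mathcal{M}$, and finally pairing $\mathrm{d}\varphi^*|_{\check{m}}$ of this covector with $u$ gives, by the defining property of the codifferential, $\tau_m\big(\mathrm{d}\varphi|_{\check{m}} w,\, \mathrm{d}\varphi|_{\check{m}} u\big)$. The right-hand side paired with $u$ is $b_{\check{\tau}}|_{\check{m}}(w)(u) = \check{\tau}_{\check{m}}(w,u)$, which by the pullback formula is the same expression. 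Since $u$ is arbitrary the two covectors in $T_{\check{m}}^*\check{\mathcal{M}}$ coincide, and since $\check{m}$ is arbitrary the operator identity holds everywhere, which is the claim.

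I do not expect a substantive obstacle here; the only thing to be careful about is bookkeeping. One must track the base points — everything on the left is anchored at $m = \varphi(\check{m})$ before being transported back to $\check{m}$ — and fix consistent conventions for $b_\tau$ (contraction in the first slot) and for $\mathrm{d}\varphi^*$ (the Banach/linear-algebra dual of $\mathrm{d}\varphi$); with those fixed, the three displayed equalities line up immediately. I would close with a short remark that Assumption \ref{assumption_necessary}, the nondegeneracy of $\check{\tau}$, plays no role in this pointwise bilinear computation, but is exactly what upgrades $b_{\check{\tau}}$ to an isomorphism and hence makes $\sharp_{\check{\tau}}$ well defined in Theorem \ref{thm: generalized_manifold_galerkin}, where this lemma is used.
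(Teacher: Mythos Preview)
Your proposal is correct and follows essentially the same approach as the paper: both verify the identity pointwise by reducing to the defining formula for the pullback tensor $\check\tau = \varphi^*\tau$, the only difference being that the paper carries out the computation in local index notation while you phrase it invariantly via the dual pairing. Your remark that Assumption~\ref{assumption_necessary} is not needed for the identity itself (only for invertibility of $b_{\check\tau}$ downstream) is also accurate.
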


\begin{proof}
     We prove the statement in index notation. We obtain for all $\check{m} \in \check{\mathcal{M}}$, all $\check{v} \in T_{\check{m}}\check{\mathcal{M}}$ and all $1 \leq i \leq n$
\begin{align}
(b_{\check{\tau}}(\check{v}))_i &= (\check{\tau}|_{\check{m}})_{ij} \check{v}^j = (\tau|_{\varphi(\check{m})})_{\ell_1 \ell_2} \frac{\partial\varphi^{\ell_1}}{\partial x^i}\bigg|_{\check{m}} \frac{\partial\varphi^{\ell_2}}{\partial x^j}\bigg|_{\check{m}} \check{v}^j \\
&= \frac{\partial\varphi^{\ell_1}}{\partial x^i}\bigg|_{\check{m}} (\tau|_{\varphi(\check{m})})_{\ell_1 \ell_2} \frac{\partial\varphi^{\ell_2}}{\partial x^j}\bigg|_{\check{m}} \check{v}^j = \left((\mathrm{d}\varphi^*|_{\check{m}} \circ b_\tau \circ \mathrm{d}\varphi|_{\check{m}})(\check{v})\right)_i. \quad
\end{align}
\end{proof}

Now, let us prove Theorem \ref{thm: ss}.

\begin{proof}
    By construction, Lemma \ref{lemmaa}, and musical isomorphism pairing, and point projection property we obtain,
\begin{equation*}
R_{\text{GMG}}|_{\varphi(\check{m})} \circ \mathrm{d}\varphi|_{\check{m}} = \sharp_{\check{\tau}} \circ \mathrm{d}\varphi^*|_{(\varrho \circ \varphi)(\check{m})} \circ b_\tau \circ \mathrm{d}\varphi|_{\check{m}} = \sharp_{\check{\tau}} \circ b_{\check{\tau}} = \mathrm{id}_{T_{\check{m}}\check{\mathcal{M}}},
\end{equation*}
\end{proof}

\begin{lemma}
\textit{Consider a symplectic manifold $(\mathcal{H}, \omega)$, a smooth manifold $\check{\mathcal{H}}$, and a smooth embedding $\varphi \in C^{\infty}(\check{\mathcal{H}}, \mathcal{H})$ such that $\check{\omega} := \varphi^* \omega$ is nondegenerate. Then $\check{\omega}$ is a symplectic form, $(\check{\mathcal{H}}, \check{\omega})$ is a symplectic manifold, and $\varphi$ is a symplectomorphism.}
\end{lemma}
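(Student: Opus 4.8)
The plan is to verify the three assertions in sequence, using only the definition of a symplectic form (a closed, nondegenerate $2$-form) together with the naturality of the exterior derivative under pullback. Nondegeneracy of $\check{\omega} := \varphi^*\omega$ is precisely the hypothesis, so the only substantive point is closedness. Since $\varphi \in C^{\infty}(\check{\mathcal{H}}, \mathcal{H})$ is smooth, pullback of differential forms commutes with the exterior derivative, so $\mathrm{d}(\varphi^*\omega) = \varphi^*(\mathrm{d}\omega)$. Because $\omega$ is symplectic we have $\mathrm{d}\omega = 0$, hence $\mathrm{d}\check{\omega} = \varphi^*(0) = 0$. Therefore $\check{\omega}$ is a closed, nondegenerate $2$-form on $\check{\mathcal{H}}$, i.e.\ a symplectic form, and $(\check{\mathcal{H}}, \check{\omega})$ is by definition a symplectic manifold. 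This settles the first two claims.

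For the third claim, I would recall that a smooth embedding is in particular a diffeomorphism onto its image, so $\varphi : \check{\mathcal{H}} \to \varphi(\check{\mathcal{H}})$ is a diffeomorphism of smooth manifolds. The image $\varphi(\check{\mathcal{H}}) \subseteq \mathcal{H}$ is an embedded submanifold carrying the restricted $2$-form $\omega|_{\varphi(\check{\mathcal{H}})}$; this restriction is automatically closed (restriction commutes with $\mathrm{d}$), and its nondegeneracy is equivalent to the nondegeneracy of $\varphi^*\omega = \check{\omega}$, which holds by hypothesis. Hence $(\varphi(\check{\mathcal{H}}), \omega|_{\varphi(\check{\mathcal{H}})})$ is a symplectic submanifold. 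By construction $\varphi^*\bigl(\omega|_{\varphi(\check{\mathcal{H}})}\bigr) = \varphi^*\omega = \check{\omega}$, so $\varphi$ is a diffeomorphism pulling back the target symplectic form to the source symplectic form, i.e.\ a symplectomorphism onto its image.

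I expect no genuine obstacle: the heart of the argument is the one-line identity $\mathrm{d}\circ\varphi^* = \varphi^*\circ\mathrm{d}$. The only place that requires a little care is the symplectomorphism statement, where one must use that an embedding is a diffeomorphism onto its image rather than onto all of $\mathcal{H}$, and must note that endowing $\varphi(\check{\mathcal{H}})$ with a bona fide symplectic structure genuinely uses the nondegeneracy hypothesis on $\check{\omega}$ — without it the restricted form $\omega|_{\varphi(\check{\mathcal{H}})}$ could degenerate (as for isotropic or coisotropic embeddings), and the conclusion would fail.
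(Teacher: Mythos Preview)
Your argument is correct and follows essentially the same route as the paper: both reduce the lemma to checking that $\check{\omega}=\varphi^*\omega$ is closed and skew-symmetric, and both invoke naturality of the exterior derivative for closedness. The paper additionally spells out skew-symmetry via an index computation, which you (correctly) treat as automatic since the pullback of a $2$-form is a $2$-form; conversely, you give a fuller justification of the ``$\varphi$ is a symplectomorphism'' clause by explicitly equipping the image $\varphi(\check{\mathcal{H}})$ with the restricted form and observing that an embedding is a diffeomorphism onto its image, a point the paper leaves implicit.
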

\begin{proof}
It is sufficient to show that $\check{\omega} = \varphi^*\omega$ is a symplectic form, which in this case results in showing that $\check{\omega}$ is skew-symmetric and closed. The skew-symmetry is inherited for all points $\check{h} \in \check{\mathcal{H}}$ since we have:
\begin{equation*}
(\check{\omega}|_{\check{h}})_{j_1 j_2} = (\omega|_{\varphi(\check{h})})_{\ell_1 \ell_2} \frac{\partial\varphi^{\ell_1}}{\partial \check{x}^{j_1}}\bigg|_{\check{h}} \frac{\partial\varphi^{\ell_2}}{\partial \check{x}^{j_2}}\bigg|_{\check{h}} = -(\omega|_{\varphi(\check{m})})_{\ell_2 \ell_1} \frac{\partial\varphi^{\ell_2}}{\partial \check{x}^{j_2}}\bigg|_{\check{h}} \frac{\partial\varphi^{\ell_1}}{\partial \check{x}^{j_1}}\bigg|_{\check{h}} = -(\check{\omega}|_{\check{h}})_{j_2 j_1}.
\end{equation*}

Closedness is inherited since the pullback of a closed form is closed. 
\end{proof}

\begin{theorem}
\label{thm: SMG}
\textit{The SMG-ROM is a Hamiltonian system $(\check{\mathcal{H}}, \tilde{\omega}, \check{H})$ with the reduced Hamiltonian $\check{H} := \varphi^* H = H\circ \varphi$.}    
\end{theorem}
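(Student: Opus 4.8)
The plan is to show that the vector field $\check{X}$ generating the SMG-ROM is exactly the Hamiltonian vector field $X_{\check{H}}$ of the reduced Hamiltonian $\check{H} := \varphi^* H = H \circ \varphi$ with respect to the reduced symplectic form $\check{\omega} := \varphi^* \omega$. Since the preceding lemma already guarantees that $(\check{\mathcal{H}}, \check{\omega})$ is a symplectic manifold (so $\tilde{\omega} := \check{\omega}$ is a genuine symplectic form and $\varphi$ a symplectomorphism), establishing $\check{X} = X_{\check{H}}$ is precisely the statement that the SMG-ROM is the Hamiltonian system $(\check{\mathcal{H}}, \tilde{\omega}, \check{H})$.

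First I would unwind the definition of the ROM vector field for the reduction map $R_{\text{SMG}}$. Fixing a parameter and a point $\check{h} \in \check{\mathcal{H}}$, and using that $\varrho$ is the inverse of $\varphi$ on $\varphi(\check{\mathcal{H}})$ so that $\varrho(\varphi(\check{h})) = \check{h}$, we get
\[
\check{X}\big|_{\check{h}} = R_{\text{SMG}}\big|_{\varphi(\check{h})}\!\left(X_H\big|_{\varphi(\check{h})}\right) = \left(\sharp_{\check{\omega}} \circ \mathrm{d}\varphi^*\big|_{\check{h}} \circ b_{\omega}\right)\!\left(X_H\big|_{\varphi(\check{h})}\right).
\]
Then I would evaluate the three maps in order. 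By the defining relation $\iota_{X_H}\omega = \mathrm{d}H$ of the Hamiltonian vector field, $b_{\omega}(X_H) = \iota_{X_H}\omega = \mathrm{d}H$. Next, $\mathrm{d}\varphi^*|_{\check{h}} : T^*_{\varphi(\check{h})}\mathcal{H} \to T^*_{\check{h}}\check{\mathcal{H}}$ is the pointwise transpose of $\mathrm{d}\varphi|_{\check{h}}$, i.e.\ the cotangent pullback along $\varphi$, so by the chain rule (naturality of $\mathrm{d}$ under pullback) $\mathrm{d}\varphi^*|_{\check{h}}(\mathrm{d}H|_{\varphi(\check{h})}) = \mathrm{d}(H \circ \varphi)|_{\check{h}} = \mathrm{d}\check{H}|_{\check{h}}$. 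Finally $\sharp_{\check{\omega}}$ is by definition the inverse of $b_{\check{\omega}}$, which is well defined because $\check{\omega}$ is nondegenerate (Assumption \ref{assumption_necessary}). Composing these three identities gives $\check{X}|_{\check{h}} = \sharp_{\check{\omega}}(\mathrm{d}\check{H}|_{\check{h}})$, equivalently $b_{\check{\omega}}(\check{X}) = \mathrm{d}\check{H}$, i.e.\ $\iota_{\check{X}}\check{\omega} = \mathrm{d}\check{H}$, which is exactly the characterization of $X_{\check{H}}$. Hence $\check{X} = X_{\check{H}}$, so the SMG-ROM IVP $\dot{\check{\gamma}} = \check{X}(\check{\gamma})$ is Hamilton's equations (\ref{eq: hamilton_equations}) for $\check{H}$ in any Darboux chart; as an immediate corollary $\check{H}$ is conserved along the reduced flow, since $\tfrac{\mathrm{d}}{\mathrm{d}t}\check{H}(\check{\gamma}) = \mathrm{d}\check{H}(X_{\check{H}}) = \check{\omega}(X_{\check{H}}, X_{\check{H}}) = 0$ by skew-symmetry.

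The only delicate point is the musical-isomorphism bookkeeping in the middle step: one has to track base points carefully — that $b_{\omega}$ acts at $\varphi(\check{h})$, that $\mathrm{d}\varphi^*|_{\check{h}}$ transports the resulting covector from $T^*_{\varphi(\check{h})}\mathcal{H}$ to $T^*_{\check{h}}\check{\mathcal{H}}$, and that $\sharp_{\check{\omega}}$ then acts at $\check{h}$ — and to confirm that the sign and index conventions line up. This is the same computation that underlies Lemma \ref{lemmaa}, now specialized to $\tau = \omega$ (yielding $\mathrm{d}\varphi^*|_{\check{h}} \circ b_{\omega} \circ \mathrm{d}\varphi|_{\check{h}} = b_{\check{\omega}}$), and redoing it in index notation as there removes any ambiguity; everything else in the argument is direct substitution of the definitions of $R_{\text{SMG}}$, $X_H$, and the musical maps.
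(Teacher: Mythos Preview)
Your proof is correct and follows essentially the same route as the paper: unwind $R_{\text{SMG}}$ at $\varphi(\check{h})$, use $b_\omega(X_H)=\mathrm{d}H$ (the paper writes this as $X_H=\sharp_\omega(\mathrm{d}H)$ and then cancels $b_\omega\circ\sharp_\omega$), pull back via $\mathrm{d}\varphi^*$ and the chain rule to get $\mathrm{d}\check{H}$, and apply $\sharp_{\check{\omega}}$. Your additional remarks on base-point bookkeeping and the energy-conservation corollary are correct but go slightly beyond what the paper records.
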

\begin{proof}
The ROM vector field with the SMG reduction reads with (a) point projection property and musical isomorphism pairing, and (b) chain rule
\begin{align}
R_{\text{SMG}}|_{\varphi(\check{h})}\left(X_\mathcal{H}|_{\varphi(\check{h})}\right) &= \left(\sharp_{\check{\omega}} \circ \mathrm{d}\varphi^*|_{(\varrho \circ \varphi)(\check{h})} \circ b_\omega\right)\left(\sharp_\omega\left(\mathrm{d}\mathcal{H}|_{\varphi(\check{h})}\right)\right) \\
&\stackrel{(a)}{=} \sharp_{\check{\omega}}\left(\mathrm{d}\varphi^*|_{\check{m}}\left(\mathrm{d}\mathcal{H}|_{\varphi(\check{h})}\right)\right) \stackrel{(b)}{=} \sharp_{\check{\omega}}\left(\mathrm{d}\check{\mathcal{H}}|_{\check{h}}\right), \nonumber
\end{align}
which is exactly the Hamiltonian vector field of the Hamiltonian system $(\check{\mathcal{M}}, \check{\omega}, \check{\mathcal{H}})$. 

The reduced vector field in the SMG-ROM is then simplified as follows:
\begin{equation}
\boldsymbol{R}_{\text{SMG}}|_{\varphi(\check{m})}\left(\boldsymbol{X}_\mathcal{H}|_{\varphi(\check{h})}\right) = \left(D\varphi|_{\check{h}}^{\top} \omega|_{\varphi(\check{h})} D\varphi|_{\check{h}}\right)^{-1} D\varphi|_{\check{h}}^{\top} \omega|_{\varphi(\check{h})} \boldsymbol{X}_\mathcal{H}|_{\varphi(\check{h})} \in \mathbb{R}^n. 
\end{equation}
\begin{equation*}
\underbrace{D\varphi|_{\check{h}}^{\top} \omega|_{\varphi(\check{h})} \boldsymbol{X}_\mathcal{H}|_{\varphi(\check{m})}}_{= D\varphi|_{\check{h}}^{\top} D\mathcal{H}|_{\varphi(\check{h})} = D\check{\mathcal{H}}|_{\check{h}}}
\end{equation*}
\end{proof}

\section{Additional Numerical Results}

In this section, we provide additional numerical results comparing the training times of various neural network models across multiple physical systems to assess their computational efficiency.

Figure \ref{fig:computational_efficiency_comparison} illustrates the training time results across four physical systems, and shows a clear efficiency ranking among the models. The standard MLP is the fastest, taking 36–101 seconds, and serves as the baseline. HNN and DoubleHeadHNN are about 2–3 times slower (94–248 seconds), with DoubleHeadHNN sometimes faster than HNN despite its extra complexity. CholeskyHNN has similar training times, between 105 and 273 seconds. GeoHNN is the slowest, running 3–12 times longer than MLP (242–1030 seconds), and scales poorly with system complexity, taking 17 minutes to train on the 3 coupled oscillators system, compared to 1.4 minutes for MLP. This highlights a trade-off: adding geometric constraints improves physical fidelity but increases computational cost, especially for Riemannian-based methods like GeoHNN.

\begin{figure}[H]
    \centering
    \begin{minipage}[b]{0.4\textwidth}
        \centering
        \includegraphics[width=\textwidth]{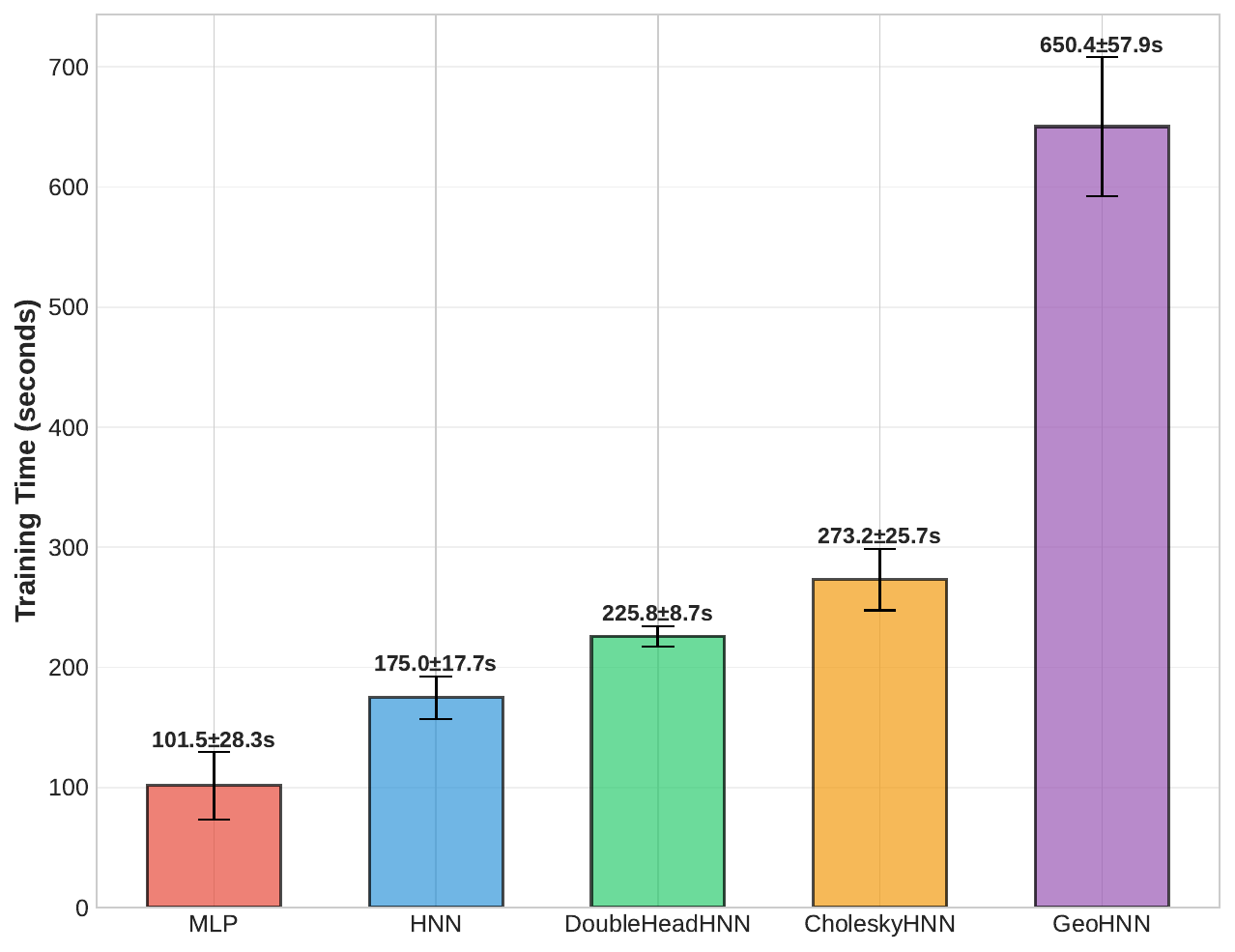}
        \\ (a) Simple pendulum system
    \end{minipage}
    \hfill
    \begin{minipage}[b]{0.4\textwidth}
        \centering
        \includegraphics[width=\textwidth]{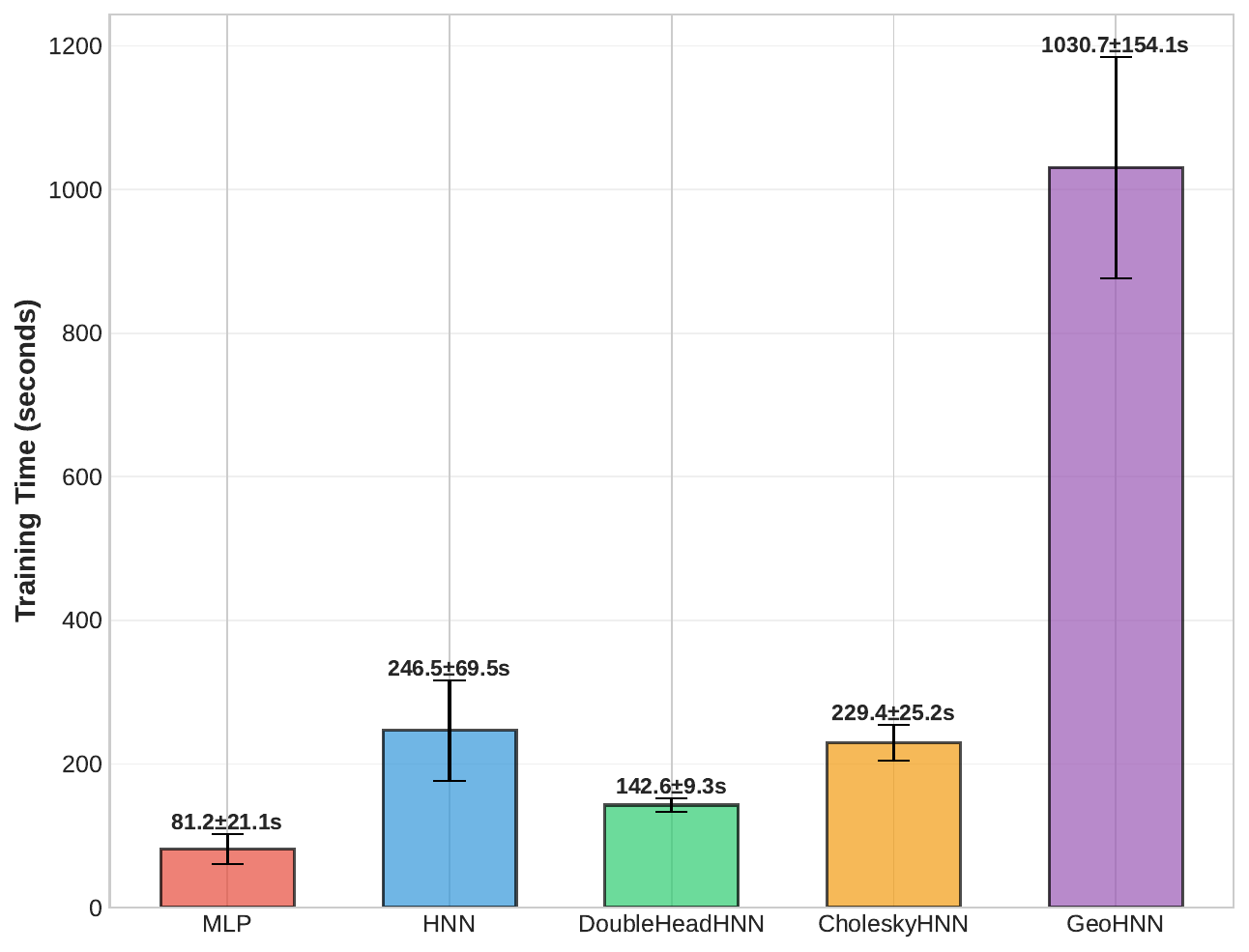}
        \\ (b) 3 coupled oscillators system
    \end{minipage}
    
    \vspace{1em}
    
    \begin{minipage}[b]{0.4\textwidth}
        \centering
        \includegraphics[width=\textwidth]{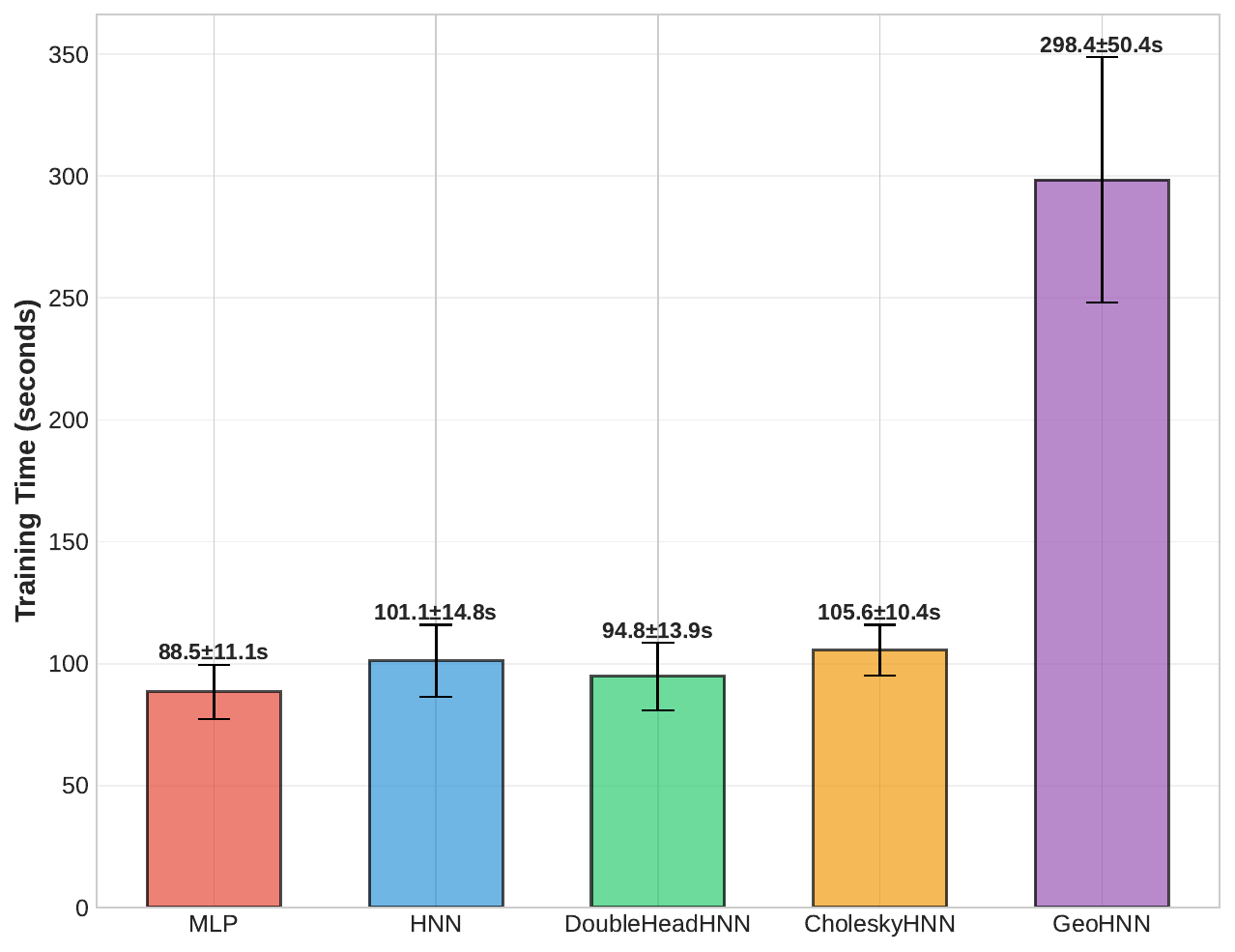}
        \\ (c) Mass spring system
    \end{minipage}
    \hfill
    \begin{minipage}[b]{0.4\textwidth}
        \centering
        \includegraphics[width=\textwidth]{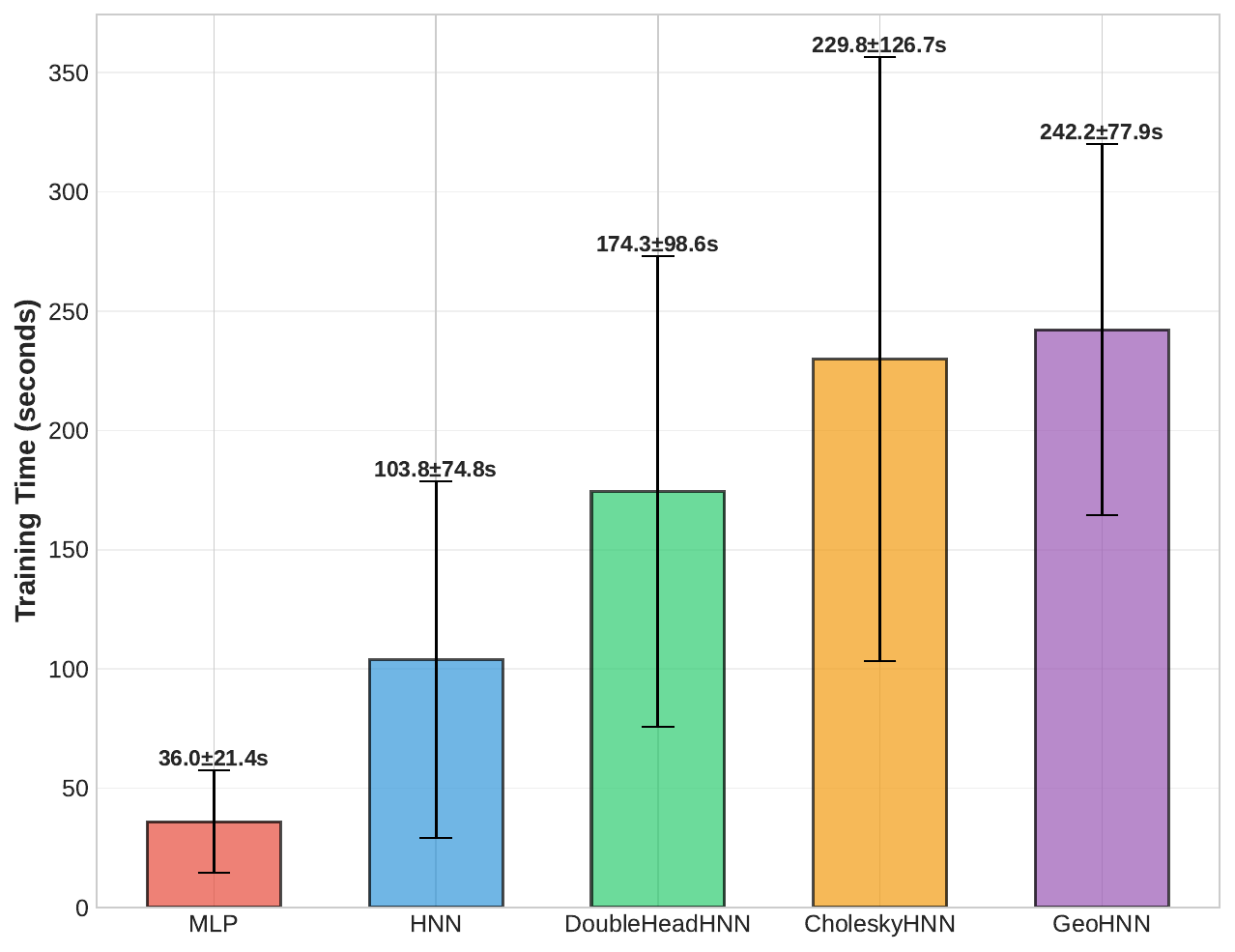}
        \\ (d) Two body problem
    \end{minipage}
    
    \caption{Training time comparison for different models across various physical systems.}
    \label{fig:computational_efficiency_comparison}
\end{figure}
\end{document}